\documentclass{article}

\usepackage[final]{corl_2023} 


\usepackage{algorithmic}

\usepackage[utf8]{inputenc} 
\usepackage[T1]{fontenc}    
\usepackage{hyperref}       
\usepackage{subfigure}
\usepackage{url}            
\usepackage{booktabs}       
\usepackage{amsfonts}       
\usepackage{nicefrac}       
\usepackage{microtype}      
\usepackage{xcolor}         
\usepackage{amsthm}
\usepackage{dsfont}
\usepackage{wrapfig}
\usepackage{amsmath}
\usepackage{graphicx}
\usepackage{multirow}
\usepackage[ruled,vlined]{algorithm2e}
\usepackage{float}
\usepackage{environ, wrapfig}
\usepackage{multicol}
\usepackage{lipsum}

\usepackage{listings} 
\usepackage{xcolor}
\lstset{
  language=Python,  
  frame=shadowbox, 
  rulesepcolor=\color{red!20!green!20!blue!20},
  keywordstyle=\color{blue!90}\bfseries, 
  commentstyle=\color{red!10!green!70}\textit,    
  showstringspaces=false,
  numbers=left, 
  numberstyle=\tiny,    
  stringstyle=\ttfamily, 
  breaklines=true, 
  extendedchars=false,  
  texcl=true}

\NewEnviron{figuretest}[1]{%
\stepcounter{figure}
\begin{wrapfigure}{l}{0pt}
#1
\end{wrapfigure}
\noindent{}Figure \thefigure:
\BODY}

\definecolor{byzantine}{rgb}{0.74, 0.2, 0.64}

\usepackage[toc,page]{appendix}

\definecolor{DarkRed}{rgb}{0.7,0.1,0.1}
\definecolor{DarkBlue}{rgb}{0.1,0.1,0.7}


\usepackage[textsize=tiny]{todonotes}

\usepackage{amsmath}
\usepackage{amssymb}
\usepackage{mathtools}
\usepackage{amsthm}

\theoremstyle{plain}
\newtheorem{theorem}{Theorem}[section]

\newtheorem{lemma}[theorem]{Lemma}

\theoremstyle{definition}
\newtheorem{definition}[theorem]{Definition}
\newtheorem{assumption}[theorem]{Assumption}

\usepackage{color}

\usepackage{lipsum}

\title{A Policy Optimization Method Towards Optimal-time Stability}
%


\author{
Shengjie Wang$^{1,2,3}$  \quad Fengbo Lan$^{1}$\quad Xiang Zheng$^4$ \quad \\ \textbf{Yuxue Cao}$^1$ \quad \textbf{Oluwatosin Oseni}$^5$ \quad \textbf{Haotian Xu}$^1$ \quad \textbf{Tao Zhang}$^{1,\dag}$ \quad \textbf{Yang Gao}$^{1,2,3,\dag}$\\ \\
  $^1$Tsinghua University\quad $^2$Shanghai Artificial Intelligence Laboratory \\ $^3$Shanghai Qi Zhi Institute \quad $^4$ City University of Hong Kong \quad $^5$ Covenant University\\
  $\dag$ Corresponding author
}

\begin{document}
\maketitle

\vspace{-10pt}
\begin{abstract}
In current model-free reinforcement learning (RL) algorithms, stability criteria based on sampling methods are commonly utilized to guide policy optimization. However, these criteria only guarantee the infinite-time convergence of the system's state to an equilibrium point, which leads to sub-optimality of the policy. In this paper, we propose a policy optimization technique incorporating sampling-based Lyapunov stability. Our approach enables the system's state to reach an equilibrium point within an optimal time and maintain stability thereafter, referred to as "\textit{optimal-time stability}". To achieve this, we integrate the optimization method into the Actor-Critic framework, resulting in the development of the Adaptive Lyapunov-based Actor-Critic (ALAC) algorithm. Through evaluations conducted on ten robotic tasks, our approach outperforms previous studies significantly, effectively guiding the system to generate stable patterns.

\end{abstract}

\keywords{Reinforcement Learning, Robotic Control, Stability} 


\section{Introduction}
\begin{wrapfigure}{r}{7cm}
\vspace{-20pt}
\begin{center}
\includegraphics[width=0.4\textwidth]{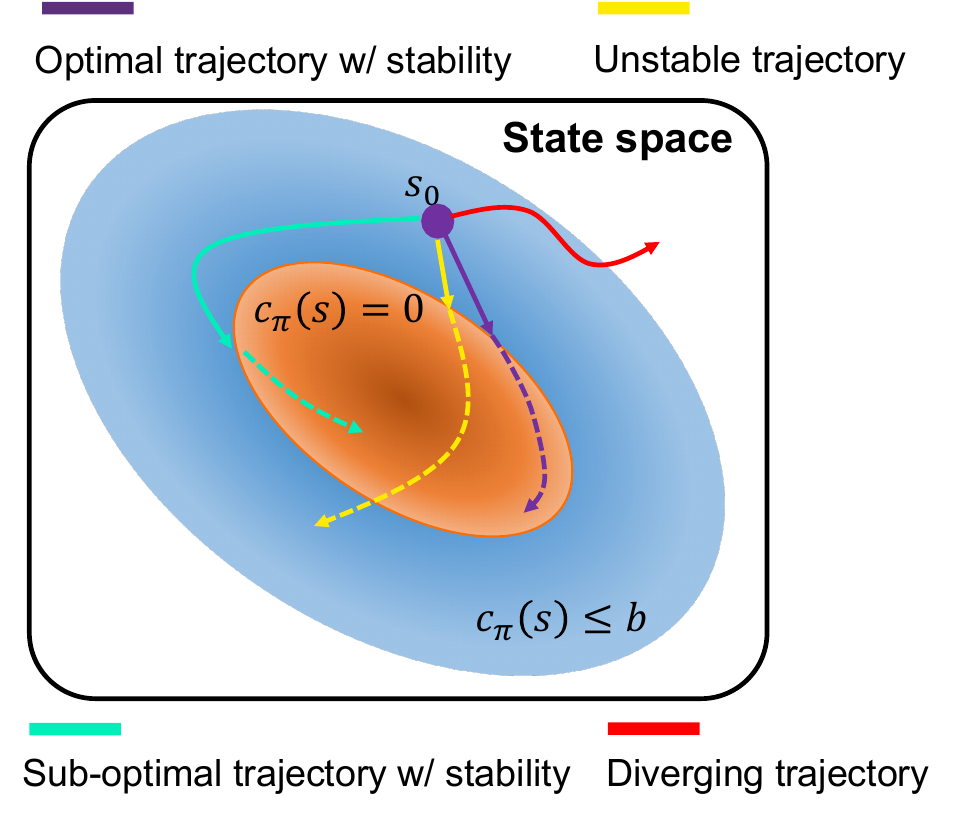}
\end{center}
\caption{\small Intuitive example showing the optimality and stability. The stability condition is the mean cost stability defined in Definition \ref{MSS}. The figure shows the relationship between three sets, like the whole state space, $\{s \mid c_\pi(s) \leq b\}$  and $\{s \mid c_\pi(s) = 0\}$. The red line represents a diverging trajectory. The yellow line represents a trajectory without stability. The trajectories coloured cyan and purple remain stable, whereas the state in the purple trajectory reaches the set $\{s \mid c_\pi(s) = 0\}$ more quickly.}
\label{figure:intro}
\vspace{-10pt}
\end{wrapfigure}

Model-free reinforcement learning (RL) controllers have achieved excellent performance in a large variety of robotic tasks \citep{hwangbo2019learning,andrychowicz2020learning}.  However, current methods lack a stability guarantee, which poses additional risks to both the robots and their environments, especially in the presence of external disturbances \citep{jin2020neural}. Therefore, ensuring stability in RL-based methods is a crucial requirement.



In control-theoretic methods, there exists an effective tool, Lyapunov functions, to assess the stability \citep{la2012stability}. Recently, researchers have employed neural networks to search for feasible Lyapunov functions \citep{dai2021lyapunov,lale2022kcrl,gaby2021lyapunov,zhou2022neural}. Notably, model-free methods have integrated policy and Lyapunov function learning based on discrete sampling-based stability, demonstrating promising results in robotic control tasks \citep{han2020actor,chang2021stabilizing,xiong2022model,zhao2021neural,han2021reinforcement} (See Appendix \ref{app:related} for the discussion of related works). Current research proposes different conditions to speed up the search of a feasible Lyapunov function \cite{han2020actor,chang2021stabilizing,xiong2022model}, but these methods only guarantee the eventual convergence of the system's state to an equilibrium point. Figure \ref{figure:intro} illustrates a limitation of this guarantee, where two stability-maintaining state trajectories (colored cyan and purple) cannot be differentiated by current methods, despite the purple trajectory exhibiting better stability convergence efficiency. In our experiments, we find that this loose guarantee fails to provide adequate guidance for policy learning. Inspired by finite-time stability in continuous-time control theory \citep{bhat2000finite}, we propose an improved objective where the state converges to the equilibrium point within an optimal time, which we refer to as "\textit{optimal-time stability}" in this paper. This optimal condition facilitates rapid convergence to the equilibrium point while ensuring stability in the system. Our objective is to develop a policy capable of generating the purple trajectory depicted in Figure \ref{figure:intro}.

To ensure \textit{optimal-time stability} in the system, we introduce a sampling-based Lyapunov stability certification, which guarantees the fulfillment of the mean cost stability condition. Subsequently, we design a policy optimization method that facilitates the gradual convergence of the policy towards the optimal point, where the system achieves \textit{optimal-time stability}. By leveraging this policy optimization technique, we develop the Adaptive Lyapunov-based Actor-Critic algorithm (ALAC) for learning both a Lyapunov function and policy. Experimental results validate the effectiveness of our approach in ensuring optimality and stability across various robotic control tasks, including legged robot walking and free-floating space robot trajectory planning. Specifically, our method significantly outperforms baselines in ten robotic control environments, and the trained Lyapunov function provides effective guidance for policy learning compared to previous methods\footnote{For more information, please visit our project page at \href{https://sites.google.com/view/adaptive-lyapunov-actor-critic}{https://sites.google.com/view/adaptive-lyapunov-actor-critic}.}.

\section{Problem Formulation} 
\label{sec:problem_formulation}
A robotic system can be modelled as a Markov Decision Process (MDP). MDP mainly consists of five elements, $\mathcal S$, $\mathcal A$, $\mathcal P$, $\mathcal C$ and $\rho$. Here, $\mathcal S$ is the state space, $\mathcal A$ is the action space, $\mathcal P$ is the dynamic transition function, and $\mathcal C$ is the cost function. Besides, the distribution of starting state denotes $s_0 \sim \rho$. At timestep $t$, $s_t \in \mathcal S$ represents the state the robot observes. Then, $a_t \in \mathcal A$ is the action executed by the agent (robot). Note that $a_t$ is sampled from the agent's policy $\pi(a_t|s_t)$. According to $\mathcal P (s_{t+1}|s_t,a_t)$, the state of system transfers to the next state $s_{t+1}$ with a certain probability. At the same time, the agent receives the cost $\mathcal C(s_t, a_t)$. And then, we can define the state distribution $\mathcal T$:
\begin{equation}
    \begin{aligned}
    \mathcal T (s| \rho, & \pi, t+1) = \int_{\mathcal S}  \int_{\mathcal A} \pi(a_t|s_t) \mathcal P (s_{t+1}|s_t,a_t) \ \text{d}a \ \ \mathcal T (s| \rho, \pi,t) \ \text{d}s        
    \end{aligned}
\end{equation}
Note that $\mathcal T (s | \rho, \pi, 0) = \rho $ holds. 
An MDP system corresponds to a continuous-state and discrete-time dynamical system with state space $\mathcal S$ and action space $\mathcal A$. Generally speaking, the system's stability can be verified by a classical tool, Lyapunov's stability function (Appendix \ref{app:lya_func}).
However, the classical definition should be satisfied in the whole state space $\mathcal S$, so it is unsuitable for sampling optimization, especially for model-free RL methods. To extend the stability to a reasonable case in model-free RL, we introduce the Mean Cost Stability as the stability condition in this paper. 
\begin{definition}[Mean Cost Stability \citep{han2020actor}]
\label{MSS}
A robotic system remains stable in mean cost when satisfying the following equation, where $c_{\pi}(s_t) = \mathbb E_{a \sim \pi} \mathcal C(s_t, a_t)$ and $b$ is a constant. 
\begin{equation}
    \label{eq2}
    \lim_{t \rightarrow \infty} \mathbb{E}_{s_t \sim \mathcal T}c_\pi(s_t)=0, c_\pi(s_0) \leq b
\end{equation}
\end{definition}
Concretely, for most robotic tasks, the mean cost stability is related to the stability of the closed-loop system \citep{han2020actor}. Our aim is to reach an equilibrium point within an optimal time and behave stably around it, to achieve the \textit{optimal-time stability}. Thus, we construct the problem formulation represented as follows:
\begin{equation} 
    \label{opt}
    \begin{aligned}
     &\min_{\pi} \mathbb E_{\rho,\pi,\mathcal{P}}  [\sum_{t=0}^{\infty} \gamma^t c_\pi(s_t)] \qquad s.t. \lim_{t \rightarrow \infty} \mathbb{E}_{s_t \sim \mathcal T}c_\pi(s_t)=0        
    \end{aligned}
\end{equation}
Specifically, the constraint part can facilitate the system's state to converge to an equilibrium point. The objective of the task is to minimize the sum of discounted costs. When the sum of discounted costs becomes smaller, the system converges to an equilibrium point more quickly. Therefore, by minimizing the sum of discounted costs, we ensure that the system's state converges to an equilibrium point within an optimal time.

\section{Policy Optimization with Sampling-based Stability}
\label{sec4_ldms}

Considering the difficulty of calculating the stability condition in problem \eqref{opt}, we present a novel stability theorem in a sampling-based format suitable for the Reinforcement Learning (RL) setting. Additionally, we transform the original problem \eqref{opt} into a new optimization problem shown as Equation \ref{opt_problem}, which progressively seeks the optimal convergence time while ensuring stability.


\subsection{Sampling-based Stability Certification}

Our approach reveals that ensuring the stability condition in Equation \eqref{opt} corresponds to finding the Lyapunov function $\mathcal{L}$. The Lyapunov function has long been used for stability analysis, and more details can be found in Appendix \ref{app:lya_func}. Unlike classical methods that rely on the entire state space, we present a certification for sampling-based Lyapunov stability.


\begin{theorem} [Sampling-based Lyapunov Stability]
    \label{hlc}
    An MDP system is stable with regard to the mean cost as shown in Definition \ref{MSS}, if there exists a function $\mathcal {L} :S \rightarrow \mathbb R$ meets the following conditions:
    \begin{equation}
        \label{hlc_bound}
        \alpha c_\pi(s)  \leq \mathcal {L}(s) \leq \beta c_\pi(s)
    \end{equation}
    \begin{equation}
        \label{hlc_lya_can}
        \mathcal {L}(s)  \geq c_\pi (s) + \lambda \mathbb E_{s^\prime \sim \mathcal P_\pi} \mathcal L(s^\prime) 
    \end{equation}
    \begin{equation}
        \label{hlc-core}
        \begin{split}
        \mathbb E_{s \sim \mathcal U_\pi} & [\mathbb E_{s^\prime \sim \mathcal P_\pi} \mathcal L(s^\prime) - \mathcal L (s)]  \leq - k  [ \mathbb E_{s \sim \mathcal U_\pi} [\mathcal L (s)- \lambda \mathbb E_{s^\prime \sim \mathcal P_{\pi}} \mathcal L (s^\prime)]]    	
        \end{split}
    \end{equation} 
    where $\alpha$, $\beta$, $\lambda$ and $k$  is positive constants. Among them, $\mathcal U_\pi=\lim _{T \rightarrow \infty} \frac{1}{T} \sum_{t=0}^{T} \mathcal T (s \mid \rho, \pi, t)$ represents the stationary distribution of the state, and $\mathcal P _\pi (s^\prime|s) = \int_{\mathcal A} \pi(a|s) \mathcal P (s^\prime|s,a) \ \mathrm{d} a$ represents the stationary distribution of state transition. It is based on some mild assumptions as shown in Appendix \ref{assump}. For proof, please see Appendix \ref{prftheo1}. 
\end{theorem}
In practice, the theorem reveals that the guarantee of stability in Equation \eqref{opt} corresponds to the existence of the Lyapunov function $\mathcal{L}$ under the constraints given by Equations \eqref{hlc_bound}, \eqref{hlc_lya_can}, and \eqref{hlc-core}. Intuitively, Equation \eqref{hlc_bound} represents the lower and upper bounds of $\mathcal{L}(s)$. Equation \eqref{hlc_lya_can} shows the relationship of $\mathcal{L}$ at two steps under the transition distribution. Equation \eqref{hlc-core} shows the constraint on the expectation of the state stationary distribution. It is worth noting that our method extends the previous approach to a more general case, as illustrated in Appendix \ref{compare_lya}. To mitigate the difficulty of searching for $\mathcal{L}$ under multiple constraints, we propose a Lyapunov candidate $\mathcal{L}_\pi (s)$, which naturally meets the constraints \eqref{hlc_bound} and \eqref{hlc_lya_can}, given by $\mathcal{L}_\pi (s) = \mathbb{E}_\pi  [\sum_{t=0}^{\infty} \gamma^t c_\pi(s_t)| s_0 = s]$. Please see Appendix \ref{app:def3bound} for a detailed demonstration. In summary, the stability condition can be expressed as finding a feasible policy $\pi$, under which the Lyapunov candidate $\mathcal{L}_\pi (s)$ satisfies the constraint in Equation \eqref{hlc-core}.

\subsection{Policy Optimization Towards Optimal-time Stability}
\label{ASCO}
Recalling the optimization problem \eqref{opt}, we observe that the constraint part is equivalent to Equation \eqref{hlc-core} since the introduction of the Lyapunov candidate $\mathcal{L}_\pi(s)$ eliminates the constraints \eqref{hlc_bound} and \eqref{hlc_lya_can}. Meanwhile, the objective part can be represented as $\min_{\pi} \mathbb{E}_{s \sim \rho} \mathcal{L}_\pi (s)$ based on the definition of the Lyapunov candidate.
Therefore, the optimization problem can be rewritten as:
\begin{equation}
\label{opt_p2}
    \begin{aligned}
        & \min_{\pi} \mathbb{E}_{s \sim \rho} \mathcal{L}_\pi (s) \\
        & \text { s.t. } \mathbb E_{s \sim \mathcal U_\pi} [\mathbb E_{s^\prime \sim \mathcal P_\pi} \mathcal L_\pi(s^\prime) - \mathcal L_\pi (s)]  \leq - k  [ \mathbb E_{s \sim \mathcal U_\pi} [\mathcal L_\pi (s)- \lambda \mathbb E_{s^\prime \sim \mathcal P_{\pi}} \mathcal L_\pi (s^\prime)]]
    \end{aligned}
\end{equation}

A potential approach to tackle the problem \eqref{opt_p2} is to formulate a unified objective function that removes the constraints. However, directly combining the objective with constraints through simple addition may lead to a sub-optimal policy or an invalid Lyapunov function. Therefore, we propose a policy optimization method that progressively seeks the optimal policy while ensuring stability by making the constraints' parameters $\lambda$ and $k$ learnable. The method is outlined as follows.

\begin{equation}
\label{opt_problem}
    \begin{aligned}
        & \max _{\lambda, k} \mathbb{E}_{s \sim \mathcal{U}_\pi} \Delta \mathcal{L}_\pi(s)  \qquad \text { s.t. } \pi \in \{\pi | \mathbb{E}_{s \sim \mathcal{U}_\pi} \Delta \mathcal{L}_\pi(s) \leq 0\}
    \end{aligned}
\end{equation}
where
\begin{equation}
    \label{delta_L_k_l}
    \begin{aligned}
\Delta \mathcal{L}_\pi(s)= & \mathbb{E}_{s^\prime \sim \mathcal{P}_\pi} \mathcal{L}_\pi(s^\prime)-\mathcal{L}_\pi(s)+k[\mathcal{L}_\pi(s)-\lambda \mathbb{E}_{s^\prime \sim \mathcal{P}_\pi} \mathcal{L}_\pi(s^\prime)] \leq 0
\end{aligned}
\end{equation}
Intuitively, the constraint part corresponds to the constraints defined in Equation \eqref{opt_p2}. The objective seeks to find optimal values for $\lambda$ and $k$ to maximize $\mathbb{E}_{s \sim \mathcal{U}\pi} \Delta \mathcal{L}_\pi(s)$. Alternatively, we can view this objective as enhancing the strength of the constraints by maximizing $\mathbb{E}_{s \sim \mathcal{U}\pi} \Delta \mathcal{L}_\pi(s)$. By gradually improving the constraints, minimizing $\mathcal{L}_\pi(s)$ can be achieved in Equation \eqref{opt_p2}. Because, as the constraints become strict, the policy should ensure that $\mathcal{L}_\pi(s^\prime)$ at the next state becomes smaller, as dictated by the constraint part in Equation \eqref{opt_problem}. Considering the sampling distribution $\mathcal{U}_\pi$, $\mathcal{L}_\pi$ becomes simultaneously smaller at every state along the state trajectories. This also applies to $\mathcal{L}_\pi$ under the initial state distribution $\rho$. When the strength of the constraints reaches its maximum value, the minimization of $\mathcal{L}_\pi(s)$ is achieved. In addition, we offer a comprehensive illustration to explain the decrease of $\mathcal{L}_\pi(s^\prime)$ using a classical tracking task, which is provided in Appendix \ref{illu_fft}. Interestingly, the form of Equation \eqref{delta_L_k_l} bears resemblance to the finite-time tracking method in continuous-time systems (Appendix \ref{prflemma2}).

\section{Adaptive Lyapunov-based Actor-Critic Algorithm}
\label{sec4_ldms}
\begin{figure*}[t!]
 	\centering
 	\vspace{-0pt}
        \subfigure{ \label{fig:a} 
        \includegraphics[width=0.8\linewidth]{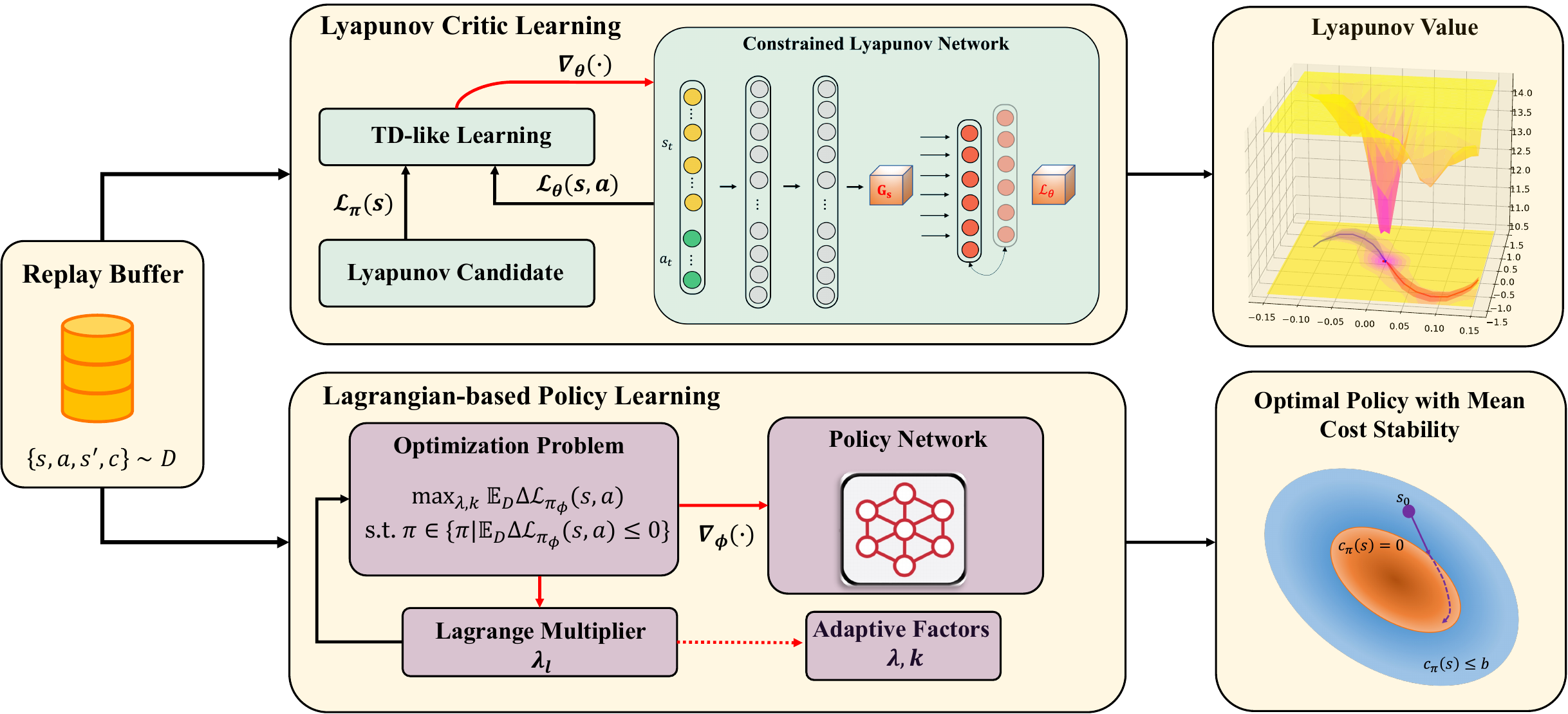} 
        } 
 	\caption{Architecture illustrating the practical implementation of ALAC. The RL agent is optimized by an Actor-Critic framework. The main contents consist of two parts. For the Lyapunov critic learning, the constrained Lyapunov network is updated by the TD-like learning with respect to the Lyapunov candidate. During the policy Learning, we use the Lagrangian-based method to solve the optimization problem shown in Equation \eqref{opt_problem}. Meanwhile, the parameters $\lambda $ and $k$ are adjusted by the Lagrange multiplier $\lambda_l$ adaptively.} 
 	\label{figure:framework}
	\vspace{-10pt}
\end{figure*}

In this section, we propose the Adaptive Lyapunov-based Actor-Critic algorithm (ALAC) to solve the optimization problem \eqref{opt_problem}. According to the variables shown in Equation \eqref{opt_problem}, we introduce two learnable parameters $\lambda$ and $k$, along with two neural networks, namely the Lyapunov critic $\mathcal{L}_\theta(s,a)$ and the actor $\pi_\phi(a|s)$. The Lyapunov critic $\mathcal{L}_\theta(s,a)$ serves a dual role in both stability analysis and actor learning, and it depends on both the state $s$ and the action $a$. We leverage the relationship $ \mathbb{E}_\pi \mathcal{L}_\theta(s,a) = \mathcal{L}_\pi(s)$ to compute $\mathcal{L}_\pi(s)$ in Equation \eqref{opt_problem}. Furthermore, the actor $\pi_\phi(a|s)$ maps a given state $s$ to a distribution over actions. This action distribution is modeled as a Gaussian distribution with a state-dependent mean $\mu_\phi(s)$ and a diagonal covariance matrix $\Sigma_\phi(s)$.

Figure \ref{figure:framework} shows the architecture of the ALAC algorithm. Specifically, we employ the Lagrangian-based method to handle the solution of the policy under the constraint in \eqref{opt_problem}. Additionally, the parameters $\lambda$ and $k$ are adjusted by the Lagrange multiplier $\lambda_l$ to satisfy the objective in \eqref{opt_problem}.


\subsection{Lyapunov Critic Learning}
\label{CLN}

For the training of $\mathcal{L}_\theta(s,a)$, we typically update $\theta$ based on the TD error in an actor-critic learning process. In our framework, the target Lyapunov value can be approximated as $c_\pi + \gamma \mathcal{L}^\prime(s^\prime,\pi^\prime(\cdot|s^\prime))$. Thus, the update rule for $\theta$ can be written as:
\begin{equation}
    \label{eq18}
    \begin{split}
        \theta_{k+1} = \theta_k + \alpha_{\theta} (\nabla_{\theta} (\mathcal L_\theta (s,a)-(c_\pi + \gamma \mathcal L^\prime(s^\prime,\pi^\prime(\cdot|s^\prime) )))^2)
    \end{split}
\end{equation}
where $k$ is the number of iterations. $\mathcal L^\prime$ and $\pi^\prime$ are the target networks parameterized by $\theta^\prime$ and $\phi^\prime$, respectively. To ensure stable learning, the parameters $\theta^\prime$ and $\phi^\prime$ are typically updated using an exponentially moving average of weights, controlled by a hyper-parameter $\sigma \in (0,1)$.
In order to encourage accurate and efficient learning, we construct a constrained critic network. The constrained network ensures that the output is non-negative and the Lyapunov value should be zero when the state is an equilibrium point. For the details, please see Appendix \ref{Criticnetwork}.


\subsection{Lagrangian-based Policy Learning}
\label{LPL}

Policy learning aims to search feasible parameters of $\lambda$, $k$ and the policy network to solve the optimization problem defined in Equation \eqref{opt_problem}. 
According to Equation \eqref{delta_L_k_l} and Lyapunov critic, we denote $\Delta \mathcal{L}_{\pi_\phi}(s, a)$ as $\mathcal{L}_\theta(s^{\prime}, \pi_\phi(\cdot \mid s^{\prime}))-\mathcal{L}_\theta(s, a) +k[\mathcal{L}_\theta(s, a)-\lambda \mathcal{L}_\theta(s^{\prime}, \pi_\phi(\cdot \mid s^{\prime}))]$.

First, we solve the sub-problem of Equation \eqref{opt_problem}, finding $\pi_\phi$ when satisfying the constraint with arbitrary $\lambda$.
Applying the Lagrangian-based method \citep{stooke2020responsive}, the parameters of $\pi_\phi$ are updated by:
\begin{equation}
    \label{eq15}
    \phi_{k+1} = \phi_k +  \alpha_\phi ( \lambda_l \nabla_a \Delta \mathcal {{L}}_{\pi_\phi} (s,a) \nabla_\phi \pi_\phi(s,a) )
\end{equation}
where $\alpha_\phi$ is the learning rate of $\phi$. $\lambda_l$ represents the Lagrange multiplier of the constraint.
During the training, $\lambda_l$ is updated by gradient ascent to maximize $\Delta \mathcal {{L}}_{\pi_\phi} (s,a)$.
\begin{equation}
    \label{eq16}
    \begin{split}
         \lambda_l^{k+1}= \lambda_l^k + \alpha_{\lambda_l} \Delta \mathcal {{L}}_{\pi_\phi} (s,a)
    \end{split}
\end{equation}
Note that $\lambda_l$ should always be positive. $\alpha_{\lambda_l}$ is the learning rate. It is worth noting that $\lambda_l$ is clipped by 0 and 1 to bound the value, which is exploited in optimization methods \citep{peng2021separated}.


According to Equation \eqref{opt_problem}, we need to find suitable values for $\lambda$ and $k$ to maximize $\Delta \mathcal{L}_{\pi_\phi}(s, a)$. We observe that $\lambda$ should decrease towards 0 to enlarge $\Delta \mathcal{L}_{\pi_\phi}(s, a)$. Furthermore, it should be between $\gamma$ and 0, with the range of $\lambda$ restricted based on Theorem \ref{hlc}. Enlarging $\Delta \mathcal{L}_{\pi_\phi}(s, a)$ corresponds to strengthening the constraints in Equation \eqref{opt_problem}. Notably, we find that the Lagrange multiplier $\lambda_l$ decreases as the constraints are satisfied. Thus, we use the rule $\lambda \leftarrow \min (\lambda_l, \gamma)$ to gradually improve the constraints. As for the selection of $k$, we adjust its value based on the Lagrange multiplier, setting $k \leftarrow 1 - {\lambda_l}$. By increasing $k$, the strength of the constraints gradually improves throughout the training process. Finally, when $\lambda$ and $k$ stabilize, indicating that the Lagrange multiplier $\lambda_l$ remains constant, we can observe that $\Delta \mathcal{L}_{\pi_\phi}(s, a)$ approaches 0 according to Equation \eqref{eq16}. This implies that the objective of maximizing $\Delta \mathcal{L}_{\pi_\phi}(s, a)$ is achieved since the maximum value of $\Delta \mathcal{L}_{\pi_\phi}(s, a)$ does not exceed 0 due to the constraint part in Equation \eqref{opt_problem}.




In addition, to improve the exploration efficiency, we add a constraint about the minimum entropy as the same as the maximum entropy RL algorithms. Until now, we have designed the complete ALAC algorithm, and the pseudo-code is provided in Appendix \ref{Pseudo-code}.

\subsection{Theoretical Analysis}
\label{thero_ana}
Theorem \ref{hlc} has assumed that the expectation is obtained perfectly, but this is not the case in practical settings due to finite samplings.
Thus, we derive the bias between the practical computing and theoretical analysis about $\mathcal{U}_\pi$. 

To be concrete, we need an infinite number of trajectories with infinite time steps to estimate the distribution $\mathcal U_\pi$. Whereas in practice, only $M$ trajectories of $T$ time steps are accessible. 
To better illustrate the issue, we define a finite sampling distribution $\mathcal U^T_\pi$, apparently where $\mathcal U^T_\pi =\frac{1}{T} \sum_{t=0}^{T} \mathcal T (s \mid \rho, \pi,t) $. 
First, we provide a quantitative bound of expectation from $\mathcal U_\pi$ and $\mathcal U^T_\pi$.
\begin{theorem}
\label{finitetimestep}
Suppose that the length of sampling trajectories is $T$, then the bound can be expressed as:
\begin{equation}
    \label{eq10}
    |\mathbb E_{s \sim \mathcal U_\pi} \Delta \mathcal L_\pi (s)- \mathbb E_{s \sim \mathcal U^T_\pi} \Delta \mathcal L_\pi (s)| \leq 2 \frac{(k+1)\overline{c_\pi} }{1-\gamma}T^{q-1}
\end{equation}
where $\overline{c_\pi}$ is the maximum of cost and $q$ is a constant in $(0,1)$. For proof, please see Appendix \ref{prftheo3}. 
\end{theorem}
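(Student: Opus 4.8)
\textbf{Proof proposal for Theorem \ref{finitetimestep}.}

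The plan is to compare the two distributions $\mathcal{U}_\pi$ and $\mathcal{U}^T_\pi$ directly and push the difference through the (bounded) integrand $\Delta\mathcal{L}_\pi$. First I would recall the definitions: $\mathcal{U}_\pi = \lim_{T'\to\infty}\frac{1}{T'}\sum_{t=0}^{T'}\mathcal{T}(\cdot\mid\rho,\pi,t)$ and $\mathcal{U}^T_\pi = \frac{1}{T}\sum_{t=0}^{T}\mathcal{T}(\cdot\mid\rho,\pi,t)$. The key algebraic step is to write the Cesàro-type difference $\mathcal{U}_\pi - \mathcal{U}^T_\pi$ as a weighted signed combination of the marginals $\mathcal{T}(\cdot\mid\rho,\pi,t)$, so that
\begin{equation}
\mathbb{E}_{s\sim\mathcal{U}_\pi}\Delta\mathcal{L}_\pi(s) - \mathbb{E}_{s\sim\mathcal{U}^T_\pi}\Delta\mathcal{L}_\pi(s) = \sum_t w_t \, \mathbb{E}_{s\sim\mathcal{T}(\cdot\mid\rho,\pi,t)}\Delta\mathcal{L}_\pi(s),
\end{equation}
with weights $w_t$ whose absolute values sum to something controlled. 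Then I would bound $|\Delta\mathcal{L}_\pi(s)|$ uniformly: from the definition \eqref{delta_L_k_l}, $\Delta\mathcal{L}_\pi(s)$ is a linear combination of $\mathcal{L}_\pi$ terms with coefficients summing in absolute value to at most $(k+1)$ (one copy of $\mathcal{L}_\pi(s_t)$ with coefficient $k-1$, and $\mathbb{E}\mathcal{L}_\pi(s_{t+1})$ with coefficient $1-k\lambda$), and since $\mathcal{L}_\pi(s)=\mathbb{E}_\pi[\sum_{t\geq 0}\gamma^t c_\pi(s_t)\mid s_0=s]\le \overline{c_\pi}/(1-\gamma)$, we get $|\Delta\mathcal{L}_\pi(s)| \le 2(k+1)\overline{c_\pi}/(1-\gamma)$ after absorbing constants generously. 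Combining this uniform bound with $\sum_t|w_t|\le 1$ (or whatever clean constant the weight bookkeeping yields) gives the stated estimate, except for the $T^{q-1}$ factor.

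The main obstacle — and the step I would spend the most care on — is producing the $T^{q-1}$ decay rather than a bare $O(1/T)$ or $O(1)$ term. A purely combinatorial split of the Cesàro averages gives something like $\frac{1}{T}\sum_{t=0}^{T}(\cdots) + (\text{tail beyond }T)$; the first piece is $O(1/T)$, but the tail $\sum_{t>T}(\cdots)$ does not vanish unless one invokes a mixing/convergence rate of $\mathcal{T}(\cdot\mid\rho,\pi,t)$ toward $\omega_\pi$. I expect the intended argument to assume (implicitly, via the ergodicity Assumption and perhaps an unstated polynomial mixing rate) that $\|\mathcal{T}(\cdot\mid\rho,\pi,t)-\omega_\pi\|$ decays like $t^{q-1}$ for some $q\in(0,1)$, or equivalently that the Cesàro remainder is $O(T^{q-1})$; then the tail and the $1/T$ head are both dominated by $T^{q-1}$ (since $q-1 > -1$). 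So the real content is: (i) rewrite everything in terms of deviations from the stationary $\omega_\pi$, noting $\mathbb{E}_{s\sim\omega_\pi}\Delta\mathcal{L}_\pi(s)$ is common to both and cancels; (ii) bound each remaining deviation $\mathbb{E}_{s\sim\mathcal{T}(\cdot,t)}\Delta\mathcal{L}_\pi(s) - \mathbb{E}_{s\sim\omega_\pi}\Delta\mathcal{L}_\pi(s)$ by $\frac{2(k+1)\overline{c_\pi}}{1-\gamma}\cdot(\text{mixing rate at }t)$; (iii) average/sum the mixing rates and collapse the result into $\frac{2(k+1)\overline{c_\pi}}{1-\gamma}T^{q-1}$.

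Concretely, the steps in order: (1) state the uniform bound $\|\mathcal{L}_\pi\|_\infty \le \overline{c_\pi}/(1-\gamma)$ and deduce $\|\Delta\mathcal{L}_\pi\|_\infty \le 2(k+1)\overline{c_\pi}/(1-\gamma)$; (2) subtract and add $\mathbb{E}_{s\sim\omega_\pi}\Delta\mathcal{L}_\pi(s)$ so that the quantity of interest becomes $|(\mathbb{E}_{\mathcal{U}_\pi} - \mathbb{E}_{\omega_\pi})\Delta\mathcal{L}_\pi - (\mathbb{E}_{\mathcal{U}^T_\pi} - \mathbb{E}_{\omega_\pi})\Delta\mathcal{L}_\pi|$ and bound by the triangle inequality; (3) for $\mathcal{U}^T_\pi$, write $\mathbb{E}_{\mathcal{U}^T_\pi}\Delta\mathcal{L}_\pi - \mathbb{E}_{\omega_\pi}\Delta\mathcal{L}_\pi = \frac{1}{T}\sum_{t=0}^{T}(\mathbb{E}_{\mathcal{T}(\cdot,t)}-\mathbb{E}_{\omega_\pi})\Delta\mathcal{L}_\pi$ and invoke the assumed rate to get $O(T^{q-1})$; for $\mathcal{U}_\pi$ do the same with $T'\to\infty$, noting the limit is $0$ under convergence of the Cesàro averages (so actually $\mathbb{E}_{\mathcal{U}_\pi}\Delta\mathcal{L}_\pi = \mathbb{E}_{\omega_\pi}\Delta\mathcal{L}_\pi$ and the first term vanishes); (4) collect the constants into the claimed $\frac{2(k+1)\overline{c_\pi}}{1-\gamma}T^{q-1}$, with $q\in(0,1)$ the exponent supplied by the mixing assumption. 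I would flag in the write-up exactly where the polynomial-mixing hypothesis enters, since without it the $T^{q-1}$ rate is not derivable from ergodicity alone.
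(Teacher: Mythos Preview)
Your proposal is correct and follows essentially the same route as the paper: bound $\|\Delta\mathcal{L}_\pi\|_\infty$ via $\mathcal{L}_\pi \le \overline{c_\pi}/(1-\gamma)$, then control the distributional difference by a Hölder/$\|\cdot\|_1$--$\|\cdot\|_\infty$ inequality together with a polynomial-mixing hypothesis. The only cosmetic differences are that the paper skips your triangulation through $\omega_\pi$ by writing $\mathcal{U}_\pi - \mathcal{U}^T_\pi = \tfrac{1}{T}\sum_{t=1}^{T}(\mathcal{U}_\pi - \mathcal{T}(\cdot\mid\rho,\pi,t))$ directly, and it states the mixing rate you flagged as the explicit assumption $\sum_{t=1}^{T}\|\mathcal{U}_\pi - \mathcal{T}(\cdot\mid\rho,\pi,t)\|_1 \le 2T^{q}$ (citing Zou et al., 2019), with $\|\Delta\mathcal{L}_\pi\|_\infty \le (k+1)\overline{c_\pi}/(1-\gamma)$ so that the factor of $2$ lives in the mixing constant rather than in the sup-norm bound.
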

Next, we take the number of trajectories into consideration and derive the probabilistic bound of the difference of $\Delta \mathcal L_\pi (s)$ estimated by $\mathcal U^T_\pi$ distribution and $M$ trajectories.
\begin{theorem}
\label{finitem+t}
Suppose that the length of sampling trajectories is $T$ and the number of trajectories is $M$, then there exists the following upper bound:
\begin{equation}
    \label{eq11}
    \begin{aligned}
    \mathbb P (|\frac{1}{MT} & \sum_{m=1}^M  \sum_{t=1}^{T} \Delta \mathcal L_\pi (s^m_t) - \mathbb E_{s \sim \mathcal U^T_\pi} \Delta \mathcal L_\pi (s)| \geq \alpha) \leq 2 \exp(- \frac{M\alpha^2(1-\gamma)^2}{((1-k\lambda)^2+(k-1)^2)\overline{c_\pi}^2})  
    \end{aligned}
\end{equation}
where $s^m_t$ represents the state in the $m$-th trajectory at the time $t$. For proof, please see Appendix \ref{prftheo4}.
\end{theorem}

Theorems \ref{finitetimestep} and \ref{finitem+t} highlight the theoretical gap between infinite and finite samples in practical usage. In addition, they provide valuable insights into the choice of $k$. Theorem \ref{finitetimestep} suggests that when $k$ approaches 0, the gap becomes small in practice. On the other hand, Theorem \ref{finitem+t} indicates that $k$ is better to be set to 1. This implies that the optimal choice of $k$ lies within the range of 0 to 1.


\section{Experiments}
\label{sec:experiments}
\begin{table*}[t!]
\vspace{-0pt}
\caption{Performance evaluations of the cultivated costs and stability constraint violations on ten environments compared with six baselines. All quantities are provided in a scale of $0.1$. Standard errors are provided in brackets. (if the mean constraint violations are less than 0.2, the sign is \textcolor{green}{$\downarrow$} else \textcolor{red}{$\uparrow$}.‘-’ indicates the algorithm does not contain the stability constraints.)}
 \renewcommand{\arraystretch}{1.0}
  \centering
    \resizebox{0.97\textwidth}{!}{
    \begin{tabular}{c|c|cccccccc}
    \toprule
    Task & Metrics & ALAC & SAC-cost & SPPO & LAC & LAC$^*$ & POLYC & LBPO & TNLF \\
    \midrule
    
    \multirow{2}*{\shortstack{Cartpole-cost}}
    & Cost Return & 26.2\tiny{(7.0)} & \textbf{22.7}\tiny{(12.6)} & 102.3\tiny{(59.3)} & 31.0\tiny{(10.1)} & 31.5\tiny{(5.1)} & 104.8\tiny{(70.7)} & 205.3\tiny{(27.0)} & 33.5\tiny{(24.5)}\\
    & Violation & \textcolor{green}{$\downarrow$} & - & \textcolor{red}{$\uparrow$} & \textcolor{green}{$\downarrow$} & \textcolor{green}{$\downarrow$} & \textcolor{green}{$\downarrow$} & - & \textcolor{green}{$\downarrow$}\\ \midrule

    \multirow{2}*{\shortstack{Point-circle-cost}}
    & Cost Return & \textbf{111.1}\tiny{(4.5)} & 111.8\tiny{(2.4)} & 247.9\tiny{(58.2)} & 958.6\tiny{(15.5)} & 112.0\tiny{(5.0)} & 207.0\tiny{(62.4)} & 722.1\tiny{(126.1)} & 145.8\tiny{(38.0)}\\
    & Violation & \textcolor{green}{$\downarrow$} & - & \textcolor{red}{$\uparrow$} & \textcolor{green}{$\downarrow$} & \textcolor{red}{$\uparrow$} & \textcolor{green}{$\downarrow$} & - & \textcolor{green}{$\downarrow$}\\ \midrule

    \multirow{2}*{\shortstack{Halfcheetah-cost}}
    & Cost Return & \textbf{1.7}\tiny{(0.7)} & 16.6\tiny{(25.2)} & 144.0\tiny{(14.6)} & 119.5\tiny{(37.3)} & 1.8\tiny{(0.5)} & 168.8\tiny{(10.7)} & 37.8\tiny{(24.8)} & 6.5\tiny{(1.4)}\\
    & Violation & \textcolor{green}{$\downarrow$} & - & \textcolor{red}{$\uparrow$} & \textcolor{green}{$\downarrow$} & \textcolor{green}{$\downarrow$} & \textcolor{green}{$\downarrow$} & - & \textcolor{green}{$\downarrow$}\\ \midrule

    \multirow{2}*{\shortstack{Swimmer-cost}}
    & Cost Return & \textbf{44.6}\tiny{(4.8)} & 53.7\tiny{(12.4)} & 52.5\tiny{(4.2)} & 47.5\tiny{(1.3)} & 44.8\tiny{(3.0)} & 104.7\tiny{(11.0)} & 52.3\tiny{(11.3)} & 46.5\tiny{(2.4)}\\
    & Violation & \textcolor{green}{$\downarrow$} & - & \textcolor{red}{$\uparrow$} & \textcolor{green}{$\downarrow$} & \textcolor{red}{$\uparrow$} & \textcolor{green}{$\downarrow$} & - & \textcolor{green}{$\downarrow$}\\ \midrule

    \multirow{2}*{\shortstack{Ant-cost}}
    & Cost Return & \textbf{101.0}\tiny{(42.1)} & 155.2\tiny{(29.9)} & 255.0\tiny{(31.2)} & 166.9\tiny{(13.6)} & 125.6\tiny{(12.5)} & 259.8\tiny{(37.1)} & 114.6\tiny{(26.1)} & 186.8\tiny{(11.0)}\\
    & Violation & \textcolor{green}{$\downarrow$} & - & \textcolor{red}{$\uparrow$} & \textcolor{green}{$\downarrow$} & \textcolor{red}{$\uparrow$} & \textcolor{green}{$\downarrow$} & - & \textcolor{green}{$\downarrow$}\\ \midrule

    \multirow{2}*{\shortstack{Humanoid-cost}}
    & Cost Return & 354.6\tiny{(97.1)} & 441.9\tiny{(18.3)} & 531.8\tiny{(22.9)} & 431.3\tiny{(14.9)} & 368.3\tiny{(76.6)} & 490.4\tiny{(32.5)} & 452.4\tiny{(13.9)} & \textbf{317.7}\tiny{(31.1)}\\
    & Violation & \textcolor{green}{$\downarrow$} & - & \textcolor{red}{$\uparrow$} & \textcolor{green}{$\downarrow$} & \textcolor{red}{$\uparrow$} & \textcolor{green}{$\downarrow$} & - & \textcolor{green}{$\downarrow$}\\ \midrule

    \multirow{2}*{\shortstack{Minitaur-cost}}
    & Cost Return & 493.0\tiny{(67.9)} & 692.2\tiny{(93.0)} & 950.0\tiny{(72.3)} & 612.2\tiny{(47.8)} & 666.6\tiny{(306.7)} & 608.3\tiny{(65.6)} & 838.3\tiny{(237.0)} & \textbf{382.9}\tiny{(62.6)}\\
    & Violation & \textcolor{green}{$\downarrow$} & - & \textcolor{red}{$\uparrow$} & \textcolor{green}{$\downarrow$} & \textcolor{red}{$\uparrow$} & \textcolor{green}{$\downarrow$} & - & \textcolor{green}{$\downarrow$}\\ \midrule

    \multirow{2}*{\shortstack{Spacereach-cost}}
    & Cost Return & \textbf{1.6}\tiny{(0.2)} & 8.9\tiny{(8.8)} & 19.4\tiny{(2.5)} & 35.2\tiny{(1.6)} & 1.8\tiny{(0.4)} & 125.7\tiny{(20.8)} & 31.0\tiny{(19.1)} & 112.1\tiny{(53.0)}\\
    & Violation & \textcolor{green}{$\downarrow$} & - & \textcolor{green}{$\downarrow$} & \textcolor{green}{$\downarrow$} & \textcolor{green}{$\downarrow$} & \textcolor{green}{$\downarrow$} & - & \textcolor{green}{$\downarrow$}\\ \midrule

    \multirow{2}*{\shortstack{Spacerandom-cost}}
    & Cost Return & \textbf{2.3}\tiny{(0.3)} & 38.4\tiny{(28.6)} & 53.2\tiny{(32.7)} & 33.9\tiny{(3.5)} & 2.8\tiny{(0.9)} & 112.8\tiny{(19.4)} & 35.8\tiny{2.9)} & 85.9\tiny{(42.3)}\\
    & Violation & \textcolor{green}{$\downarrow$} & - & \textcolor{green}{$\downarrow$} & \textcolor{green}{$\downarrow$} & \textcolor{green}{$\downarrow$} & \textcolor{green}{$\downarrow$} & - & \textcolor{green}{$\downarrow$}\\ \midrule

    \multirow{2}*{\shortstack{Spacedualarm-cost}}
    & Cost Return & \textbf{26.1}\tiny{(3.5)} & 36.1\tiny{(8.3)} & 201.9\tiny{(48.8)} 
    & 66.3\tiny{(10.6)} & 63.6\tiny{(62.1)} & 140.6\tiny{(17.4)} & 37.8\tiny{7.5)} & 280.1\tiny{(99.3)}\\
    & Violation & \textcolor{green}{$\downarrow$} & - & \textcolor{green}{$\downarrow$} & \textcolor{green}{$\downarrow$} & \textcolor{green}{$\downarrow$} & \textcolor{green}{$\downarrow$} & - & \textcolor{green}{$\downarrow$}\\ 
    \bottomrule
    \end{tabular}
    }
\label{table:comparison}
\end{table*}

In this section, we demonstrate empirical evidence that \textbf{ALAC} captures an improved trade-off between optimality (sum of costs) and stability compared to the baseline approaches. 
We test our method and baselines in ten robotic control environments.
Details of ten environments are given in Appendix \ref{envdetails}. Furthermore, we benchmark the \textbf{ALAC} method against five algorithms with a neural Lyapunov function, including \textbf{POLYC} \cite{chang2021stabilizing}, \textbf{LBPO}\cite{sikchi2021lyapunov}, \textbf{TNLF}\cite{xiong2022model}, \textbf{SPPO}\cite{chow2019lyapunov} and \textbf{LAC} \cite{han2020actor} \footnote{We find \textbf{LAC} with a large $\alpha_3$ (see Appendix \ref{lac}) performs better, so we call it \textbf{LAC$^*$} for the distinction between them. }. We also take the \textbf{SAC-cost} \cite{haarnoja2018soft} method into account because the method is very close to our method without the stability condition. For the detailed hyper-parameter settings see  Appendix \ref{baselines} and \ref{ourmethod}. 

\subsection{Comparing with Baselines}

In this part, we evaluate the optimality and stability of our methods and baselines.
According to the definition of \textit{optimal-time stability} in Eq. \eqref{opt}, we use the accumulated cost as the metric of optimality and the stability constraint violations as the stability metric in a testing episode (further details can be found in Appendix \ref{ana_comp}). The results demonstrate that \textbf{ALAC} achieves the lowest accumulated cost and stability violations.
To fairly evaluate the performance of the methods mentioned above, we run the experiments over 5 rollouts and 5 seeds for all algorithms.
Table \ref{table:comparison} shows the performance on all tasks, and the training curves for different algorithms are in Appendix \ref{ana_comp}. 
Although \textbf{LAC$^*$} using tighter constraints achieves comparable performance with our method in contrast to \textbf{LAC}, the stability violations in \textbf{LAC$^*$} remain at a high level on many tasks. Admittedly, \textbf{TNLF} achieves lower cost than \textbf{ALAC} on \textbf{Minitaur-cost}, but \textbf{TNLF} converges to suboptimal policies on many tasks. According to Figure \ref{figure:app_compare}, we notice that in \textbf{TNLF} the trained Lyapunov function is close to 0 quickly. Hence, it does not provide dense guidance for the policy, thus leading it to a suboptimal solution. 

\begin{figure*}[t!]
 	\centering
 	\vspace{-0pt}
        \subfigure{ \label{fig:a} 
        \includegraphics[width=\linewidth]{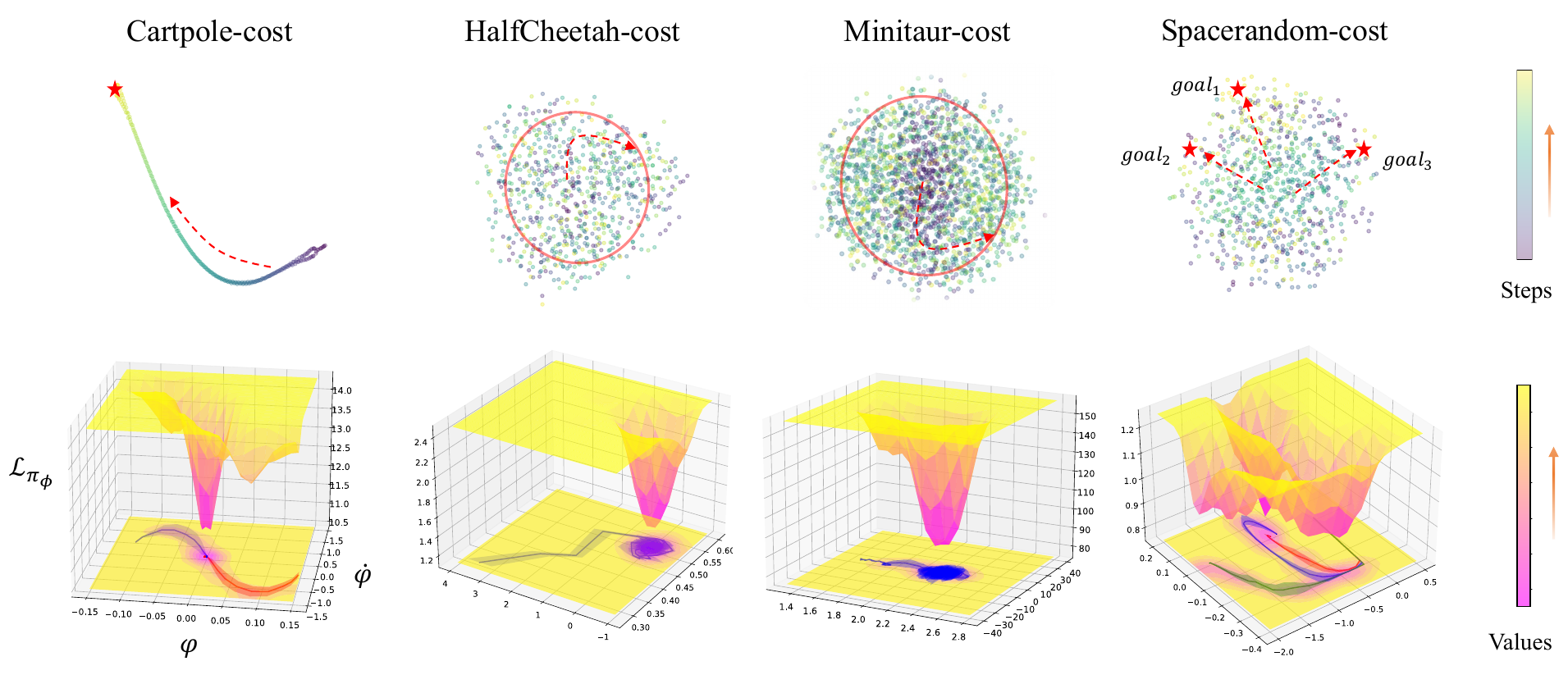} 
        } 
 	\caption{ Visualization of states for ALAC method by t-SNE and phase trajectory techniques. The top row of the figure depicts the t-SNE dimension reduction technique. (\textbf{Cartpole-Cost} is visualized within 2 dims while others within 3 dims.) The bottom row shows the phase trajectories and Lyapunov-value surfaces. $\psi$ and $\dot{\psi}$ denotes the angular position and velocity respectively.} 
 	\label{figure:visualization}
\end{figure*}

\begin{figure*}[t!]
 	\centering
 	\vspace{-0pt}
        \subfigure 
        { 
        \label{fig:a} 
         \includegraphics[width=0.22\linewidth]{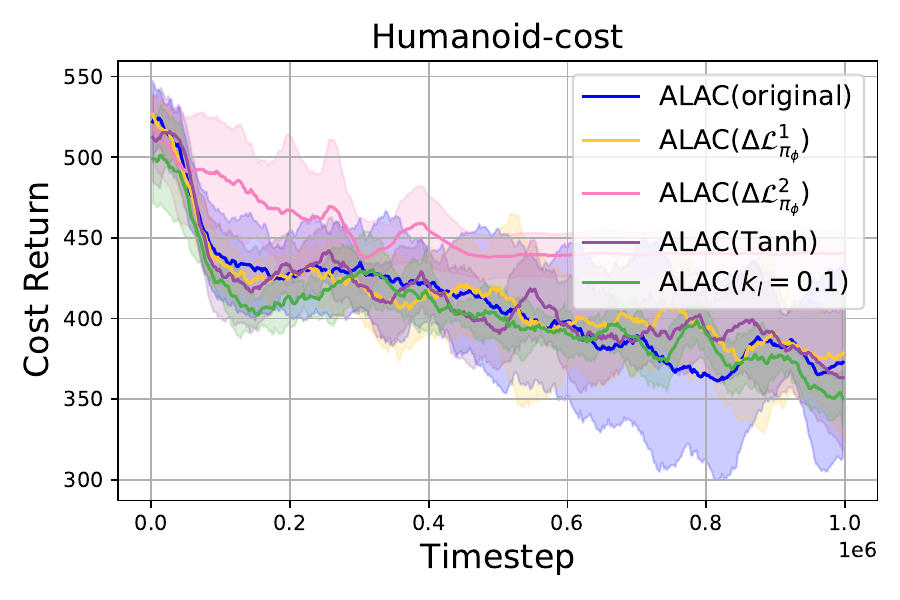} 
        } 
       \subfigure 
       { \label{fig:a} 
        \includegraphics[width=0.22\linewidth]{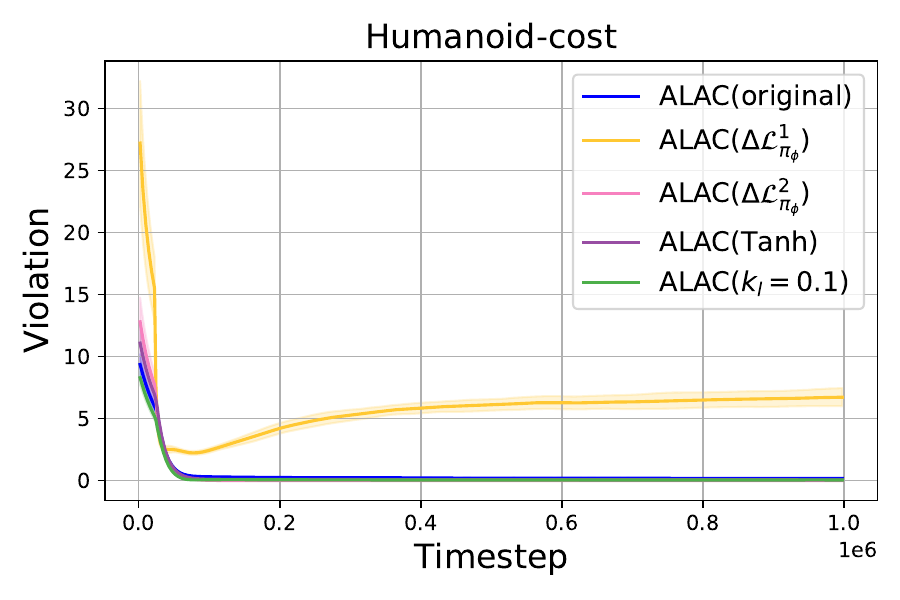} 
        } 
        \subfigure
        { \label{fig:a} 
        \includegraphics[width=0.22\linewidth]{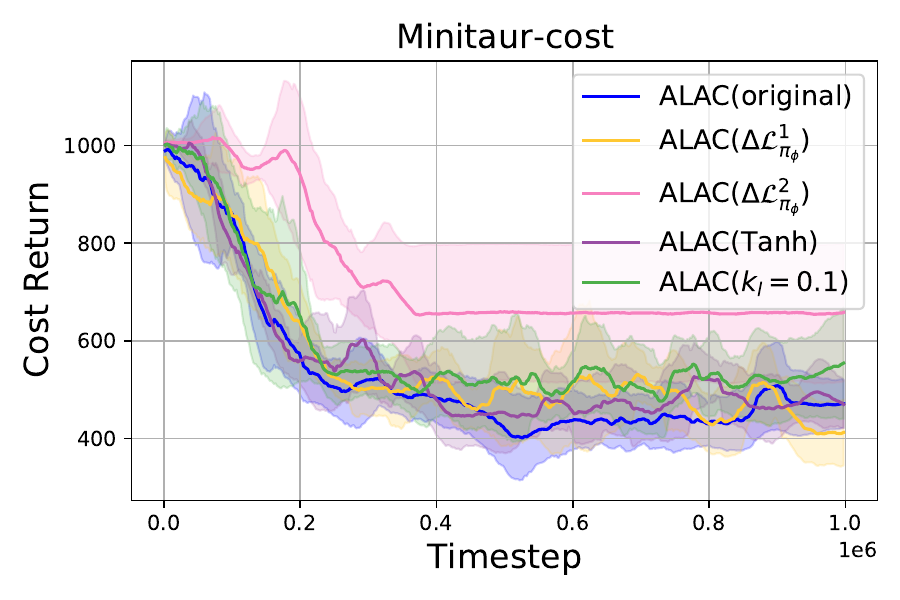}
        } 
        \subfigure 
        { \label{fig:a} 
        \includegraphics[width=0.22\linewidth]{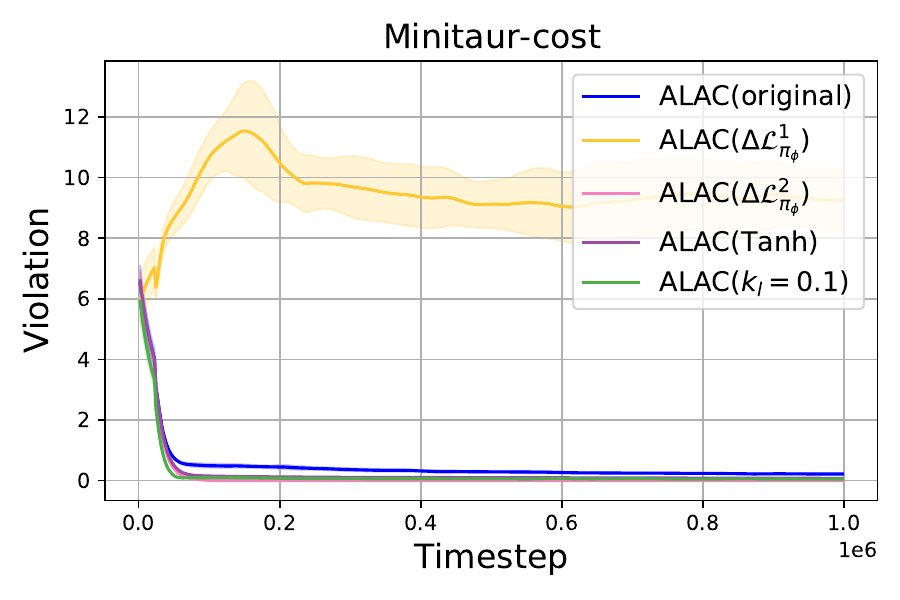}
        }         
        \caption{ Ablation studies of the sampling-base stability we propose. ALAC(original) shows comparable or the best performance compared with other certifications on each task.} 
 	\label{figure:ablation}
\end{figure*}

\begin{table*}[t]
\caption{Average evaluation score and standard deviation on our environments for ALAC with and without the errors under different biases of goals. (w/ errors means using errors between the desired and achieved goals as extra elements of states for the agent)}
 \renewcommand{\arraystretch}{1.0}
  \centering
  \resizebox{1.0\textwidth}{!}{
    \begin{small}
    \begin{tabular}{c|ccc|ccc|ccc}
    \toprule
    Task & \multicolumn{3}{c|}{Point-circle-cost} & \multicolumn{3}{c|}{Halfcheetah-cost} & \multicolumn{3}{c}{Spacereach-cost} \\ Biases of goals & -20\% & 0\% & 20\% & -20\% & 0\% & 20\% & -20\% & 0\% & 20\%  \\
    \midrule
    ALAC w/ errors & 85.2\tiny{(4.5)} & 110.1\tiny{(3.9)} & 148.5\tiny{(12.2)} & \textbf{3.9}\tiny{(0.8)} & \textbf{2.5}\tiny{(0.6)} & \textbf{8.3}\tiny{(4.7)} 
    & \textbf{6.4}\tiny{(1.7)} & 2.4\tiny{(1.4)} & \textbf{8.7}\tiny{(1.7)} \\

    ALAC w/o errors & 178.8\tiny{(7.8)} & 118.9\tiny{(11.4)} & 247.8\tiny{(11.9)} & 10.1\tiny{(2.1)} & 3.3\tiny{(1.2)} & 13.4\tiny{(2.1)} 
    & 11.9\tiny{(0.4)} & \textbf{1.6}\tiny{(0.2)} & 11.5\tiny{(0.3)} \\

    SAC-cost w/ errors & \textbf{84.2}\tiny{(4.2)} & \textbf{109.3}\tiny{(2.2)} & \textbf{140.1}\tiny{(3.0)} 
    & 60.1\tiny{(27.5)} & 81.6\tiny{(50.2)} & 129.5\tiny{(85.6)} 
    & 21.9\tiny{(12.3)} & 22.1\tiny{(16.9)} & 20.6\tiny{(18.1)} \\

    SAC-cost w/o errors & 180.9\tiny{(6.3)} & 115.3\tiny{(4.0)} & 240.3\tiny{(3.7)} & 15.9\tiny{(15.7)} & 16.7\tiny{(25.5)} & 33.5\tiny{(34.0)} 
    & 16.1\tiny{(6.2)} & 8.8\tiny{(8.8)} & 15.0\tiny{(7.2)} \\
    \bottomrule
    \end{tabular}
    \end{small}
  }
\label{table:generalization}
\end{table*}

\subsection{Ablation Studies}

To demonstrate the effectiveness of $\Delta \mathcal L_{\pi_\phi} \leq 0$, we do the ablation studies about different constraints in \textbf{ALAC}. We compare the performance of the original \textbf{ALAC} with a version that uses $\Delta \mathcal L_{\pi_\phi}^1$ ($\lambda=0$) and $\Delta \mathcal L_{\pi_\phi}^2$ ($\lambda=1$), and with a version where $k$ is a constant throughout training. The descriptions of ablation algorithms are given in Appendix \ref{ablatlion}.
Figure \ref{figure:ablation} and Figure \ref{figure:ablation_app} (see Appendix D.4) depict the accumulated cost and constraint violations on all tasks. \textbf{ALAC($\Delta \mathcal L_{\pi_\phi}^2$) } achieve lower performance than \textbf{ALAC}, while \textbf{ALAC($\Delta \mathcal L_{\pi_\phi}^1$)} performs the tasks comparably with \textbf{ALAC}. Nevertheless, more strict constraints (\textbf{ALAC($\Delta \mathcal L_{\pi_\phi}^1$)}) negatively affect its performance on constraint violations, as shown in Figure \ref{figure:ablation}. This is because there doesn't exist a reasonable policy that meets such strict constraints. Moreover, the results of \textbf{ALAC($k = 0.1$)} comparing with \textbf{ALAC} demonstrate that the heuristic updating of $k$ is effective during the training. 

\subsection{Evaluation Results}


\textbf{Analysis of Visualization}: First, we use the t-SNE method to indicate the visualization of the state in 3 dimensions in order to illustrate better the stability of the system learned by \textbf{ALAC} (\textbf{Cartpole-cost} in 2 dimensions). As we can see, the top row of Figure \ref{figure:visualization} shows the states in the final stage of an episode converge to a point or circle. Basically, we recognize that those patterns happen in a stable system. The second row of Figure \ref{figure:visualization} shows the phase trajectories with variance according to the state pairs of joint angular position and velocity.
In practice, experts can judge a system's stability from a phase space of the system. 
Concretely, $\psi$ and $\dot{\psi}$ represent an angular position and velocity, respectively. The angular velocity starts from 0 to 0, and the angular position starts from the beginning to an equilibrium point. Based on the above phenomenons, it suggests the trained systems using the \textbf{ALAC} method satisfy focal stability or stable limit cycles. Furthermore, the Lyapunov value exhibits significant changes in the state space, as depicted in Figure \ref{figure:visualization} (bottom row). It indicates that the Lyapunov value can effectively guide the policy towards discovering stable patterns in the system.
More implementation details for t-SNE and phase trajectories are given in Appendix \ref{app:tsne}.

\textbf{Generalization}: 
Furthermore, some experiments verify that the policy can generalize well to follow previously unseen desired goals, such as different desired velocities in \textbf{Halfcheetah-cost}. When evaluating, we add $\pm 20\%$ bias to the desired goals in three environments, as illustrated in Table \ref{table:generalization}. The definition of desired and achieved goals is shown in Appendix \ref{general_app}. Specifically, when we introduce the error between the desired and achieved goals as additional information in the state, \textbf{ALAC w/ errors} gains remarkable performance improvement on generalization compared to \textbf{SAC-cost w/ errors}. This indicates that the Lyapunov function successfully captures the relationship between error and stability. Even though the desired goal experiences some shifting, the Lyapunov function can guide the policy to reach the goal. Finally, we also add external disturbances with different magnitudes in ten environments and observe the performance difference. Figure \ref{figure:app_robust} (see Appendix \ref{robust_app}) shows that in all scenarios, \textbf{ALAC} enjoys superior performance over other methods.
%




\vspace{0.15em}
\section{Discussion}
\textbf{Conclusion.} We propose a sampling-based Lyapunov stability condition to meet the mean cost stability. Based on the condition, the policy optimization with sampling-based stability is proposed to gradually find the optimal policy which maintains the \textit{optimal-time stability} we propose. Based on the Actor-Critic framework and Lagrangian-based optimization, We present a practical algorithm, namely the Adaptive Lyapunov-based Actor-Critic algorithm (ALAC). 

\textbf{Limitation.} Despite the great success in simulated tasks, our method has not been evaluated in practical scenarios. An important direction for future work is to apply our method to real-world robotic control tasks, such as locomotion and navigation. While this work primarily focuses on providing stability guarantees, it is essential to consider another critical requirement, safety. 


\clearpage
\acknowledgments{This work is supported by the Ministry of Science and Technology of the People´s Republic of China, the 2030 Innovation Megaprojects "Program on New Generation Artificial Intelligence" (Grant No. 2021AAA0150000).  This work is also supported by the National Key R\&D Program of China (2022ZD0161700).}


\bibliography{corl/citations_bibliography}  
\newpage

\appendix
\section{Related Work}
\label{app:related}

Learning-based controllers have achieved excellent performance in non-linear dynamic systems \citep{hwangbo2019learning,andrychowicz2020learning}. However, a lack of stability introduces additional risks to the agents and environments \citep{jin2020neural}. Fortunately, there exists an effective tool, Lyapunov functions, to assess the stability. Lyapunov functions can be designed for a linear system with specific criteria in the form of a quadratic positive-definite function. But how to find a suitable Lyapunov function remains an open challenge in the non-linear dynamic system \citep{la2012stability}. 

\subsection{Model-based RL \& Lyapunov Learning}
Due to the difficulty of a manual design, constructing a Lyapunov neural network has become increasingly popular in a non-linear dynamic system. For the model-known situation, 
the approaches jointly learn a Lyapunov function and a controller \citep{richards2018lyapunov,chang2019neural,mittal2020neural,dai2020counter,lechner2021stability,donti2020enforcing,gaby2021lyapunov}. But it restricts the application of complex systems which are hard to obtain accurate models. Therefore, some researchers present model-learned methods with a stability guarantee, in which Gaussian Process or Neural Network approximates the model. The model-learned method can be separated into two types. The first one is learning dynamics models guided by a learnable Lyapunov function, in which policies are inherently included or learned by LQR method  \citep{kashima2022learning,zhou2022neural,chen2021learning,lawrence2020almost,schlaginhaufen2021learning}. Another approach is to construct a learnable policy network updated by a neural Lyapunov function, thereby satisfying the stability of system \citep{berkenkamp2017safe,dawson2022safe,zhou2022neural,dai2021lyapunov,lale2022kcrl}. However, we notice that most model-learned methods are only verified in relatively easy environments. A possible reason is that the coupling of the Lyapunov function and dynamic model makes learning unstable or incompatible due to interdependency. 

\subsection{Model-free RL \& Lyapunov Learning}
A promising direction is to study model-free methods with a stability guarantee. Recently, a large variety of methods have been proposed to address the issue. One method is that the policy is updated by a mixed objective with respect to the neural Lyapunov function and Q function. POLYC \citep{chang2021stabilizing} introduced the necessary conditions required by the Lyapunov function into objectives to optimize the policy network. LBPO \citep{sikchi2021lyapunov} applied the logarithmic barrier function based on the form of the Lyapunov function. TNLF \citep{xiong2022model} constructed Lyapunov V and Q functions trained by the stability certification. The other form is policy optimization with Lyapunov constraints. \citet{chow2018lyapunov} designed a constrained RL algorithm to project a policy in a trust region with Lyapunov stability. In those previous study, there still exist a main drawback. The discrete Lyapunov condition they used did not meet the demand for a sampling-based stability guarantee in RL. Therefore, \citet{han2020actor} considered a sampling-based stability condition they proposed as a constraint and then used the prime-dual method to modify the constraint. Their latter work verified them in both on-policy and off-policy settings \citep{han2021reinforcement,han2020reinforcement}. Nevertheless, their method can only find an existing policy with the demand for stability. In contrast, our method can satisfy the sampling-based stability and search for the optimal policy adaptively. The optimal policy can facilitate the state to reach the equilibrium point rapidly.

\subsection{Applications}
Furthermore, it is worth noting that current RL-based methods with stability guarantee have been applied in some practical problems successfully, such as monitoring the security of interconnected microgrids \citep{huang2021neural}, power system control \citep{zhao2021neural}, automatic assembly \citep{li2022actor} and motion planning of autonomous vehicles \citep{zhang2021safe}.

\section{Preliminary Remarks}

\subsection{Lyapunov function}
\label{app:lya_func}
The method we employ is constructing or finding a Lyapunov function, denoted as $\mathcal{L}: \mathcal{S} \rightarrow \mathbb{R}$, with the property that its difference along the state trajectory is negative definite. This ensures that the state moves in a direction that decreases the value of the Lyapunov function, eventually leading to convergence at the origin. The Lyapunov method has a well-established history of application in stability analysis and controller design within control theory.

\begin{definition}[Equilibrium Point]
\label{def:EP}
A state $s_e$ is an equilibrium point if $\exists$ action $a_e\in\mathcal{A}$ such that $f(s_e, a_e)=s_e$. \citep{murray2017mathematical}
\end{definition}

\begin{definition}[Stabilizable in the sense of Lyapunov]
\label{def:sisl}
A system is stabilizable if \;$\forall \epsilon > 0$, $\exists \delta$ such that for all $s_0\in\mathcal{S}$ such that $||s_0 - s_e||\leq \delta$, there exists $\{a_t\}_{t=0}^\infty$ such that the resulting $\{s_t\}_{t=0}^\infty$ satisfies $||s_t - s_e||\leq \epsilon$ $\forall t \geq 0$.\citep{murray2017mathematical}
\end{definition}

\begin {definition} [Lyapunov Function]
\label{app:Lya}
A continuous and radially unbounded function $\mathcal{L}: \mathcal{S} \rightarrow \mathbb{R}$ is a Lyapunov function if the following conditions hold: 
\begin{enumerate}
\item $ \forall s \in \mathcal \mathcal S, \exists a \in \mathcal A$, s.t. $\mathcal L ({s}) \geq \mathcal{L} (f({s}, {a}))$,
\item $\forall {s} \neq 0, \mathcal{L} > 0 $; $\mathcal{L}(0) = 0$.
\end{enumerate}
\end{definition}
If a Lyapunov function exists, a discrete-time system can achieve stability in the sense of Lyapunov without considering the physical energy.

\subsection{Lyapunov Candidate Bound}
\label{app:def3bound}
In this part, we show that the Lyapunov candidate ${\mathcal L_ \pi } (s)$ meets the property in Theorem \ref{hlc}, which can be formulated as:
\begin{equation}
    c_\pi (s) + \lambda \mathbb E_{s^\prime \sim \mathcal P_\pi} \mathcal L(s^\prime)  \leq \mathcal {L}(s) \leq \beta c_\pi(s)
\end{equation}
where we omit the lower bound $\alpha c_\pi(s)$ which is naturally satisfied by ${\mathcal L_ \pi } (s)$.

Firstly, according to the definition of ${\mathcal L_ \pi } (s)$, we have
\begin{equation}
    \begin{split}
        \mathcal L_\pi (s) & = \mathbb E_\pi  [\sum_{t=0}^{\infty} \gamma^t c_\pi(s_t)| s_0 = s] \\
        & = \mathbb E_\pi  [c_\pi(s_0) + \sum_{t=1}^{\infty} \gamma^t c_\pi(s_t)| s_0 = s] \\
        & = c_\pi (s) + \mathbb E_\pi [\sum_{t=1}^{\infty} \gamma^t c_\pi(s_t)] \\
        & = c_\pi (s) + \gamma \mathbb E_{\pi,s^\prime \sim \mathcal P_\pi} [\sum_{t=0}^{\infty} \gamma^t c_\pi(s_t)|s_0 = s^\prime] \\
        & = c_\pi (s) + \gamma E_{s^\prime \sim \mathcal P_\pi} \mathcal {L} (s^\prime)
    \end{split}
\end{equation}
Considering the left-hand side of Equation \eqref{hlc_bound}, we can find that if $\lambda \leq \gamma$ holds, the lower bound of the Lyapunov function can be satisfied. This is because the Lyapunov candidate ${\mathcal L_ \pi } (s)$ is positive at each state. Furthermore, the right-hand side of Equation \ref{hlc_bound} illustrates the higher bound of the Lyapunov function exists. 
The condition is also guaranteed for our Lyapunov candidate shown in the following process.
\begin{equation}
\label{Lya_cand_bound}
\begin{split}
    \mathcal L_\pi (s) &= \mathbb E_\pi  [\sum_{t=0}^{\infty} \gamma^t c_\pi(s_t)| s_0 = s] \\
    & \leq \sum_{t=0}^{\infty} \gamma^t \mathbb E_\pi [c_\pi(s_t)|s_0 = s] \\
    &\leq \frac{\overline{c_\pi}}{1-\gamma} 
\end{split}
\end{equation}
Note that $\overline{c_\pi}$ denotes the maximum cost. The second row of the inequality holds due to Jensen inequality. Only if the maximum cost exists, $\exists \beta \in \mathbb R_{+}, \frac{\overline{c_\pi}}{1-\gamma} \leq \beta c_\pi(s) $ holds.




\section{Details of Theoretical Analysis}

\subsection{Assumptions of Theorem \ref{hlc}}\
\label{assump}

\begin{assumption}[Region of Attraction]
\label{ROA}
There exists a positive constant $b$ such that $\rho(s) > 0, \forall s \in \{s|c_{\pi}(s) \leq b\}$.
\end{assumption}

\begin{assumption}[Ergodic Property]
The Markov Chain driven by the policy $\pi$ is ergodic, $\omega_\pi (s) = \lim_{t \rightarrow \infty} \mathcal T (s | \rho, \pi, t)$.
\end{assumption}

The first one ensures that the starting state is sampled in the region of attraction. The second one is the existence of the stationary state distribution.

\subsection{Proof of Theorem \ref{hlc}}
\label{prftheo1}

\begin{theorem} [Sampling-based Lyapunov Stability]
    \label{app:hlc}
    An MDP system is stable with regard to the mean cost, if there exists a function $\mathcal {L} :S \rightarrow \mathbb R$ meets the following conditions:
    \begin{equation}
        \label{app:hlc_bound}
        \alpha c_\pi(s)  \leq \mathcal {L}(s) \leq \beta c_\pi(s)
    \end{equation}
    \begin{equation}
        \label{app:hlc_lya_can}
        \mathcal {L}(s)  \geq c_\pi (s) + \lambda \mathbb E_{s^\prime \sim \mathcal P_\pi} \mathcal L(s^\prime) 
    \end{equation}
    \begin{equation}
        \label{app:hlc-core}
        \begin{split}
        \mathbb E_{s \sim \mathcal U_\pi} & [\mathbb E_{s^\prime \sim \mathcal P_\pi} \mathcal L(s^\prime) - \mathcal L (s)]  \leq - k  [ \mathbb E_{s \sim \mathcal U_\pi} [\mathcal L (s)- \lambda \mathbb E_{s^\prime \sim \mathcal P_{\pi}} \mathcal L (s^\prime)]]    	
        \end{split}
    \end{equation}
    where $\alpha$, $\beta$, $\lambda$ and $k$ is positive constants. Among them, $\mathcal U_\pi=\lim _{T \rightarrow \infty} \frac{1}{T} \sum_{t=0}^{T} \mathcal T (s \mid \rho, \pi, t)$ represents the stationary distribution of the state, and $\mathcal P _\pi (s^\prime|s) = \int_{\mathcal A} \pi(a|s) \mathcal P (s^\prime|s,a) \ \mathrm{d} a$ represents the stationary distribution of state transition. 
\end{theorem}

\begin{proof}
    
Firstly, we simplify the left side of the Equation \eqref{app:hlc-core} with reference to \citep{han2020actor}.
Introducing the definition of $\mathcal U_\pi (s)$ leads to
\begin{equation}
    \label{eqA3-2}
    \begin{split}
        & \mathbb E_{s \sim \mathcal U_\pi} [\mathbb E_{s^\prime \sim \mathcal P_\pi} \mathcal L_\pi(s^\prime) - \mathcal L_\pi (s)] \\
        & = \int_{\mathcal S} {\lim _{T \rightarrow \infty} \frac{1}{T} \sum_{t=0}^{T} \mathcal T (s \mid \rho, \pi,t) } {(\int_{\mathcal S} \mathcal{P}_\pi (s^\prime|s) \mathcal{L}_\pi (s^\prime) \mathrm{d} s^\prime - \mathcal {L}_\pi (s))} \mathrm{d} s
    \end{split}
\end{equation}

Due to the boundedness of $\mathcal L_\pi $, we apply the Lebesgue’s Dominated convergence theorem. To be specific, when $|F_{n}(s)| \leq B(s), \forall s \in \mathcal{S}, \forall n$ holds, we have 
\begin{equation}
    \label{eqA3-3}
    \begin{split}
        \lim _{n \rightarrow \infty} \int_{\mathcal{S}} F_{n}(s) \mathrm{d} s=\int_{\mathcal{S}} \lim _{n \rightarrow \infty} F_{n}(s) \mathrm{d} s
    \end{split}
\end{equation}

Hence, we get
\begin{equation}
    \label{eqA3-4}
    \begin{split}
        & \mathbb E_{s \sim \mathcal U_\pi} [\mathbb E_{s^\prime \sim \mathcal P_\pi} \mathcal L_\pi(s^\prime) - \mathcal L_\pi (s)] \\
        = & \int_{\mathcal S} {\lim _{T \rightarrow \infty} \frac{1}{T} \sum_{t=0}^{T} \mathcal T (s \mid \rho, \pi,t) } { (\int_{\mathcal S} \mathcal{P}_\pi (s^\prime|s) \mathcal L_\pi (s^\prime) \mathrm{d} s^\prime - \mathcal L_\pi (s))}\mathrm{d} s \\
        = & \lim _{T \rightarrow \infty} \int_{\mathcal S} \frac{1}{T} \sum_{t=0}^{T} \mathcal T (s \mid \rho, \pi,t) (\int_{\mathcal S} \mathcal{P}_\pi (s^\prime|s) \mathcal L_\pi (s^\prime) \mathrm{d} s^\prime - \mathcal L_\pi (s)) \mathrm{d} s  \\
        = &  \lim _{T \rightarrow \infty} \frac{1}{T}  (\sum_{t=1}^{T+1} \mathbb E_{\mathcal T(s|\rho,\pi,t)} \mathcal L_\pi (s) - \sum_{t=0}^{T} \mathbb E_{\mathcal T(s|\rho,\pi,t)} \mathcal L_\pi (s) ) \\
        = & \lim _{T \rightarrow \infty} \frac{1}{T}  (\mathbb E_{\mathcal T(s|\rho,\pi, T+1)} \mathcal L_\pi (s) - \mathbb E_{\mathcal T(s|\rho,\pi,t=0)} \mathcal L_\pi (s))
    \end{split}
\end{equation}

Note that $\mathcal T(s|\rho,\pi,t=0)$ is equal to $\rho$. Since the expectation of $\mathcal L_\pi (s) $ is a finite value, the left side of Equation \eqref{app:hlc-core} is zero.

Now, we turn to the right side of Equation \eqref{app:hlc-core}. According to the Equation \eqref{eqA3-4}, we have 

\begin{equation}
    \label{eqA3-5}
    \begin{split}
        - k  [ \mathbb E_{s \sim \mathcal U_\pi} [\mathcal L (s)- \lambda \mathbb E_{s^\prime \sim \mathcal P_{\pi}} \mathcal L (s^\prime)]]  &\geq 0 \\
         \mathbb E_{s \sim \mathcal U_\pi} [\mathcal L (s)- \lambda \mathbb E_{s^\prime \sim \mathcal P_{\pi}} \mathcal L (s^\prime)] &\leq 0 
    \end{split}
\end{equation}

Since $\mathcal {L}(s)  \geq c_\pi (s) + \lambda \mathbb E_{s^\prime \sim \mathcal P_\pi} \mathcal L(s^\prime) $ holds, we get

\begin{equation}
    \label{eqA3-6}
    \begin{split}
         \mathbb E_{s \sim \mathcal U_\pi} c_\pi (s) &\leq 0
    \end{split}
\end{equation}

Based on the Abelian theorem, we know there exists
\begin{equation}
    \label{eqA3-1}
    \begin{split}
        \mathcal U_\pi (s) &=\lim _{T \rightarrow \infty} \frac{1}{T} \sum_{t=0}^{T} \mathcal T (s \mid \rho, \pi, t) \\
        & = \lim_{t \rightarrow \infty} \mathcal T (s| \rho, \pi,t) \\
        & = \omega_{\pi}(s) 
    \end{split}
\end{equation}

Thus, we get
\begin{equation}
    \label{eqA3-7}
    \begin{split}
         \mathbb E_{s \sim \omega_\pi} [c_\pi (s)] &\leq 0
    \end{split}
\end{equation}

The last row of inequality holds because of Equation \eqref{eqA3-1}. Based on the definition of $\omega_{\pi}(s) $, we have 
\begin{equation}
    \label{eqA3-8}
    \begin{split}
         \lim_{t \rightarrow \infty} \mathbb E_{\mathcal T (s | \rho, \pi, t)} {c_\pi(s)}  \leq 0
    \end{split}
\end{equation}

Suppose that there exists a starting state $s_0 \in\{s_0 \mid c_\pi(s_0) \leq b\}$ and a positive constant $d$ such that $\lim _{t \rightarrow \infty} \mathbb{E}_{\mathcal{T}(s \mid \rho, \pi, t)} c_\pi(s)=d$ or $\lim _{t \rightarrow \infty} \mathbb{E}_{\mathcal{T}(s \mid \rho, \pi, t)} c_\pi(s)=\infty$. Consider that $\rho(s_0)>0$ for all starting states in $\{s_0 \mid c_\pi(s_0) \leq b\}$ (Assumption \ref{ROA}), then $\lim _{t \rightarrow \infty} \mathbb{E}_{s \sim \mathcal{T}(\cdot \mid  \rho, \pi, t)} c_\pi(s)> 0$  , which is contradictory with Equation \eqref{eqA3-8}. Thus $\forall s_0 \in\{s_0 \mid c_\pi(s_0) \leq b\}, \lim _{t \rightarrow \infty} \mathbb{E}_{\mathcal{T}(s \mid \rho, \pi, t)} c_\pi(s)=0$. Thus the system meets the mean cost stability by Definition \ref{MSS}.

\subsection{Comparison to Existing Methods}
\label{compare_lya}

Intuitively, the previous method is a special case when $\mathcal{L}(s) = c_\pi (s) + \lambda \mathbb{E}_{s^\prime \sim \mathcal{P}_\pi} \mathcal{L}(s^\prime)$ holds \citep{han2020actor}, as indicated in the following equation.
\begin{equation}
    \label{our_lya}
    \begin{split}
    \mathbb E_{s \sim \mathcal U_\pi}  [\mathbb E_{s^\prime \sim \mathcal P_\pi} \mathcal L(s^\prime) - \mathcal L (s)]  & \leq \underbrace{- k  [ \mathbb E_{s \sim \mathcal U_\pi} [\mathcal L (s)- \lambda \mathbb E_{s^\prime \sim \mathcal P_{\pi}} \mathcal L (s^\prime)]]}_{\text{Our method}}  \\
    & \leq \underbrace{-k [c_\pi (s)]}_{\text{Han et al, 2020 \citep{han2020actor}}}
    \end{split}
\end{equation}
That means we extend the previous method to a more general case. To be specific, the introduction of $\lambda$ enlarges the solution space of the policy. Thus, it facilitates the policy to find the optimal point while maintaining the system's stability.

\end{proof}

\subsection{Finite-Time Feedback Tracking Method}
\label{prflemma2}

\begin{lemma}[Finite-Time Feedback Tracking Method]
\label{app:ftt}
In a continuous-time system, a trajectory $W(t)$ tracks the reference $R(t)$. $W(t)$ can track the reference within a finite time $T$, such that $R(t)= W(t), t \geq T$, if the following conditions holds.
\begin{equation}
    \label{app:ftt-core}
    \nabla_t W(t) \leq-k(W(t)-R(t)), \forall t \in[0, T]
\end{equation}
Note that the gradient of $R(t)$ is bounded, meaning that $\nabla_t R(t) \leq \mu$ holds.
\end{lemma}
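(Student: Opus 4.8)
\textbf{Proof proposal for Lemma \ref{ftt}.}

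The plan is to reduce the tracking claim to a differential inequality for the error signal $E(t) = W(t) - R(t)$ and then apply a Grönwall-type comparison argument. First I would compute $\nabla_t E(t) = \nabla_t W(t) - \nabla_t R(t)$ and combine the hypothesis $\nabla_t W(t) \leq -k(W(t)-R(t)) = -kE(t)$ with the bound $\nabla_t R(t) \leq \mu$ to obtain $\nabla_t E(t) \leq -kE(t) + |\nabla_t R(t)|$, or more carefully $\nabla_t E(t) = \nabla_t W(t) - \nabla_t R(t) \leq -kE(t) - \nabla_t R(t) \leq -kE(t) + \mu$ once we control the sign of $\nabla_t R(t)$ (I would assume $|\nabla_t R(t)| \leq \mu$, which is the natural reading). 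The point is that once $E(t)$ becomes zero (or sufficiently small), the sliding-mode-style inequality keeps it there, and before that it is driven toward zero at a geometric rate competing against the bounded drift $\mu$.

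Next I would integrate the differential inequality. Multiplying by the integrating factor $e^{kt}$ gives $\nabla_t\!\left(e^{kt}E(t)\right) \leq \mu e^{kt}$, hence $E(t) \leq e^{-kt}E(0) + \frac{\mu}{k}\left(1 - e^{-kt}\right)$. This shows $E(t)$ is bounded and in fact converges into the residual ball of radius $\mu/k$; to get exact tracking ($E(t) = 0$ for $t \geq T$) rather than merely asymptotic convergence, I would invoke the fact that the hypothesis is stated as a genuine inequality $\nabla_t W(t) \leq -k(W(t)-R(t))$ that must hold up to the \emph{first} hitting time $T$, and argue that the error cannot stay strictly positive forever: if $E(t) > 0$ on $[0,\tau]$ then $\nabla_t W - \nabla_t R \le -kE(t)$, and combined with the reference's bounded slope one shows $E$ must reach $0$ at some finite $T$, after which the reference being tracked means $W$ and $R$ coincide. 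The finite hitting time $T$ can be extracted by a crude estimate: as long as $E(t) \geq \epsilon > 0$ we have $\nabla_t E(t) \leq -k\epsilon + \mu$, which is strictly negative for $\epsilon > \mu/k$, forcing $E$ to decrease linearly and hence reach $\mu/k$ in finite time; a sharper argument (or a strengthened hypothesis with strict inequality) is needed to drive it all the way to $0$.

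The main obstacle I anticipate is precisely this last point: the stated inequality with a bounded-but-nonzero reference drift $\mu$ generically only guarantees convergence to a ball of radius $\mu/k$, not exact finite-time tracking, so the lemma as literally stated may require either (i) the implicit assumption that once $W$ catches $R$ the control maintains $W = R$ (i.e. the inequality is read as defining behavior only on $[0,T]$ with $T$ the first coincidence time), or (ii) treating $R(t)$ as eventually constant or the drift as summable. I would therefore state the Grönwall estimate $E(t) \leq e^{-kt}E(0) + \frac{\mu}{k}(1-e^{-kt})$ as the core quantitative content, then phrase the finite-time conclusion under the interpretation that $T = \inf\{t : W(t) = R(t)\}$ is finite — which the estimate guarantees when $\mu$ is small relative to the initial gap and $k$, or trivially when $\mu = 0$ — and note that this is the regime relevant to the analogy with the ASC condition, where the ``reference'' $\lambda\,\mathbb{E}\mathcal{L}(s')$ is itself being driven down.
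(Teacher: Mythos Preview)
Your linear Gr\"onwall approach on the raw error $E(t)=W(t)-R(t)$ is not what the paper does, and---as you yourself diagnose---it cannot deliver the finite-time conclusion: the inequality $\nabla_t E \le -kE+\mu$ only drives $E$ into the residual ball of radius $\mu/k$. That is a genuine gap relative to the stated lemma, not merely a matter of interpretation.

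The paper closes this gap by working with the \emph{quadratic} Lyapunov function $V(t)=\tfrac12\bigl(W(t)-R(t)\bigr)^2$ instead of $E$ directly. Differentiating and inserting the hypothesis gives
\[
\nabla_t V \;=\; (W-R)\bigl(\nabla_t W-\nabla_t R\bigr)\;\le\;-k(W-R)^2-(W-R)\nabla_t R,
\]
and the paper then bounds the cross term using $\nabla_t R\le\mu$ to arrive at the Bernoulli-type inequality
\[
\nabla_t V \;\le\; -2kV-\sqrt{2}\,\mu\sqrt{V}.
\]
The substitution $z=\sqrt{V}$ linearizes this to $\dot z\le -kz-\tfrac{\sqrt{2}}{2}\mu$, whose solution $z(t)\le -\tfrac{\sqrt{2}\mu}{2k}+Ce^{-kt}$ crosses zero at the explicit finite time
\[
T=\frac{1}{k}\ln\!\left(\frac{\tfrac{\sqrt{2}}{2}\tfrac{\mu}{k}+\sqrt{V(0)}}{\tfrac{\sqrt{2}}{2}\tfrac{\mu}{k}}\right).
\]
The mechanism you are missing is precisely the $-c\sqrt{V}$ term: a nonvanishing negative drift at $V=0$ is the classical finite-time-stability ingredient, and it cannot appear in a purely linear differential inequality for $E$. (You may wish to examine the sign of the cross-term bound $-(W-R)\nabla_t R\le -\mu|W-R|$ carefully; it hinges on the sign conventions for $W-R$ and $\nabla_t R$, and your instinct that the drift should generically oppose tracking is not unfounded.)
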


\begin{proof}
    
First, we build the mean square error $V(t)$ between them.
\begin{equation}
    \label{eqA2-2}
    V=\frac{1}{2}(W(t)-R(t))^{2}
\end{equation}
Then, we can derive the difference of $V(t)$ as follows
\begin{equation}
    \label{eqA2-3}
    \begin{split}
           \nabla_t V &=(W-R)(\nabla_t W- \nabla_t R )\\
           &\leq (W-R)(-k(W- R)-\nabla_t R)\\
           &\leq -k |W -R|^2 - (W-R) \nabla_t R\\
    \end{split}
\end{equation}

Introducing the Assumption that the bounded gradient of $R(t)$ , we have
\begin{equation}
    \label{eqA2-7}
    \begin{split}
        \nabla_t V & \leq -2k \frac{|W -R|^2}{2} - \sqrt{2}\mu\frac{|W-R|}{2} \\
        & \leq -2 k V - \sqrt{2} \mu \sqrt{V} \\
    \end{split}
\end{equation}
Observe that the above formulation belongs to a form of the Bernoulli differential equation. In this case, we can reduce the Bernoulli equation to a linear differential equation by substituting $z = \sqrt{V}$. Then, the general solution for $z$ is
\begin{equation}
    \label{eqA2-8}
    z = \sqrt{V} \leq -\frac{\sqrt{2}}{2} \frac{\mu}{k} + C e^{-k t}
\end{equation}

Applying the initial condition $V(t=0)=v_{t_0}$, we have
\begin{equation}
    \label{eqA2-9}
    C = \sqrt{v_{t_0}} +  \frac{\sqrt{2}}{2} \frac{\mu}{k}
\end{equation}
Finally, the convergence time $T$ can be represented as:
\begin{equation}
    \label{eqA2-10}
    T =\frac{1}{k} \ln \left(\frac{\frac{\sqrt{2}}{2} \frac{\mu}{k}+\sqrt{v_{t_0}}}{\frac{\sqrt{2}}{2} \frac{\mu}{k}}\right)+t_{0}
\end{equation}

\end{proof}

\subsection{Illustration of the Feedback Tracking}
\label{illu_fft}

First, we denote $W(t)$ as $ \mathcal{L}_\pi(s)$, $W(t+1)$ as $\mathcal{L}_\pi(s^\prime)$ and $R(t)$ as $\lambda \mathcal{L}_\pi(s^\prime)$, where we omit the expection operator for simplicity. Specifically, at time $t+1$, the value of $W(t+1)$ should decrease by $k(W(t)-R(t))$. The change of $\lambda$ and $k$ results in $k(W(t)-R(t))$ increases correspondingly. Consequently, $W(t+1)$ needs to decrease further to meet the requirement. Recalling the definition, $\mathcal{L}_\pi(s^\prime)$ become smaller. Additionally, the form of Equation \eqref{delta_L_k_l} is similar to finite-time tracking method in continuous-time system which we depcit in Appendix \ref{prflemma2}.

\subsection{Constrained Lyapunov Critic Network}
\label{Criticnetwork}
Concretely, we denote the output of a neural network as $\mathbf f(s, a)$. And then, $\mathcal L_\theta(s, a)$ can be described by:
\begin{equation}
    \label{eq12}
    \begin{split}
        \mathcal L_\theta(s,a) = (\mathbf G_s (\mathbf f(s,a))) (\mathbf G_s (\mathbf f(s,a)))^\top
    \end{split}
\end{equation}
where $\mathbf G_s$ is a linear transformation, which guarantees $\mathbf G_{s=s_e} (\mathbf f(s= s_e,a)) = \textbf{0}$ ($s_e$ is an equilibrium point defined in Definition \ref{def:EP}.). Note that $\mathbf G_s$ contains no parameters to be learned, so the operator does not cause harm to the representation ability of the neural network.

Concretely, the output of the neural network of the Lyapunov critic is described by:
\begin{equation}
    \label{eqB1-1}
    \begin{split}
        \mathbf f(s,a) = \mathbf h_O (\mathbf h_{O-1} (\cdots  \mathbf h_2(\mathbf h_1(<s,a>)))) 
    \end{split}
\end{equation}
where each $h_o(z)$ has the same form:
\begin{equation}
    \label{eqB1-2}
    \begin{split}
        \mathbf h_o(z) = \psi_o (\mathbf W_o z + \mathbf b_o)
    \end{split}
\end{equation}
Here, $O$ represents the number of layers, and $\psi_o$ is the non-linear activation function used in the $o$-th layer. Furthermore, $\{ \mathbf W_o, \mathbf b_o\}$ is the weight and bias of the $o$-th layer. 

First of all, to meet the demand of $\mathcal L_\theta (s_e,a) = 0$, we introduce a linear transformation $\mathbf G_s$, one of whose possible forms can be
\begin{small}
\begin{equation}
    \label{eqB1-3}
    \begin{split}
       \mathbf G_s(\mathbf f ) = \frac{1}{\sum^I_i \delta s_i + \epsilon } \left[\begin{matrix}
            \delta s_1 & \delta s_2 & \cdots & \delta s_I\
            \end{matrix}\right]\left[\begin{matrix}
            \mathbf f_1 & \mathbf f_1 & \cdots & \mathbf f_v\\
            \mathbf f_1 & \mathbf f_1 & \cdots & \mathbf f_v\\
            \cdots  & \cdots & \cdots & \cdots\\
            \mathbf f_1 & \mathbf f_1 & \cdots & \mathbf f_v
            \end{matrix}\right]
    \end{split}
\end{equation}
\end{small}
where $I$ denotes the number of elements of the state, and $v$ is the number of units of the output layer. $\epsilon$ is a constant close to 0 to avoid singularity. Note that $\delta s = s- s_e$, which indicates the difference between the current state and an equilibrium point. 
As we can see, when each element of $\delta s$ is zero, the multiplication of matrices is zero. Thus, $\mathbf G_{s=s_e} (\mathbf f(s=s_e,a)) = \textbf{0}$ holds. Furthermore, it brings another benefit having no impact on the training of networks.

\subsection{Proof of Theorem \ref{finitetimestep}}
\label{prftheo3}

\begin{theorem}
\label{app:finitetimestep}
Suppose that the length of sampling trajectories is $T$, then the bound can be expressed as:
\begin{equation}
    \label{eq10}
    |\mathbb E_{s \sim \mathcal U_\pi} \Delta \mathcal L_\pi (s)- \mathbb E_{s \sim \mathcal U^T_\pi} \Delta \mathcal L_\pi (s)| \leq 2 \frac{(k+1)\overline{c_\pi} }{1-\gamma}T^{q-1}
\end{equation}
where $q$ is a constant in $(0,1)$. 
\end{theorem}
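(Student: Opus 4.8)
The plan is to upper-bound the left-hand side by the product of two elementary factors: the sup-norm of the integrand $\Delta\mathcal L_\pi$ and the $L^1$ (total-variation) distance between the two occupancy measures. Writing the difference of expectations as an integral of $\Delta\mathcal L_\pi(s)$ against the signed measure $\mathcal U_\pi-\mathcal U^T_\pi$, H\"older's inequality gives $|\mathbb E_{s\sim\mathcal U_\pi}\Delta\mathcal L_\pi(s)-\mathbb E_{s\sim\mathcal U^T_\pi}\Delta\mathcal L_\pi(s)|\le \|\Delta\mathcal L_\pi\|_\infty\,\|\mathcal U_\pi-\mathcal U^T_\pi\|_1$, and I would then estimate the two factors so that their product is exactly $2\frac{(k+1)\overline{c_\pi}}{1-\gamma}T^{q-1}$.

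The first factor is routine. Recall $\Delta\mathcal L_\pi(s)=[\mathbb E_{s'\sim\mathcal P_\pi}\mathcal L_\pi(s')-\mathcal L_\pi(s)]+k[\mathcal L_\pi(s)-\lambda\mathbb E_{s'\sim\mathcal P_\pi}\mathcal L_\pi(s')]$. By the Lyapunov-candidate bounds of Appendix \ref{app:def3bound} we have $0\le\mathcal L_\pi(s)\le\frac{\overline{c_\pi}}{1-\gamma}$ for every $s$, and the same holds for the one-step average $\mathbb E_{s'\sim\mathcal P_\pi}\mathcal L_\pi(s')$. Since $0<\lambda\le\gamma<1$, each of the two bracketed expressions is a difference of quantities lying in $[0,\frac{\overline{c_\pi}}{1-\gamma}]$ (the second one after using $\lambda\le 1$), so each has absolute value at most $\frac{\overline{c_\pi}}{1-\gamma}$; the triangle inequality then yields $\|\Delta\mathcal L_\pi\|_\infty\le (1+k)\frac{\overline{c_\pi}}{1-\gamma}$, which accounts for the $(k+1)\overline{c_\pi}/(1-\gamma)$ factor in the statement.

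For the second factor I would use the Abelian-theorem identity established in the proof of Theorem \ref{hlc}, namely $\mathcal U_\pi=\omega_\pi=\lim_{t\to\infty}\mathcal T(\cdot\mid\rho,\pi,t)$, together with $\mathcal U^T_\pi=\frac1T\sum_{t=0}^{T}\mathcal T(\cdot\mid\rho,\pi,t)$, so that (up to an $O(1/T)$ normalisation correction) $\mathcal U_\pi-\mathcal U^T_\pi$ is the time-average of the signed measures $\omega_\pi-\mathcal T(\cdot\mid\rho,\pi,t)$. I would split this average at a cutoff of order $T^{q}$: for $t\le T^{q}$ each summand has $L^1$-norm at most $2$, contributing at most $2T^{q}/T=2T^{q-1}$; for $t>T^{q}$ the ergodicity assumption forces $\mathcal T(\cdot\mid\rho,\pi,t)$ close to $\omega_\pi$, making that tail lower-order and absorbing it into the bound, which gives $\|\mathcal U_\pi-\mathcal U^T_\pi\|_1\le 2T^{q-1}$. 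Multiplying the two factors produces the claimed inequality.

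The main obstacle is precisely this second factor: controlling the rate at which the Cesàro occupancy measure $\mathcal U^T_\pi$ converges to the stationary $\mathcal U_\pi=\omega_\pi$. Bare ergodicity gives convergence but no rate, so making the $t>T^{q}$ tail estimate rigorous needs a quantitative mixing hypothesis (polynomial or geometric ergodicity of the Markov chain induced by $\pi$), and the free exponent $q\in(0,1)$ is exactly the parameter that records this rate; once such a hypothesis is assumed, the remaining steps — boundedness of $\Delta\mathcal L_\pi$, reduction to an $L^1$ distance, and the horizon split — are all straightforward.
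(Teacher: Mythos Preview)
Your proposal is correct and follows essentially the same route as the paper: H\"older's inequality to split into $\|\Delta\mathcal L_\pi\|_\infty$ and an $L^1$ distance of occupancy measures, the bound $\|\Delta\mathcal L_\pi\|_\infty\le(k+1)\overline{c_\pi}/(1-\gamma)$ from $0\le\mathcal L_\pi\le\overline{c_\pi}/(1-\gamma)$, and a polynomial-mixing input for the measure term. The only cosmetic difference is that the paper applies H\"older termwise and then simply \emph{postulates} the mixing bound $\sum_{t=1}^{T}\|\mathcal U_\pi-\mathcal T(\cdot\mid\rho,\pi,t)\|_1\le 2T^{q}$ (citing Zou et al.\ 2019) rather than attempting your horizon-split derivation; what you flag as ``the main obstacle'' is precisely the assumption the paper imports wholesale.
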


\begin{proof}
    
First, we can get the following equation by introducing the definitions of $\mathcal U_\pi$ and $\mathcal U_\pi^T$.
\begin{equation}
    \label{eqA4-1}
    \begin{split}
        & \mathbb E_{s \sim \mathcal U_\pi} \Delta \mathcal L_\pi (s)- \mathbb E_{s \sim \mathcal U^T_\pi} \Delta \mathcal L_\pi (s) \\
        \qquad &= \int_{\mathcal{S}}(\mathcal U_{\pi}(s)-\frac{1}{T} \sum_{t=1}^{T} \mathcal T(s \mid \rho, \pi, t)) \Delta \mathcal L_\pi(s) \mathrm{d} s \\
        \qquad & = \frac{1}{T} \sum_{t=1}^{T} \int_{\mathcal{S}} (\mathcal U_{\pi}(s)- \mathcal T(s \mid \rho, \pi, t))\Delta \mathcal L_\pi(s) \mathrm{d} s
    \end{split}
\end{equation}
Then, eliminating the integral operator, we obtain
\begin{equation}
    \label{eqA4-2}
    \begin{split}
        & |\mathbb E_{s \sim \mathcal U_\pi} \Delta \mathcal L_\pi (s)- \mathbb E_{s \sim \mathcal U^T_\pi} \Delta \mathcal L_\pi (s)| \\
        & \leq \frac{1}{T} \sum_{t=1}^{T} \| \mathcal U_{\pi}(s)- \mathcal T(s \mid \rho, \pi, t)\|_1 \| \Delta \mathcal L_\pi(s) \|_\infty
    \end{split}
\end{equation}
Thus, the next step is to get the bounds of $\| \mathcal U_{\pi}(s)- \mathcal T(s \mid \rho, \pi, t)\|_1$ and $\| \Delta \mathcal L_\pi(s) \|_\infty$.

For the first part, we introduce the assumption that first is mentioned in \citep{zou2019finite}, shown as follows:
\begin{equation}
    \label{eqA4-3}
    \begin{split}
        \sum_{t=1}^{T} \| \mathcal U_{\pi}(s)- \mathcal T(s \mid \rho, \pi, t)\|_1 \leq 2 T^ q, \ \ \forall T \in \mathcal Z_{+}, \ \ \exists q \in (0,1)  
    \end{split}
\end{equation}
Frankly speaking, the assumption is easily satisfied because the L1 distance between two distributions is bounded by 2. At the same time, $\mathcal T(s \mid \rho, \pi, t)$ converges to $\mathcal U_{\pi}(s)$ with time approaching.  

For the second part, we can get the bound of $\Delta \mathcal L_\pi(s)$ according to Equation \ref{Lya_cand_bound}.
\begin{equation}
    \label{eqA4-4}
    \begin{split}
        \Delta \mathcal L_\pi(s) &=  \mathbb E_{s^\prime \sim \mathcal{P}_\pi} \mathcal L_\pi(s^{\prime}) - \mathcal L_\pi (s)  + k (\mathcal L_\pi (s)- \lambda \mathbb E_{s^\prime \sim \mathcal{P}_\pi} (s^\prime)) \\
        &\leq \frac{\overline{c_\pi}}{1-\gamma} - 0 + k (\frac{\overline{c_\pi}}{1-\gamma} -0 )
    \end{split}
\end{equation}

Then, we have
\begin{equation}
    \label{eqA4-5}
    \begin{split}
        \| \Delta \mathcal L_\pi(s)\|_\infty 
        \leq (k+1) \frac{\overline{c_\pi}}{1-\gamma}
    \end{split}
\end{equation}

Adding results in Equation \ref{eqA4-5}, we finally get

\begin{equation}
    \label{eqA4-5}
    \begin{split}
        |\mathbb E_{s \sim \mathcal U_\pi} \Delta \mathcal L_\pi (s)- \mathbb E_{s \sim \mathcal U^T_\pi} \Delta \mathcal L_\pi (s)| \leq 2 \frac{(k+1)\overline{c_\pi} }{1-\gamma}T^{q-1}
    \end{split}
\end{equation}

\end{proof}

\subsection{Proof of Theorem \ref{finitem+t}}
\label{prftheo4}

\begin{theorem}
\label{app:finitem+t}
Suppose that the length of sampling trajectories is $T$ and the number of trajectories is $M$, then there exists the following upper bound:
\begin{equation}
    \label{eq11}
    \begin{aligned}
    \mathbb P (|\frac{1}{MT} & \sum_{m=1}^M  \sum_{t=1}^{T} \Delta \mathcal L_\pi (s^m_t) - \mathbb E_{s \sim \mathcal U^T_\pi} \Delta \mathcal L_\pi (s)| \geq \alpha) \\
    & \leq 2 \exp(- \frac{M\alpha^2(1-\gamma)^2}{((1-k\lambda)^2+(k-1)^2)\overline{c_\pi}^2})  
    \end{aligned}
\end{equation}
where $s^m_t$ represents the state in the $m$-th trajectory at the timestep $t$. 
\end{theorem}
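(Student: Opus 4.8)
The plan is to recognize this as a concentration-of-measure statement and apply Hoeffding's inequality to the $M$ independent trajectory-level averages. The key observation is that the trajectories $m = 1, \dots, M$ are sampled independently (each starting from $s_0 \sim \rho$ and rolled out under $\pi$), so if I define the per-trajectory random variable
\begin{equation}
    Z_m = \frac{1}{T} \sum_{t=1}^{T} \Delta \mathcal L_\pi(s^m_t),
\end{equation}
then $Z_1, \dots, Z_M$ are i.i.d.\ and $\frac{1}{MT}\sum_{m}\sum_{t}\Delta\mathcal L_\pi(s^m_t) = \frac{1}{M}\sum_{m} Z_m$. Moreover, by construction (and the ergodic/stationarity setup), $\mathbb{E}[Z_m] = \mathbb{E}_{s\sim\mathcal U_\pi^T}\Delta\mathcal L_\pi(s)$, since $\mathcal U_\pi^T = \frac{1}{T}\sum_{t=1}^T \mathcal T(s\mid\rho,\pi,t)$ is exactly the distribution governing the time-averaged states along one trajectory. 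So the left-hand side probability is $\mathbb{P}(|\frac1M\sum_m Z_m - \mathbb{E}Z_m| \ge \alpha)$, which is the canonical Hoeffding setup.

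The main work is then to bound the range of $Z_m$ — equivalently, to bound $\Delta\mathcal L_\pi(s)$ both above and below. From Equation~\eqref{delta_L_k_l}, $\Delta\mathcal L_\pi(s) = \mathbb{E}_{s'}\mathcal L_\pi(s') - \mathcal L_\pi(s) + k[\mathcal L_\pi(s) - \lambda\mathbb{E}_{s'}\mathcal L_\pi(s')] = (k-1)\mathcal L_\pi(s) + (1 - k\lambda)\mathbb{E}_{s'\sim\mathcal P_\pi}\mathcal L_\pi(s')$. Each of $\mathcal L_\pi(s)$ and $\mathbb{E}_{s'}\mathcal L_\pi(s')$ lies in $[0, \overline{c_\pi}/(1-\gamma)]$ by the Lyapunov candidate bound in Appendix~\ref{app:def3bound} (Equation~\eqref{Lya_cand_bound}). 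I would treat $\Delta\mathcal L_\pi(s)$ as a linear function $(k-1)x + (1-k\lambda)y$ of two variables $x, y$ each ranging over an interval of length $\overline{c_\pi}/(1-\gamma)$; since it is affine in $(x,y)$, its oscillation over that box is $(|k-1| + |1-k\lambda|)\cdot\overline{c_\pi}/(1-\gamma)$, so the range (max minus min) of $\Delta\mathcal L_\pi$ is $b := (|k-1| + |1-k\lambda|)\overline{c_\pi}/(1-\gamma)$. Because $Z_m$ is an average of quantities all lying in an interval of this width, $Z_m$ itself lies in an interval of width at most $b$.

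Hoeffding's inequality for the mean of $M$ i.i.d.\ variables bounded in an interval of width $b$ gives $\mathbb{P}(|\frac1M\sum_m Z_m - \mathbb{E}Z_m| \ge \alpha) \le 2\exp(-2M\alpha^2/b^2)$. Substituting $b = (|k-1| + |1-k\lambda|)\overline{c_\pi}/(1-\gamma)$ yields an exponent $-2M\alpha^2(1-\gamma)^2 / ((|k-1|+|1-k\lambda|)^2\overline{c_\pi}^2)$. To match the stated bound exactly one uses $(|k-1|+|1-k\lambda|)^2 \le 2((k-1)^2 + (1-k\lambda)^2)$ (the inequality $(p+q)^2 \le 2(p^2+q^2)$), which turns the $-2M\alpha^2\cdots$ into $-M\alpha^2\cdots$ over $((k-1)^2+(1-k\lambda)^2)\overline{c_\pi}^2$, i.e.\ precisely Equation~\eqref{eq11} (noting $(1-k\lambda)^2 = (k\lambda-1)^2$). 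The step I expect to be most delicate is the clean justification that $\mathbb{E}[Z_m]$ equals $\mathbb{E}_{s\sim\mathcal U_\pi^T}\Delta\mathcal L_\pi(s)$ — this requires being careful that $\mathbb{E}_{s'\sim\mathcal P_\pi}\mathcal L_\pi(s')$ inside $\Delta\mathcal L_\pi$ is a deterministic function of $s$ (a conditional expectation), so that averaging $\Delta\mathcal L_\pi$ over the empirical state-visitation of one trajectory has the right expectation under $\mathcal U_\pi^T$; the rest is the routine range computation plus a textbook Hoeffding application.
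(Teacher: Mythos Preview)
Your proposal is correct and follows essentially the same route as the paper: both rewrite $\Delta\mathcal L_\pi$ as $(k-1)\mathcal L_\pi(s)+(1-k\lambda)\mathbb E_{s'}\mathcal L_\pi(s')$, bound each term using $\mathcal L_\pi \in [0,\overline{c_\pi}/(1-\gamma)]$, and apply Hoeffding's inequality over the $M$ trajectory-level averages. Your write-up is in fact more careful than the paper's --- you make explicit the per-trajectory variable $Z_m$, the range computation $(|k-1|+|1-k\lambda|)\overline{c_\pi}/(1-\gamma)$, and the weakening $(p+q)^2\le 2(p^2+q^2)$ needed to land exactly on the sum-of-squares denominator, whereas the paper jumps directly to that form.
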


\begin{proof}
	
First, eliminating $\Delta \mathcal L_\pi(s)$ by Equation \ref{delta_L_k_l}, we rewrites the left side of Equation \ref{eq11} as
\begin{equation}
    \label{eqA5-1}
    \begin{split}
        & \delta = \mathbb P (|\frac{1}{MT} \sum_{m=1}^M  \sum_{t=1}^{T} \Delta \mathcal L_\pi (s^m_t) - \mathbb E_{s \sim \mathcal U^T_\pi} \Delta \mathcal L_\pi (s)| \geq \alpha) \\
        & = \mathbb P (|\frac{1}{MT} \sum_{m=1}^M  \sum_{t=1}^{T} (\mathcal L_\pi(s_{t+1}) - \mathcal L_\pi (s_t) + k_l (\mathcal L_\pi (s_t)- \lambda \mathcal L_{\pi} (s_{t+1}))) - \mathbb E_{s \sim \mathcal U^T_\pi} \Delta \mathcal L_\pi (s)| \geq \alpha) \\
        & = \mathbb P (|\frac{1}{MT} \sum_{m=1}^M  \sum_{t=1}^{T} ((1-k\lambda)\mathcal L_\pi(s_{t+1}) + (k-1) \mathcal L_\pi (s_t)) - \mathbb E_{s \sim \mathcal U^T_\pi} \Delta \mathcal L_\pi (s)| \geq \alpha) 
    \end{split}
\end{equation}

Here $\mathbb E_{s \sim \mathcal U^T_\pi} \Delta \mathcal L_\pi (s)$ is expected value of $\frac{1}{MT} \sum_{m=1}^M  \sum_{t=1}^{T} \Delta \mathcal L_\pi (s^m_t)$. In addition, the bounds of $(1-k\lambda)\mathcal L_\pi(s_{t+1})$ and $(k-1) \mathcal L_\pi (s_t)$ can be obtained easily by Equation \ref{Lya_cand_bound}. Thus, we obtain the Theorem \ref{finitem+t} by applying Hoeffding's inequality.
\begin{equation}
    \label{eqA5-1}
    \begin{split}
        \delta &\leq 2 \exp(- \frac{2M^2\alpha^2}{M((1-k\lambda)^2+(k-1)^2)\frac{\overline{c_\pi}^2}{(1-\gamma)^2}}) \\
        & \leq 2 \exp(- \frac{M\alpha^2(1-\gamma)^2}{((1-k\lambda)^2+(k-1)^2)\overline{c_\pi}^2})
    \end{split}
\end{equation}
\end{proof}

\section{Details of Algorithms}
\label{Pseudo-code}

As mentioned in the main text, we introduce a minimum entropy as a constraint in policy optimization and apply the primal-dual method to update the policy and the Lagrange multiplier $\lambda_e$. To be specific, the constraint can be expressed as
\begin{equation}
    \label{eqB2-1}
    \log{\pi_\phi(a|s)} \leq \mathcal - \mathcal Z_e
\end{equation}
where $\mathcal Z_e$ is the minimum value of policy entropy, usually, $\mathcal Z_e$ corresponds to the dimension of action space in the environment. 

\begin{algorithm}[H]
	\caption{ Adaptive Lyapunov-based Actor-Critic Algorithm (ALAC)}
	\label{alg:1}
	\begin{algorithmic}[]
	    \STATE Orthogonal initialize the parameters of actor and critic networks with $\phi,\theta$
	    \STATE Initialize replay buffer $D$ and $\lambda_l, \lambda_e$, $\lambda$ and $k$
	    \STATE Initialize the parameters of target network with $\phi^\prime \leftarrow \phi$ and $\theta^\prime \leftarrow \theta$
	    \FOR {episode $m = 1, M $}
	    \STATE Sample an initial state $s_0$ 
	    \FOR {step $t=0, T-1$}
	    \STATE Sample an action from $\pi_\phi(a_t|s_t)$
	    \STATE Execute the action $a_t$ and observe a new state $s_{t+1}$
	    \STATE Store $<s_t,a_t,c_t,s_{t+1}>$ into $\mathcal{D}$
	    \ENDFOR
	    \FOR{iteration $n = 1, N$}
	    \STATE Sample a minibatch $\mathcal B$ from the replay buffer $D$
	    \STATE Update $\theta$ according to Eq.\eqref{eq18} using minibatch $\mathcal B$
	    \STATE Update $\phi,\lambda_l,\lambda_e$ according to      
             Eq.\eqref{eq15},\eqref{eq16},\eqref{eqB2-1} using minibatch $\mathcal B$
             \STATE Update adaptive factors $\lambda$ and $k$
	    \STATE Update the parameters of target networks, $\theta^\prime,\phi^\prime$.
	    \ENDFOR
	    \ENDFOR
	\end{algorithmic}
\end{algorithm}

\section{Details of Experiments}

\begin{figure*}[h]
 \centering  
 \label{envs_img}
 \includegraphics[width=\hsize]{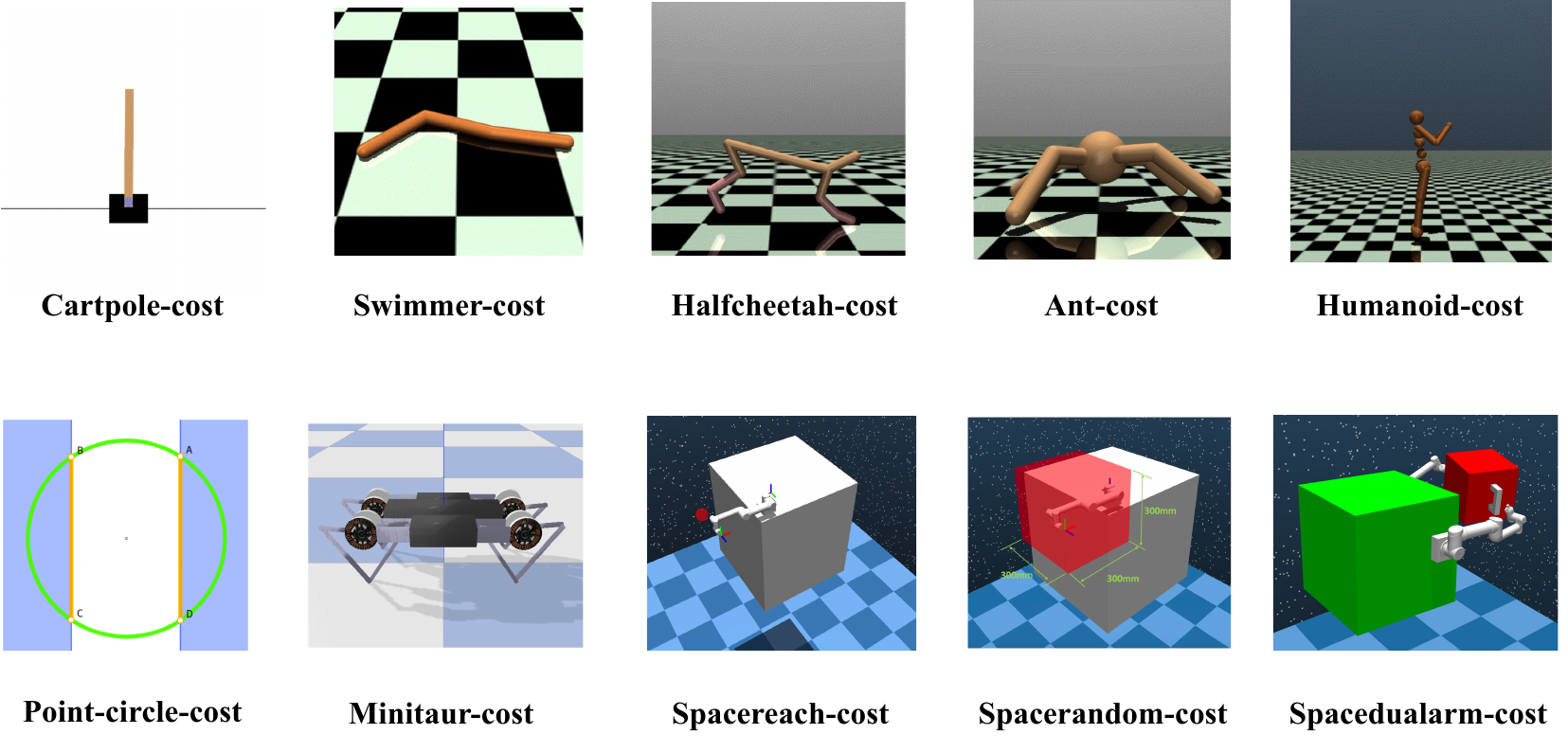} 
 \caption{Overview of our environments.}
 \label{app_fig_env}
 \vspace{-0.2cm}
\end{figure*}

We test our method and baselines in ten robotic control environments, including \textbf{Cartpole-cost},\textbf{Point-circle-cost}, \textbf{Halfcheetah-cost}, \textbf{Swimmer-cost}, \textbf{Ant-cost}, \textbf{Humanoid-cost}, \textbf{Minitaur-cost}, \textbf{Spacereach-cost}, \textbf{Spacerandom-cost} and \textbf{Spacedualarm-cost}. Most tasks in ten environments are goal-oriented, tracking a target position or speed, which corresponds to most control tasks. Furthermore, the latter four environments involve models of practical robots like a quadruped robot and a robotic arm, making them relatively more difficult. It is worth noting that the task of \textbf{Spacedualarm-cost} is trajectory planning of a free-floating dual-arm space robot. The coupling property of the base and the robotic arms brings hardship for both traditional control and RL-based methods \citep{wang2022collision}. 
 
\subsection{Environmental Design}
\label{envdetails}

\paragraph{Cartpole-cost}

This task aims to maintain the pole vertically at a target position. The environment is inherited from \citep{han2020reinforcement}. The state and action space are the same as the default settings in OpenAI Gym\citep{brockman2016openai}, so we omit the description. The cost function is $c = \left(\frac{x}{x_{\text {threshold }}}\right)^{2}+20 *\left(\frac{\theta}{\theta_{\text {threshold }}}\right)^{2}$, where $x_{\text {threshold }} = 10$ and $\theta_{\text {threshold }} = 20 ^{\circ} $. The other settings can be found in Table \ref{nonlinear}. 

\paragraph{Point-circle-cost}

This task aims to allow a sphere to track a circular trajectory. The environment is inherited from \citep{achiam2017constrained}. The sphere is initialized at the original point. The cost function is represented as $c = d$, where $d$ denotes the distance between the current position and the reference. The other settings can be found in Table \ref{nonlinear}.




\begin{table}[h]
    \centering
    \caption{Hyper-parameters of non-linear dynamic environments}
    \begin{tabular}{c|c|c}
    \toprule  
    Hyper-parameters & Cartpole-cost & Point-circle-cost \\\hline
    State shape & 4 & 7 \\
    Action shape & 2 & 2  \\\hline
    Length of an episode &	250 steps & 65 steps \\
    Maximum steps &	3e5 steps & 3e5 steps  \\\hline
    Actor network &	(64, 64) & 	(64, 64) \\
    Critic network & (64, 64, 16) & (64, 64, 16) \\
    \bottomrule 
    \end{tabular}
    \label{nonlinear}
\end{table}

\paragraph{Halfcheetah-cost}
The goal of this task is to make a HalfCheetah (a
2-legged simulated robot) to track the desired velocity. The environment is inherited from \citep{han2020reinforcement}. The state and action space are the same as the default settings in OpenAI Gym\citep{brockman2016openai}, so we omit the description. The cost function is $c = (v-1)^2$, where 1 represents the desired velocity. The other settings can be found in Table \ref{mujoco}.

\paragraph{Swimmer-cost}

This task aims to make a multi-joint snake robot to track the desired velocity. The environment is inherited from \citep{han2020reinforcement}. The state and action space are the same as the default settings in OpenAI Gym\citep{brockman2016openai}, so we omit the description. The cost function is $c = (v-1)^2$, where 1 represents the desired velocity. The other settings can be found in Table \ref{mujoco}. 

\paragraph{Ant-cost}

This task aims to make an Ant (a quadrupedal simulated robot) track the desired velocity. The environment is inherited from \citep{brockman2016openai}. The state and action space are the same as the default settings in OpenAI Gym \citep{brockman2016openai}, so we omit the description. The cost function is $c = (v-1)^2$, where 1 represents the desired velocity. The other settings can be found in Table \ref{mujoco}.

\paragraph{Humanoid-cost}

This task aims to make a humanoid robot to track the desired velocity. The environment is inherited from \citep{brockman2016openai}. The state and action space are the same as the default settings in OpenAI Gym \citep{brockman2016openai}, so we omit the description. The cost function is $c = (v-1)^2$, where 1 represents the desired velocity. The other settings can be found in Table \ref{mujoco}.

\begin{table*}[t]
    \centering
    \caption{Hyper-parameters of mujoco environments}
    \begin{tabular}{c|c|c|c|c}
    \toprule  
    Hyper-parameters & Swimmer-cost & Halfcheetah-cost & Ant-cost & Humanoid-cost\\\hline
    State shape & 8 & 17 & 27 & 376\\
    Action shape &	2 & 6 & 8 & 8 \\\hline
    Length of an episode &	250 steps & 200 steps & 200 steps & 500 steps \\
    Maximum steps &	3e5 steps & 1e6 steps & 1e6 steps & 1e6 steps\\\hline
    Actor network &	(64, 64) & (64, 64) & (64, 64) & (256, 256)\\
    Critic network & (64, 64, 16) & (256, 256, 16) & (64, 64, 16) & (256, 256, 128)\\
    \bottomrule 
    \end{tabular}
    \label{mujoco}
\end{table*}

\paragraph{Minitaur-cost}

This task aims to control the Ghost Robotics Minitaur quadruped to run forward at the desired velocity. The environment is inherited from \citep{coumans2016pybullet}. The state and action space are the same as the default settings in PyBullet environment\citep{coumans2016pybullet}, so we omit the description. The cost function is $c = (v-1)^2$, where 1 represents the desired velocity. The other settings can be found in Table \ref{robotic}.

\paragraph{Spacereach-cost}

 This task aims to make a free-floating single-arm space robot's end-effector reach a fixed goal position. Since the base satellite is uncontrolled, collisions will cause system instability once collisions occur. Therefore, it is critical to plan a collision-free path while maintaining the stability of the base. The agent can obtain the state, including the angular positions and velocities of joints, the position of the end-effector, and the position of the reference point. Then, the agent outputs the desired velocities of joints. In low-level planning, a PD controller converts the desired velocities into torques, and then controls the manipulator. The cost function is defined as $c = d$, where $d$ is the distance between the goal and end-effector. The other settings can be found in Table \ref{robotic}.

\paragraph{Spacerandom-cost}

This task aims to make a free-floating single-arm space robot's end-effector reach a random goal position. The agent can obtain the state, including the angular positions and velocities of joints, the position of the end-effector, and the position of the reference point. Then, the agent outputs the desired velocities of joints. In low-level planning, a PD controller converts the desired velocities into torques to control the manipulator. The cost function is defined as $c = d$, where $d$ is the distance between goal and end-effector. The other settings can be found in Table \ref{robotic}.

\paragraph{Spacedualarm-cost}

This task aims to make a free-floating dual-arm space robot's end-effectors reach random goal positions. The complexity of the task increases dramatically due to two arms' coupling effects on the base. The agent can obtain the state, including the angular positions and velocities of joints, the positions of end-effectors, and the position of target points of two manipulators. Then, the agent outputs the desired velocities of joints. In low-level planning, a PD controller converts the desired velocities into torques to control the manipulators. The cost function is defined as follows: $c = d_0 + d_1$, where $d_i$ is the distance between goal and end-effector of Arm-$i$. The other settings can be found in Table \ref{robotic}.

\begin{table*}[t]
    \centering
    \caption{Hyper-parameters of robotic environments}
    \begin{tabular}{c|c|c|c|c}
    \toprule  
    Hyper-parameters & Minitaur-cost & Spacereach-cost & Spacerandom-cost & Spacedualarm-cost\\\hline
    State shape & 27 & 18 & 18 & 54\\
    Action shape &	8 & 6 & 6 & 12 \\\hline
    Length of an episode &	500 steps & 200 steps & 200 steps & 200 steps \\
    Maximum steps &	1e6 steps & 3e5 steps & 5e5 steps & 5e5 steps\\\hline
    Actor network &	(256, 256) & (256, 256) & (256, 256) & (512, 512)\\
    Critic network & (256, 256, 16) & (256, 256, 128) & (256, 256, 128) & (512, 512, 256)\\
    \bottomrule 
    \end{tabular}
    \label{robotic}
\end{table*}

\subsection{Implementation Details}
\label{algodetails}

\subsubsection{Baselines}
\label{baselines}

\paragraph{SAC-cost}

Soft Actor-Critic (SAC) is an off-policy maximum entropy actor-critic algorithm \citep{haarnoja2018soft}. The main contribution is to add a maximum entropy objective into standard algorithms. The soft Q and V functions are trained to minimize the soft Bellman residual, and the policy can be learned by directly minimizing the expected KL-divergence. The only difference between SAC and SAC-cost is replacing maximizing a reward function with minimizing a cost function. The hyper-parameters of \textbf{SAC-cost} is illustrated in Table \ref{hp of sac}. 

\begin{table}[h]
    \centering
    \caption{Hyper-parameters of \textbf{SAC-cost}}
    \begin{tabular}{c|c}
    \toprule  
    Hyper-parameters& SAC-cost\\\hline
    Learning rate of actor & 1.e-4  \\
    Learning rate of critic & 3.e-4  \\
    Optimizer & Adam \\
    ReplayBuffer size& $10^6$ \\
    Discount ($\gamma$)& 0.995 \\
    Polyak ($1-\tau$)& 0.995 \\
    Entropy coefficient & 1 \\
    Batch size & 256 \\
    \bottomrule 
    \end{tabular}
    \label{hp of sac}
\end{table}

\paragraph{SPPO}

Safe proximal policy optimization (SPPO) is a Lyapunov-based safe policy optimization algorithm. The neural Lyapunov network is constructed to prevent unsafe behaviors. Actually, the safe projection method is inspired by the TRPO algorithm \citep{schulman2015trust}. In this paper, we modify it to apply the Lyapunov constraints on the MDP tasks, similar to the process in \citep{han2020actor}. The hyper-parameters of \textbf{SPPO} is illustrated in Table \ref{hp of sppo}. 

\begin{table}[h]
    \centering
    \caption{Hyper-parameters of \textbf{SPPO}}
    \begin{tabular}{c|c}
    \toprule  
    Hyper-parameters& SPPO\\\hline
    Learning rate of actor & 1.e-4  \\
    Learning rate of Lyapunov & 3.e-4  \\
    Optimizer & Adam \\
    Discount ($\gamma$)& 0.995 \\
    GAE parameter ($\lambda$) & 0.95\\
    Clipping range & 0.2 \\
    KL constraint ($\delta$) & 0.2 \\
    Fisher estimation fraction & 0.1 \\
    Conjugate gradient steps & 10 \\
    Conjugate gradient damping & 0.1 \\
    Backtracking steps & 10\\
    Timesteps per iteration & 2000 \\
    \bottomrule 
    \end{tabular}
    \label{hp of sppo}
\end{table}

\paragraph{LAC}
\label{lac}

Lyapunov-based Actor-Critic(LAC) algorithm is an actor-critic RL-based algorithm jointly learning a neural controller and Lyapunov function \citep{han2020actor}. Particularly, they propose a data-driven stability condition on the expected value over the state space. Moreover, they have found that the method achieves high generalization and robustness. The hyper-parameters of \textbf{LAC} is illustrated in Table \ref{hp of lac}. Among them, $\alpha_3$ is 0.1 in \textbf{LAC}, while it is changed as 1 in \textbf{LAC$^*$}.

\begin{table}[h]
    \centering
    \caption{Hyperparameters of \textbf{LAC}}
    \begin{tabular}{c|c}
    \toprule  
    Hyperparameters& LAC\\\hline
    Learning rate of actor & 1.e-4  \\
    Learning rate of Lyapunov & 3.e-4 \\
    Learning rate of Larange multiplier & 3.e-4 \\
    Optimizer & Adam \\
    ReplayBuffer size& $10^6$ \\
    Discount ($\gamma$)& 0.995 \\
    Polyak ($1-\tau$)& 0.995 \\
    Parameter of Lyapunov constraint ($\alpha_3$) & 0.1 \\
    Batch size & 256 \\
    \bottomrule 
    \end{tabular}
    \label{hp of lac}
\end{table}

\paragraph{POLYC}

Policy Optimization with Self-Learned Almost Lyapunov Critics (POLYC) algorithm is built on the standard PPO algorithm \citep{schulman2017proximal}. Introducing a Lyapunov function without access to the cost allows the agent to self-learn the Lyapunov critic function by minimizing the Lyapunov risk. The hyper-parameters of \textbf{POLYC} is illustrated in Table \ref{hp of polyc}. 

\begin{table}[h]
    \centering
    \caption{Hyper-parameters of \textbf{POLYC}}
    \begin{tabular}{c|c}
    \toprule  
    Hyper-parameters& POLYC\\\hline
    Learning rate of actor & 1.e-4  \\
    Learning rate of critic & 3.e-4 \\
    Learning rate of Lyapunov & 3.e-4  \\
    Optimizer & Adam \\
    Discount ($\gamma$)& 0.995 \\
    GAE parameter ($\lambda$) & 0.95\\
    Weight of Lyapunov constraint ($\beta$) & 0.1 \\ 
    Clipping range & 0.2 \\
    Timesteps per iteration & 2000 \\
    \bottomrule 
    \end{tabular}
    \label{hp of polyc}
\end{table}

\paragraph{LBPO}

Lyapunov Barrier Policy Optimization (LBPO) algorithm \citep{sikchi2021lyapunov} is built on SPPO algorithm \citep{chow2019lyapunov}. However, the core improvement uses a Lyapunov-based barrier function to restrict the policy update to a safe set for each training iteration. Compared with the SPPO algorithm, the method avoids backtracking to ensure safety. For the implementation in our paper, the process is similar to that of the SPPO algorithm. The hyperparameters of \textbf{LBPO} is illustrated in Table \ref{hp of lbpo}. 

\begin{table}[h]
    \centering
    \caption{Hyperparameters of \textbf{LBPO}}
    \begin{tabular}{c|c}
    \toprule  
    Hyperparameters& LBPO\\\hline
    Learning rate of actor & 1.e-4  \\
    Learning rate of critic & 1.e-4  \\
    Learning rate of Lyapunov & 3.e-4  \\
    Optimizer & Adam \\
    Discount ($\gamma$)& 0.99 \\
    GAE parameter ($\lambda$) & 0.97\\
    Clipping range & 0.2 \\
    KL constraint & 0.012 \\
    Fisher estimation fraction & 0.1 \\
    Conjugate gradient steps & 10 \\
    Conjugate gradient damping & 0.1 \\
    Backtracking steps & 10\\
    Weight of Lyapunov constraint ($\beta$) & 0.01 \\
    Timesteps per iteration & 2000 \\
    \bottomrule 
    \end{tabular}
    \label{hp of lbpo}
\end{table}

\paragraph{TNLF}

Twin Neural Lyapunov Function (TNLF) algorithm is proposed to deal with safe robot navigation in \citep{xiong2022model}. Different from other approaches, the TNLF method defines a Lyapunov V function and Lyapunov Q function, which are trained by minimizing the Lyapunov risk. In effect, the Lyapunov risk is similar to that of \citep{chang2021stabilizing}. Since the Lyapunov function strictly decreases over time, the robot starting with any state in a Region of Attraction (RoA) will always stay in the RoA in the future. 
It should be pointed out that as our environments only support the cost function, the objective, except for Lyapunov risk, is to minimize the cumulative return of cost. The hyper-parameters of \textbf{TNLF} is illustrated in Table \ref{hp of tnlf}.

\begin{table}[h]
    \centering
    \caption{Hyper-parameters of \textbf{TNLF}}
    \begin{tabular}{c|c}
    \toprule  
    Hyper-parameters& TNLF\\\hline
    Learning rate of actor & 1.e-4  \\
    Learning rate of critic & 3.e-4  \\
    Learning rate of Lyapunov V functiob & 3.e-4  \\
    Learning rate of Lyapunov  functiob & 3.e-4  \\
    Optimizer & Adam \\
    ReplayBuffer size& $10^6$ \\
    Discount ($\gamma$)& 0.995 \\
    Polyak ($1-\tau$)& 0.995 \\
    Weight of Lyapunov constraint ($\alpha$) & 0.1 \\
    Variance of noise distribution & 1 \\
    Batch size & 256 \\
    \bottomrule 
    \end{tabular}
    \label{hp of tnlf}
\end{table}

\subsubsection{Our method}
\label{ourmethod}
\paragraph{ALAC}

Our method offers a significant advantage in contrast to baselines, which is to use fewer hyperparameters. The main hyperparameters are illustrated in Table \ref{hp of alac}. We notice that these parameters control networks' learning without including the parameters of constraints. The reason is they are automatically updated according to Lagrange multipliers, $\lambda_l$, and $\lambda_e$. The initial value of Lagrange multipliers is set to 1, common usage in previous constrained methods.

\begin{table}[h]
    \centering
    \caption{Hyper-parameters of \textbf{ALAC}}
    \begin{tabular}{c|c}
    \toprule  
    Hyper-parameters& ALAC\\\hline
    Learning rate of actor & 1.e-4  \\
    Learning rate of Lyapunov & 3.e-4 \\
    Learning rate of  Lagrange multipliers ($\lambda_l$ and $\lambda_e$) & 3.e-4 \\
    Optimizer & Adam \\
    ReplayBuffer size& $10^6$ \\
    Discount ($\gamma$)& 0.995 \\
    Polyak ($1-\tau$)& 0.995 \\
    Batch size & 256 \\
    \bottomrule 
    \end{tabular}
    \label{hp of alac}
\end{table}

\subsection{More Results on Comparison}
\label{ana_comp}

Figure \ref{figure:app_compare} shows the learning curves of the accumulated cost and constraint violations of \textbf{ALAC} and other baselines in ten environments. 

Note that Eq. \eqref{opt} illustrates the definition of optimal-time stability. We can observe that when the system satisfies the stability condition and the accumulated cost is minimal, the system achieves optimal-time stability. Therefore, in our experiments, we use the accumulated cost in a testing episode as the metric of optimality and the stability constraint violations as the stability metric. As shown in Table \ref{table:comparison}, our method achieves both minimal accumulated costs and minimal stability violations compared with the baselines.

\textbf{violation of stability conditions.} 
We measure the stability of each algorithm by evaluating the violations of stability conditions in a trajectory (episode). Since different algorithms may have different requirements for Lyapunov functions, we cannot use a single metric function as the stability metric. However, all algorithms aim to minimize the stability constraints. When the value of these constraints is not close to 0, it indicates that the algorithm does not satisfy the designed stability conditions. In such cases, the algorithm cannot guarantee the convergence of the system's state to an equilibrium point.

\subsection{More Results on Ablation Study}
\label{ablatlion}
We provide the specific formulation of $\Delta \mathcal L_{\pi_\phi}^1 $ and $\Delta \mathcal L_{\pi_\phi}^2$. Compared with $\Delta \mathcal L_{\pi_\phi} $ in Equation \ref{delta_L_k_l}, we intuitively find that $\Delta \mathcal L_{\pi_\phi}^1 $ and $\Delta \mathcal L_{\pi_\phi}^2$ are lower and higher bound of $\Delta \mathcal L_{\pi_\phi} $ respectively. In other words, $\Delta \mathcal L_{\pi_\phi}^1 $ represents the strongest constraint, while $\Delta \mathcal L_{\pi_\phi}^2 $ represents the loosest constraint. The comparison between them can demonstrate that the sampling-based Lyapunov stability ($\Delta \mathcal L_{\pi_\phi} $) can search for the optimal policy with stability guarantee due to the adaptive updating of $\lambda$. 

\begin{equation}
    \label{app:ablation}
    \begin{split}
         \Delta \mathcal L_{\pi_\phi}^1 (s,a) &= \mathcal L_\theta(s^{\prime},\pi_\phi(\cdot|s^\prime)) - \mathcal L_\theta (s,a)  + k [\mathcal L_\theta (s,a)- 0] \\
         \Delta \mathcal L_{\pi_\phi}^2 (s,a) &= \mathcal L_\theta(s^{\prime},\pi_\phi(\cdot|s^\prime)) - \mathcal L_\theta (s,a)  + k [\mathcal L_\theta (s,a)- \mathcal L_\theta(s^{\prime},\pi_\phi(\cdot|s^\prime))] \\
         \Delta \mathcal{L}_{\pi_\phi}(s, a) &=\mathcal{L}_\theta(s^{\prime}, \pi_\phi(\cdot \mid s^{\prime}))-\mathcal{L}_\theta(s, a)+k[\mathcal{L}_\theta(s, a)-\lambda \mathcal{L}_\theta(s^{\prime}, \pi_\phi(\cdot \mid s^{\prime}))] 
    \end{split}
\end{equation}
Furthermore, we also conduct the algorithm with a constant $k = 0.1$. The comparison demonstrates that the heuristic-based updating of $k$ is an effective method. The ablation experiments on other tasks are shown in Figure \ref{figure:ablation_app}.

\subsection{Details of Visualization}
Our RL-based policy optimization method guided by adaptive stability is difficult to express the latent laws of states in the convergent process of different environments as the high-dimension states-space.
To find and show the state's change laws in the convergent process:
\begin{itemize}
    \item We use the t-SNE dimension reduction technique to visualize the state-space.
    \item We plot the phase trajectory with variance according to the state pairs of joint angular position and velocity.
    \item We plot the Lyapunov-value surface and its shadow with the phase trajectory and values in the convergence process.
\end{itemize}

\paragraph{T-SNE Visualization}
\label{app:tsne}
The top row of Figure \ref{figure:visualization} shows the results of the t-SNE state plotting  with SciKit-Learn tools(i.e.sklearn.manifold.TSNE function) with varying parameters(e.g. early\_exaggeration, min\_ grad\_norm). Cartpole-Cost is visualized with n\_components=2 while other environments with n\_components=3. The hyper-parameters for t-SNE are shown in Table \ref{app_vis_code}.

\paragraph{Phase Trajectories of Systems}
\label{app:phasetrajectory}
We select the angular position and velocity of a joint in the state space in each environment and plot the phase trajectory with variance in Figure \ref{figure:app_phase}. The convergent process is shown as the angular velocity starts from 0 to 0, and the joint angle starts from the beginning to the convergence position.

\begin{table}[t]
\begin{lstlisting}
    n_components=2 or 3,
    early_exaggeration=12,
    learning_rate=200.0,
    n_iter=1000,
    n_iter_without_progress=300,
    min_grad_norm=1e-7,
    perplexity=30,
    metric="euclidean",
    n_jobs=None,
    random_state=42,
    verbose=True,
    init='pca' 
\end{lstlisting}
\caption{Other hyper-parameters of t-SNE method.}
\label{app_vis_code}
\end{table}

\paragraph{Lyapunov Functions of Systems}
\label{app:Lya_vis}
We visualize the change of Lyapunov-value in 3 dimensions based on the phase trajectory. The second row of Figure \ref{figure:visualization} shows the Lyapunov-value surface. The curves of values along the phase trajectory are mapped to the whole plane with down-sampled and smoothed by a Gaussian filter; we add the values and the phase trajectory shadows correspondingly simultaneously.

\begin{figure}[t!]
 	\centering
 	\vspace{-0pt}
        \subfigure { \label{fig:a} 
        \includegraphics[width=0.42\linewidth]{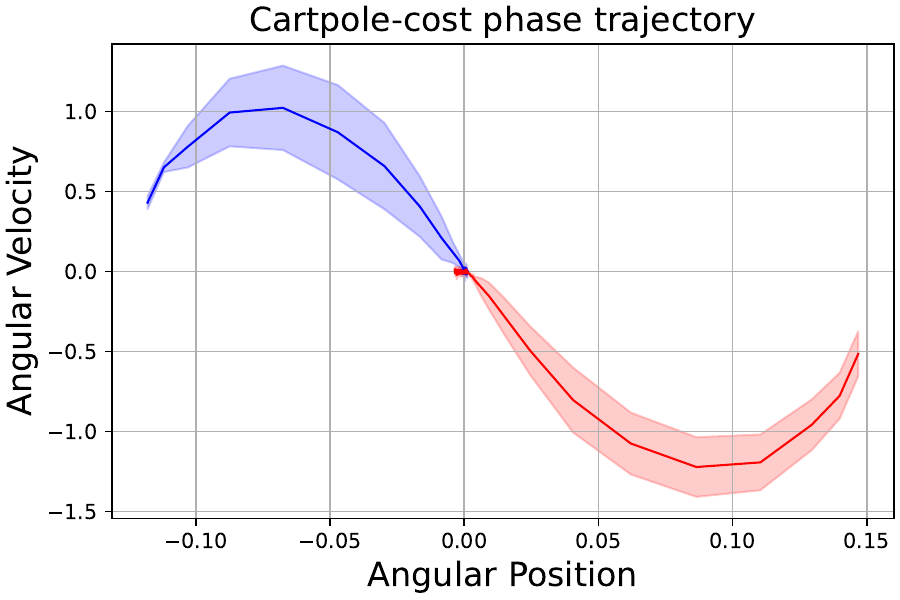} 
        } 
        \subfigure{ \label{fig:a} 
        \includegraphics[width=0.42\linewidth]{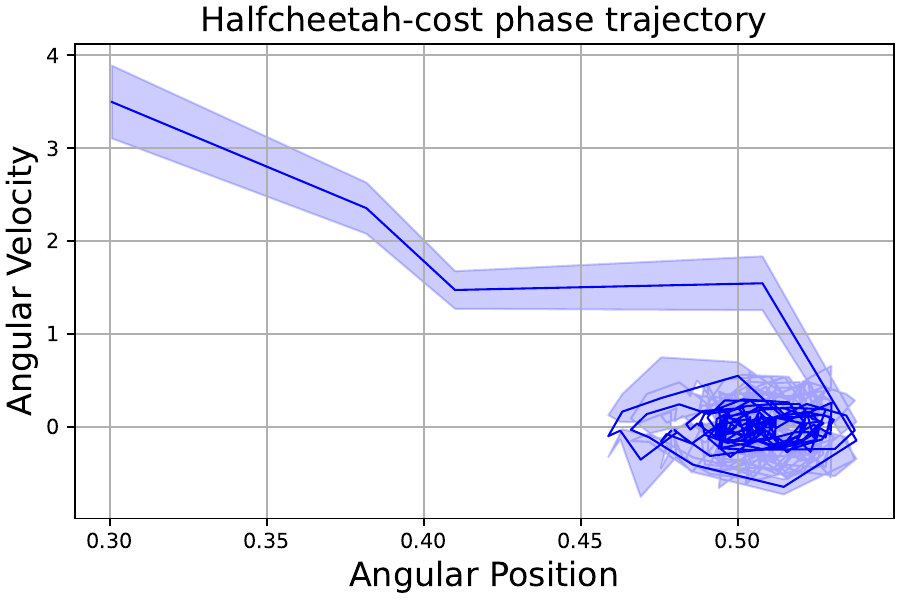}
        } 
        \subfigure{ \label{fig:a} 
        \includegraphics[width=0.42\linewidth]{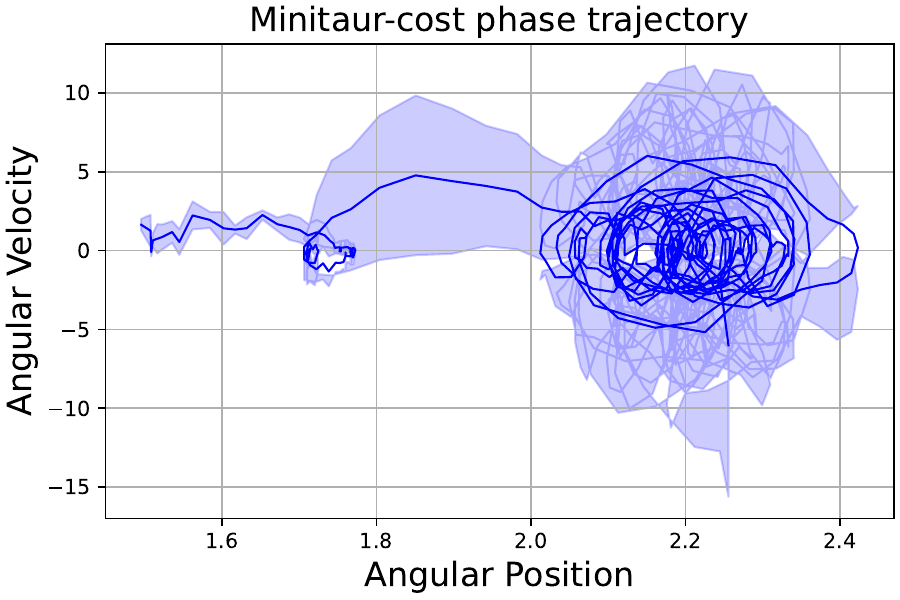}
        } 
        \subfigure { \label{fig:a} 
        \includegraphics[width=0.42\linewidth]{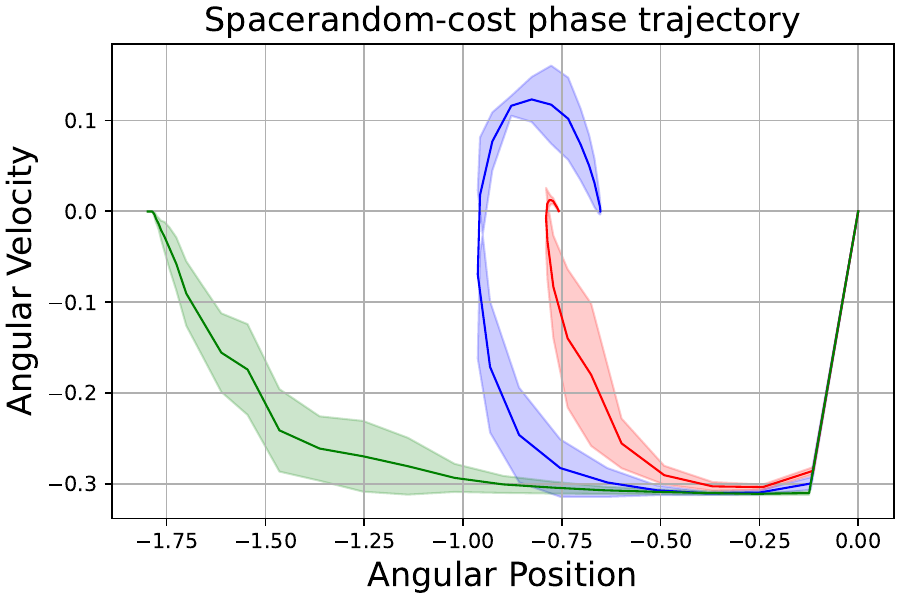} 
        } 

 	\caption{\normalsize Phase trajectories of the systems trained by ALAC. (we report the results of 20 trials and select a joint to graph the phase trajectory in each task.)} 
 	\label{figure:app_phase}
 	\vspace{-10pt}
\end{figure}

\subsection{More Results on Evaluation}

\subsubsection{Robustness}
\label{robust_app}

Generally speaking, stability has a potential relationship with robustness to some extent. Thus, we add external disturbances with different magnitudes in each environment and observe the performance difference. Specifically, when evaluating, we introduce an external disturbance on actions every 50 steps. The magnitudes determine the range of disturbances. For example, a magnitude equal to 0.5 means that the noise is randomly picked from [-0.5, 0.5]. Figure \ref{figure:app_robust} shows that in all scenarios, \textbf{ALAC} outperforms other methods under different disturbance magnitudes. Furthermore, we omit the algorithms that do not converge to a reasonable solution in each task.


\begin{figure*}[h]
 	\centering
 	\vspace{-0pt}
        \subfigure { \label{fig:a} 
        \includegraphics[width=0.22\linewidth]{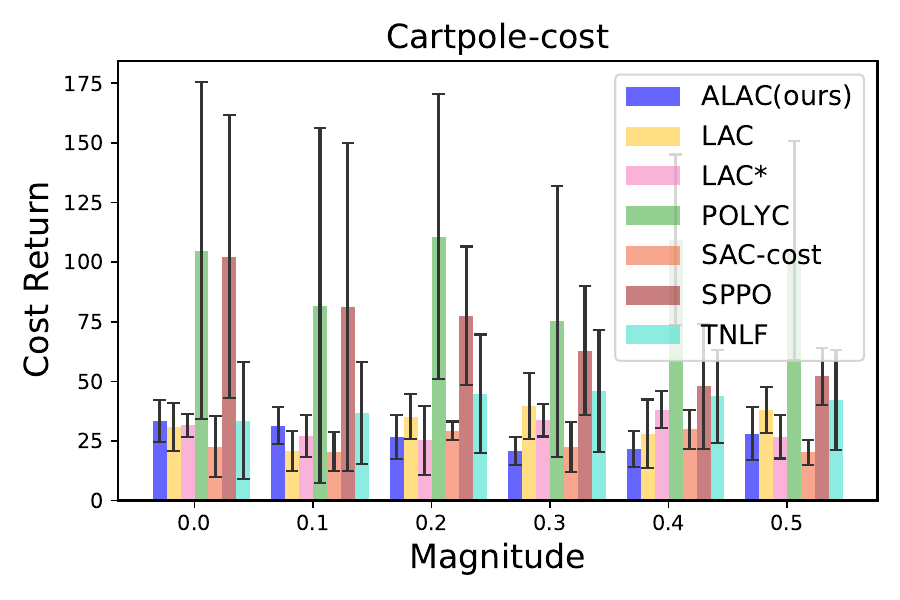} 
        } 
        \subfigure{ \label{fig:a} 
        \includegraphics[width=0.22\linewidth]{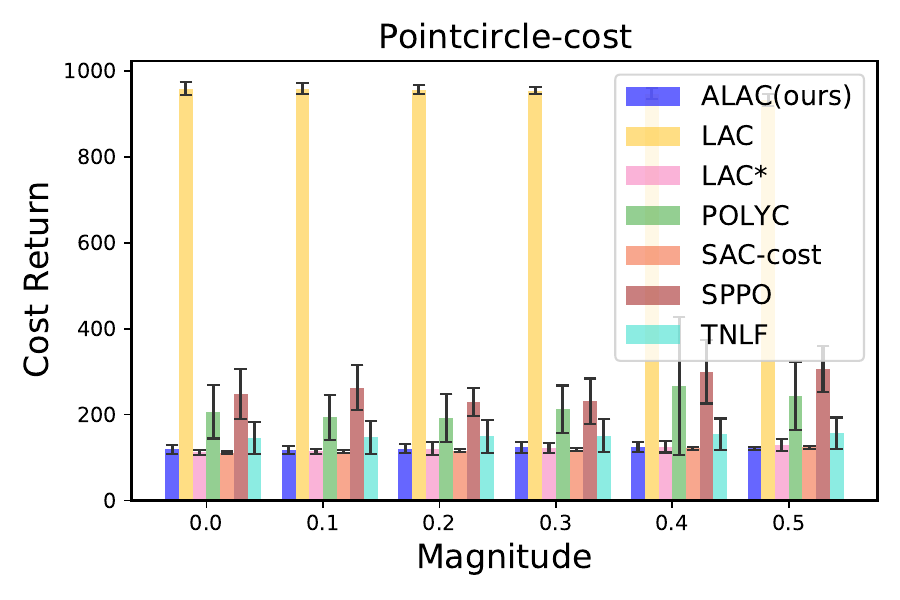}
        } 
        \subfigure{ \label{fig:a} 
        \includegraphics[width=0.22\linewidth]{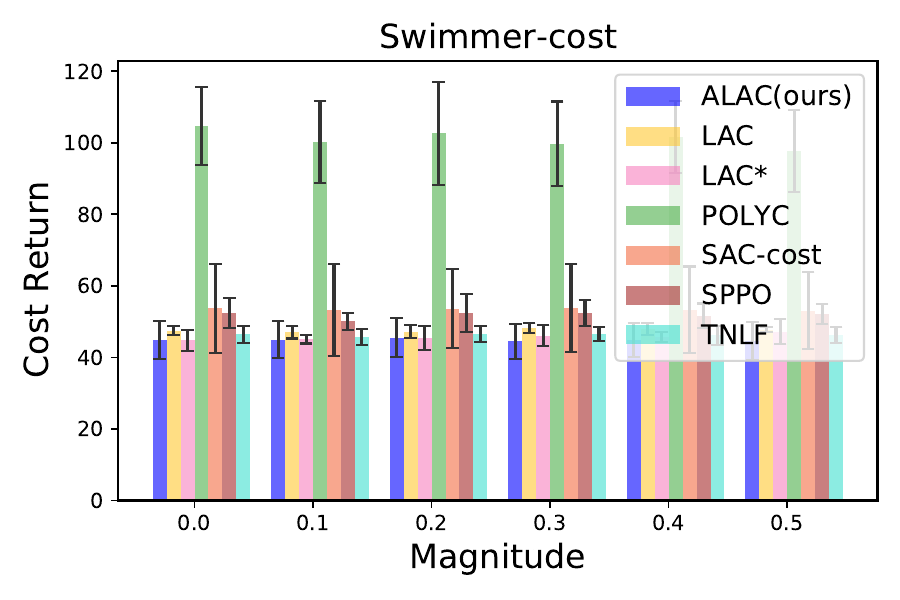}
        } 
        \subfigure { \label{fig:a} 
        \includegraphics[width=0.22\linewidth]{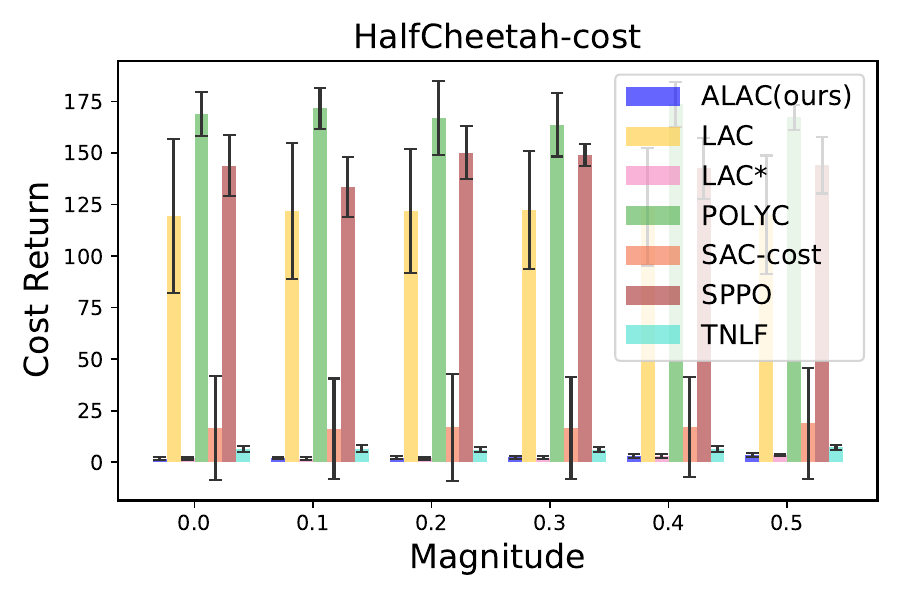} 
        } 
        \subfigure { \label{fig:a} 
        \includegraphics[width=0.22\linewidth]{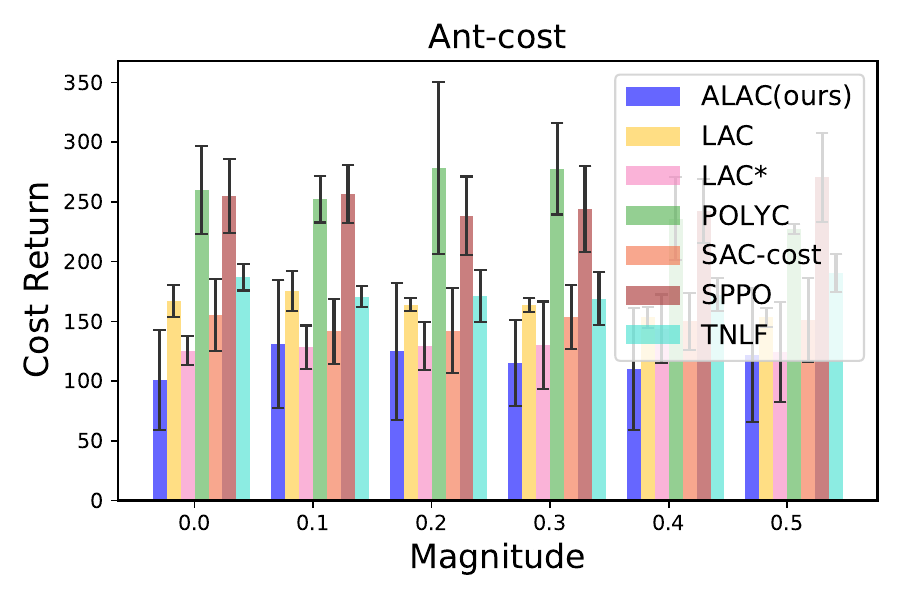}
        } 
        \subfigure { \label{fig:a} 
        \includegraphics[width=0.22\linewidth]{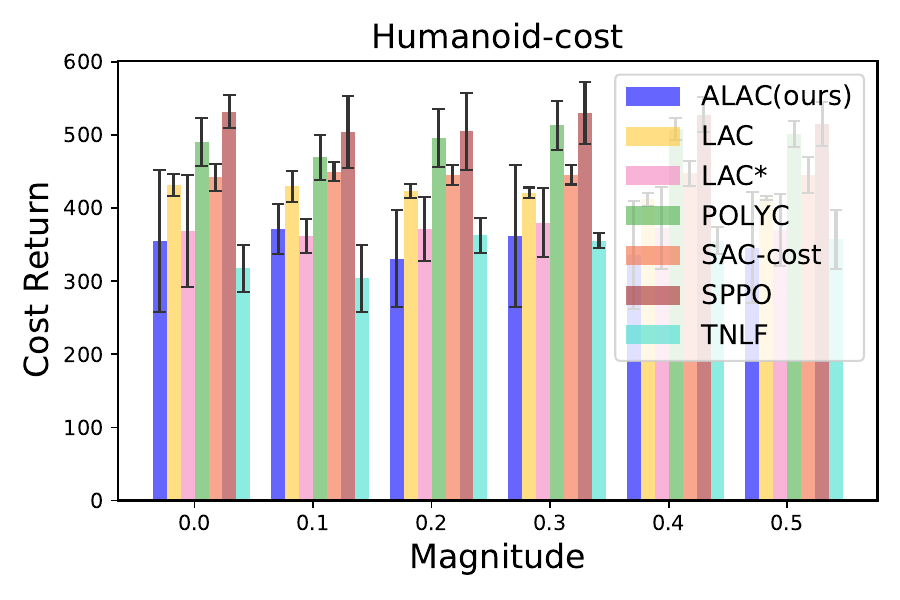}
        }
        \subfigure { \label{fig:a} 
        \includegraphics[width=0.22\linewidth]{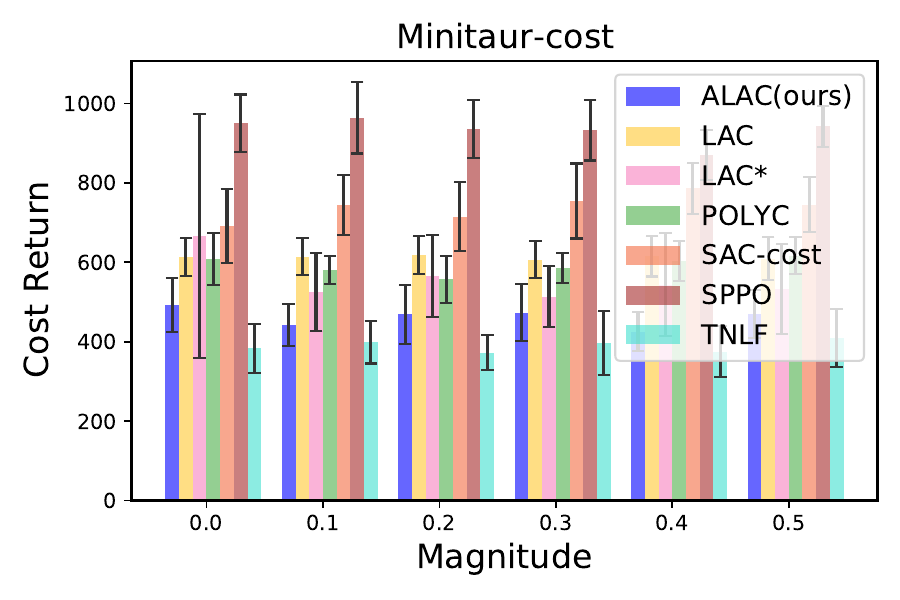} 
        } 
        \subfigure{ \label{fig:a} 
        \includegraphics[width=0.22\linewidth]{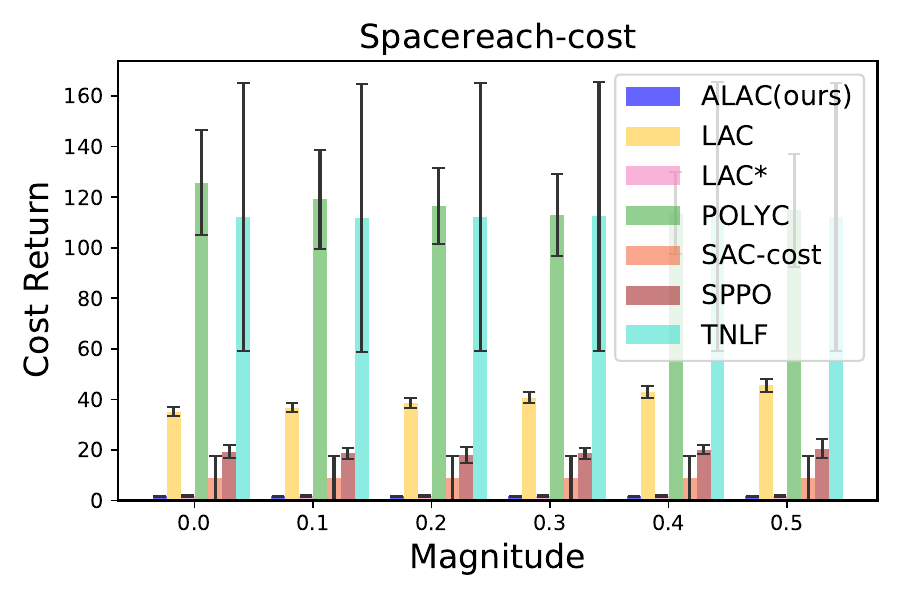}
        } 
        \subfigure{ \label{fig:a} 
        \includegraphics[width=0.22\linewidth]{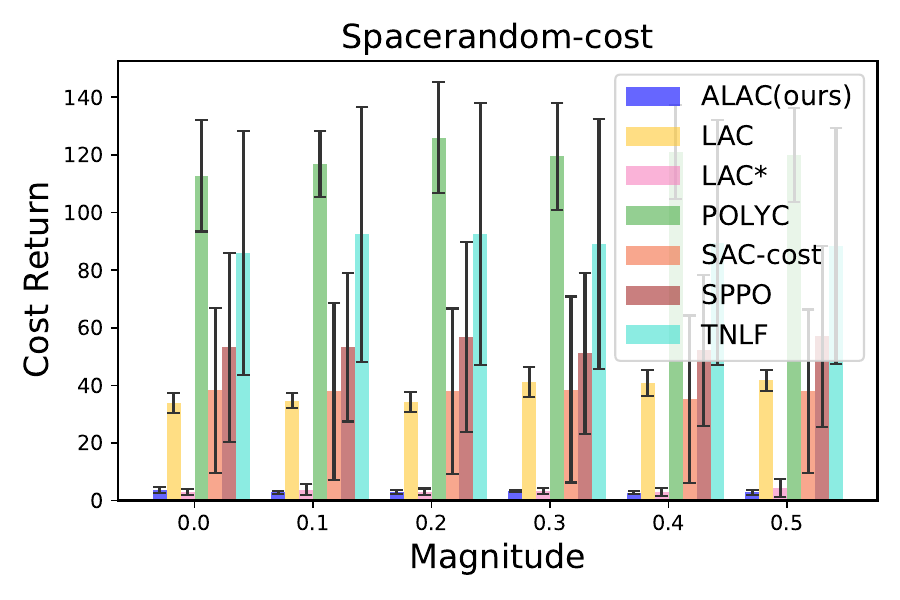}
        }
        \subfigure{ \label{fig:a} 
        \includegraphics[width=0.22\linewidth]{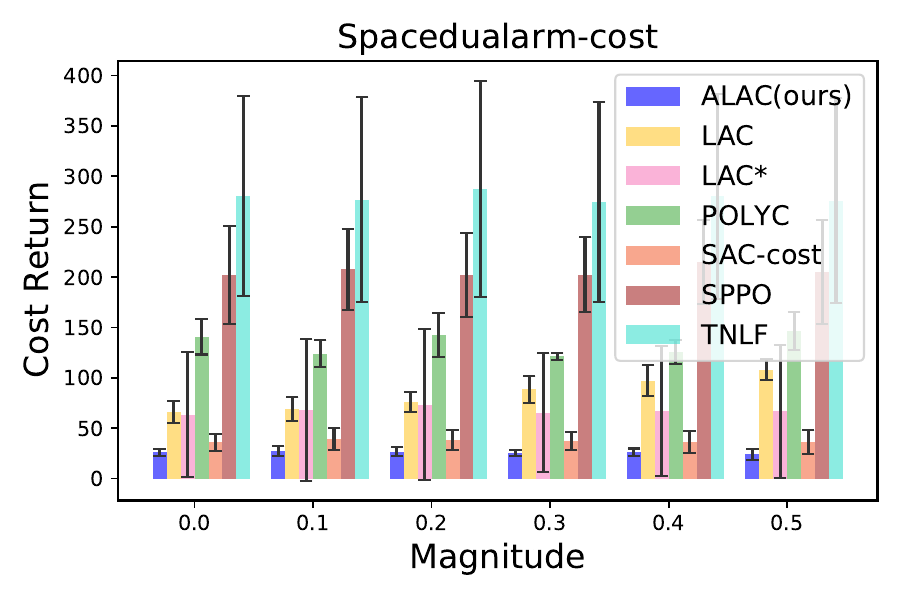}
        } 

 	\caption{\normalsize Performance of ALAC method and other baselines under persistent disturbances with different magnitudes. (The X-axis indicates the magnitude of the applied disturbance. We evaluate the trained policies for 20 trials in each setting.)} 
 	\label{figure:mujoco-single-task}
 	\vspace{-10pt}
\end{figure*}

\subsubsection{Generalization}
\label{general_app}

We verify that \textbf{ALAC} achieves excellent generalization with the feedback of the error. First, we introduce a new variable in the state, which represents the error between the desired goal and the achieved goal. To be specific, the desired goal can be the desired velocity or position of the agent, and the achieved goal is the current velocity or position of the agent. During training, we use a fixed value for the desired goal, while in evaluation, we add $\pm 20\%$ bias to the desired goal. The results show that \textbf{ALAC} generalizes well to previously unseen desired goals, as depicted in Figure \ref{figure:app_robust}. Furthermore, we observe that errors have a negative impact on the performance of \textbf{SAC-cost}. The reason could be that \textbf{SAC-cost} does not capture the error information without the guidance of a Lyapunov function. Notably, the number of environment steps in \textbf{Halfcheetah-cost} is 5e5 in this section.

\begin{figure*}[h]
 	\centering
 	\vspace{-0pt}
        \subfigure { \label{fig:a} 
        \includegraphics[width=0.31\linewidth]{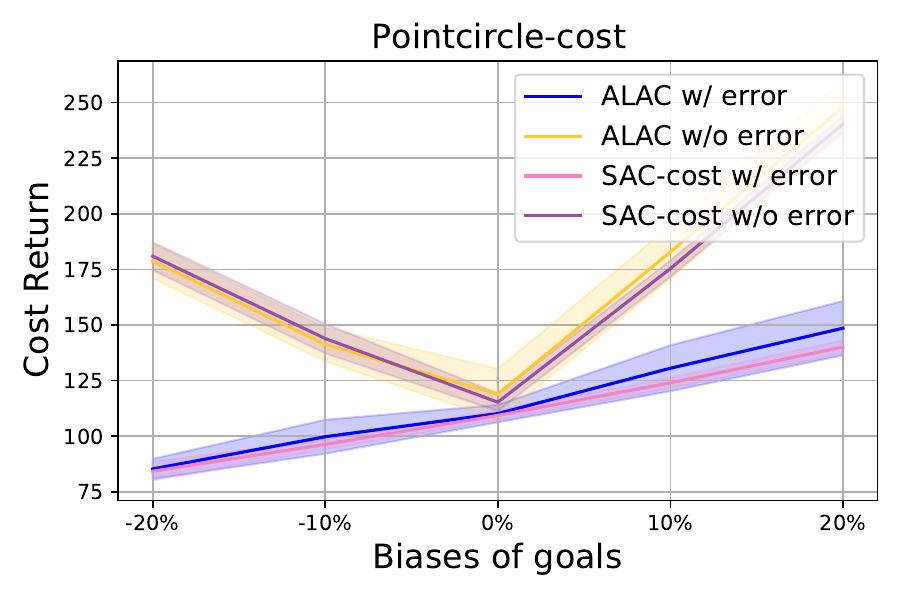} 
        } 
        \subfigure{ \label{fig:a} 
        \includegraphics[width=0.31\linewidth]{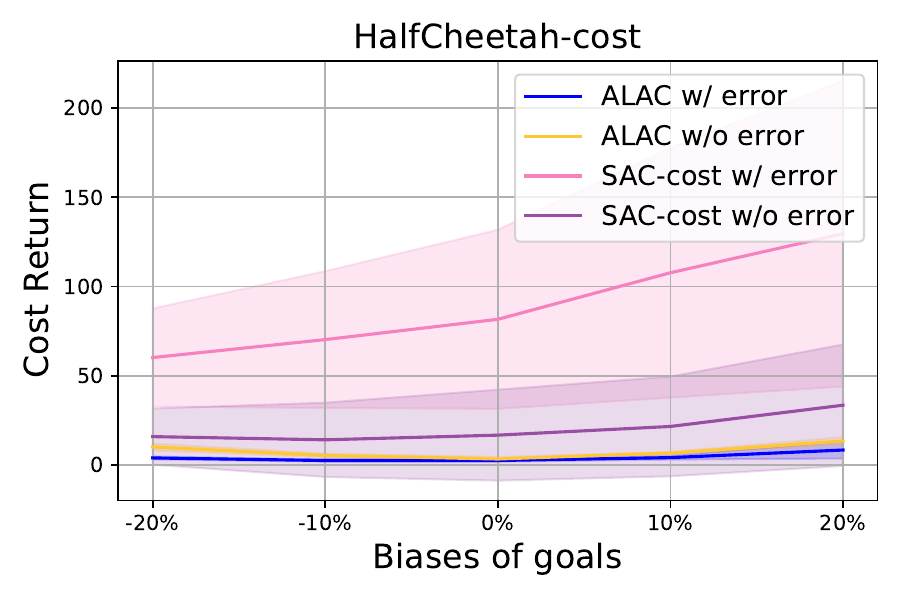}
        } 
        \subfigure{ \label{fig:a} 
        \includegraphics[width=0.31\linewidth]{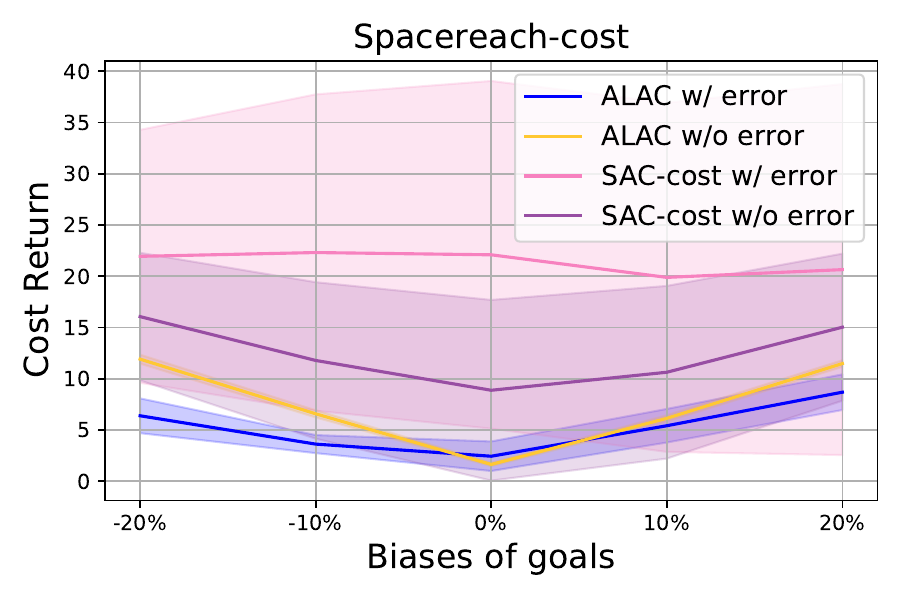}
        } 
 	\caption{\normalsize Evaluation of ALAC and SAC-cost methods in the presence of different biases of goals. (The X-axis indicates the magnitude of the applied shifting. We evaluate the trained policies for 20 trials in each setting.)} 
 	\label{figure:app_robust}
 	\vspace{-10pt}
\end{figure*}




\begin{figure*}[h]
 	\centering
 	\vspace{-0pt}
        \subfigure { \label{fig:a} 
        \includegraphics[width=0.22\linewidth]{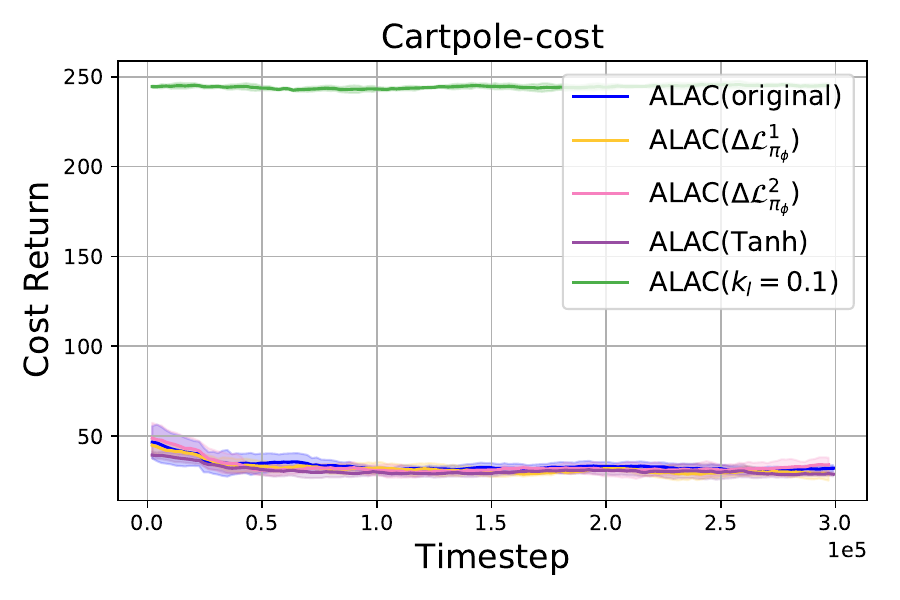} 
        } 
        \subfigure{ \label{fig:a} 
        \includegraphics[width=0.22\linewidth]{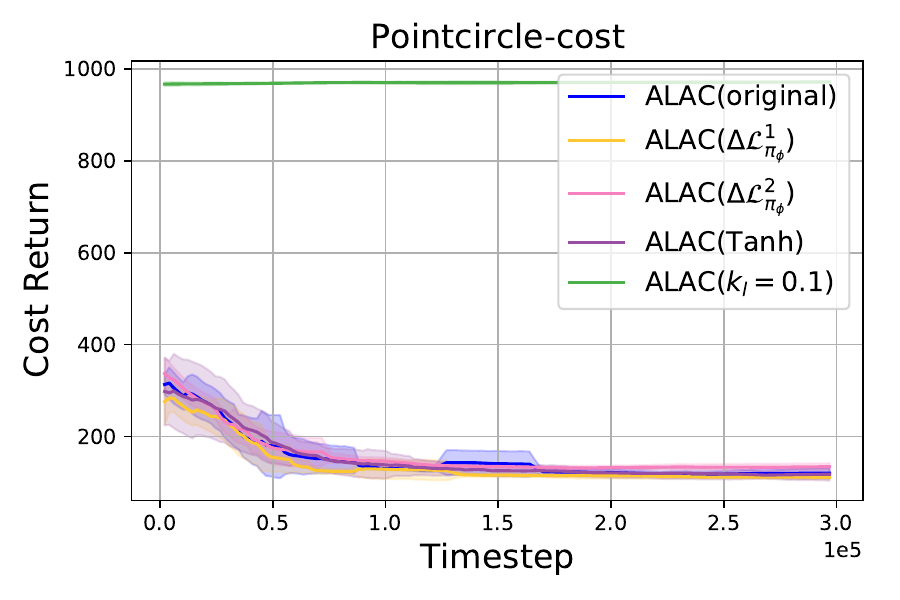}
        } 
        \subfigure{ \label{fig:a} 
        \includegraphics[width=0.22\linewidth]{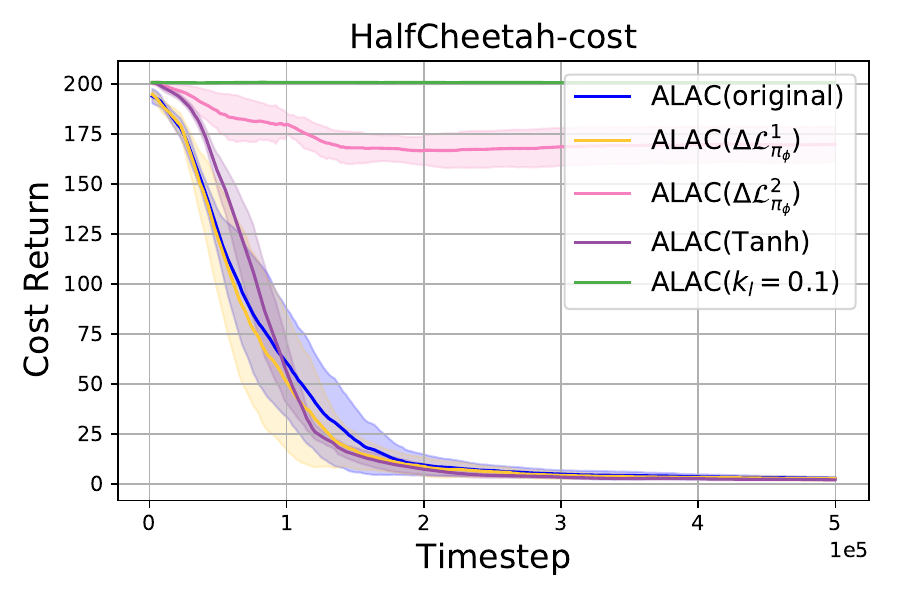}
        } 
        \subfigure { \label{fig:a} 
        \includegraphics[width=0.22\linewidth]{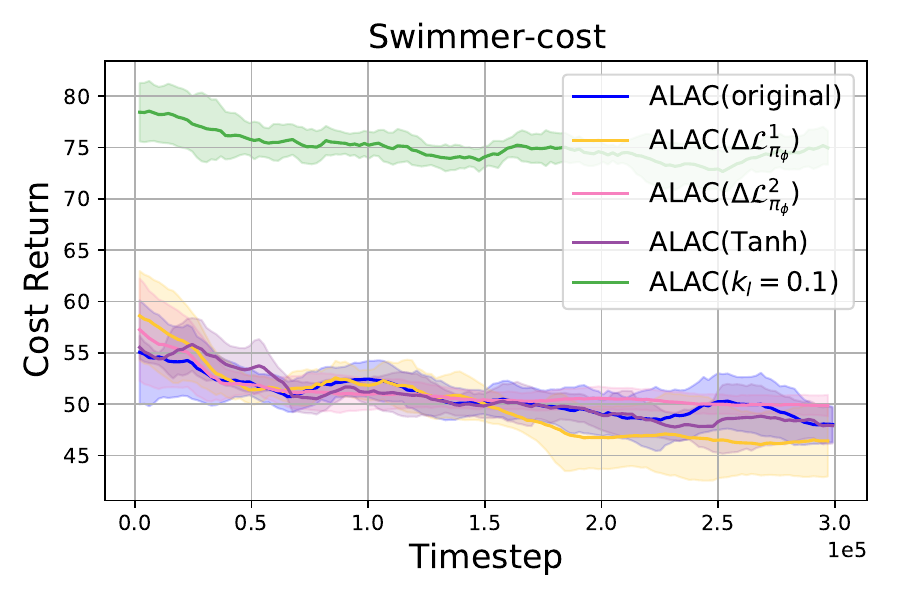} 
        } 
        
        \subfigure { \label{fig:a} 
        \includegraphics[width=0.22\linewidth]{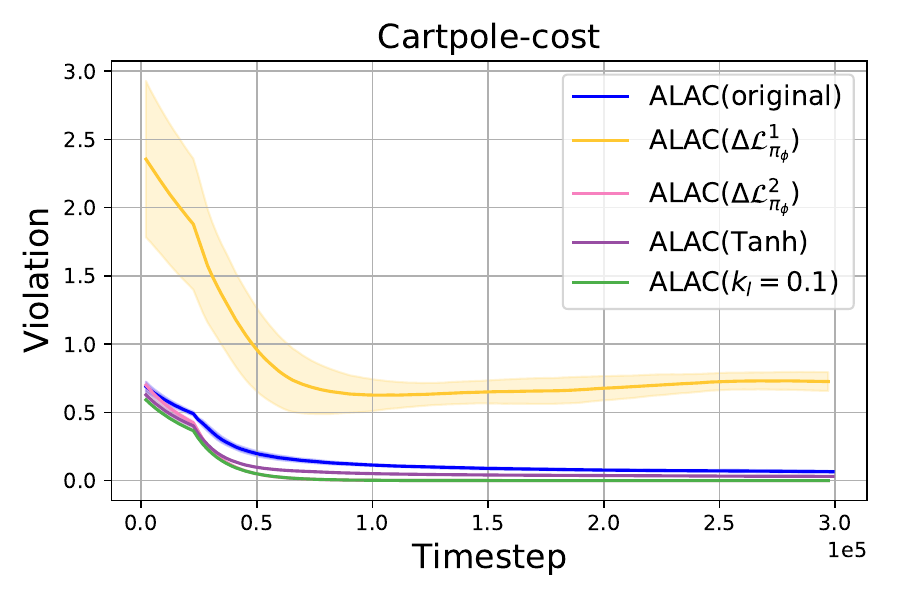} 
        } 
        \subfigure { \label{fig:a} 
        \includegraphics[width=0.22\linewidth]{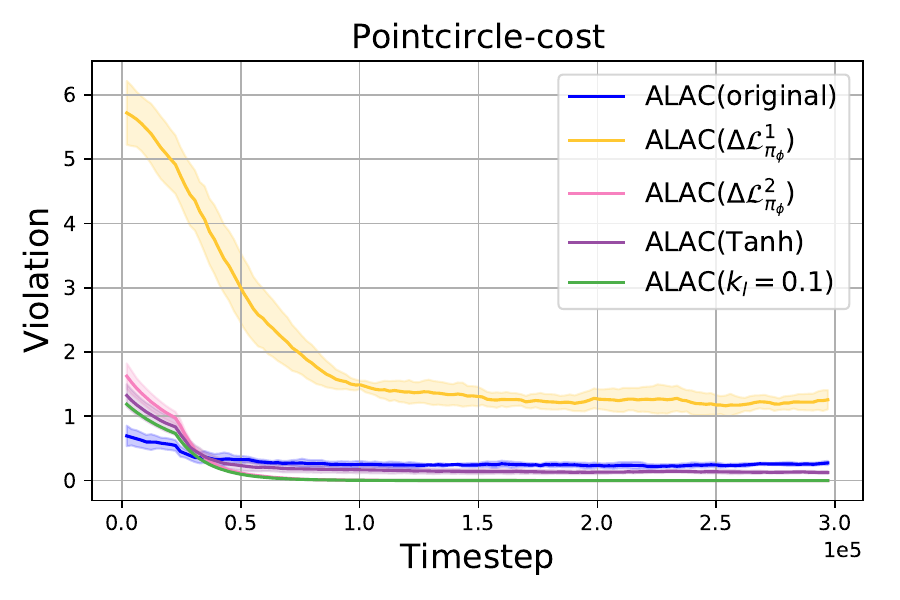}
        } 
        \subfigure { \label{fig:a} 
        \includegraphics[width=0.22\linewidth]{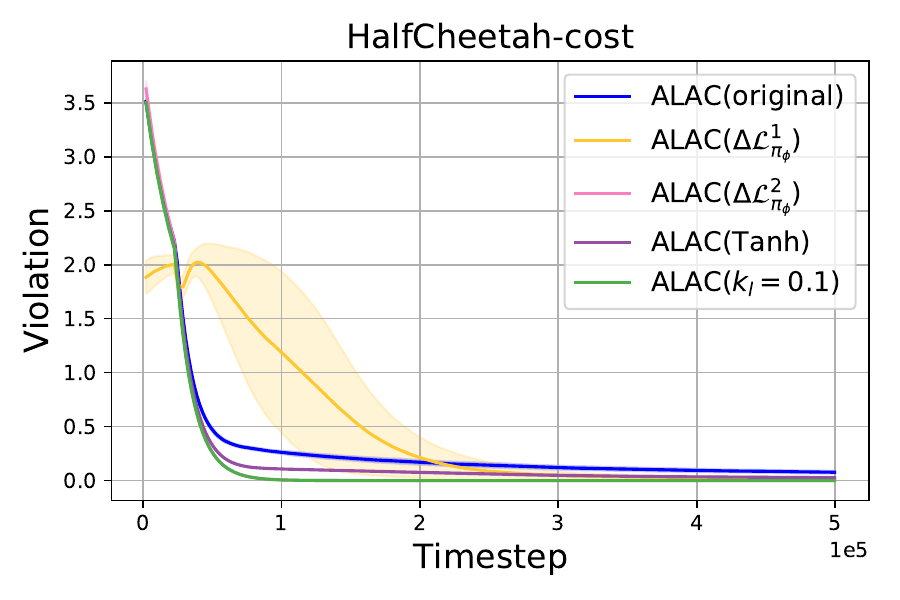}
        }
        \subfigure { \label{fig:a} 
        \includegraphics[width=0.22\linewidth]{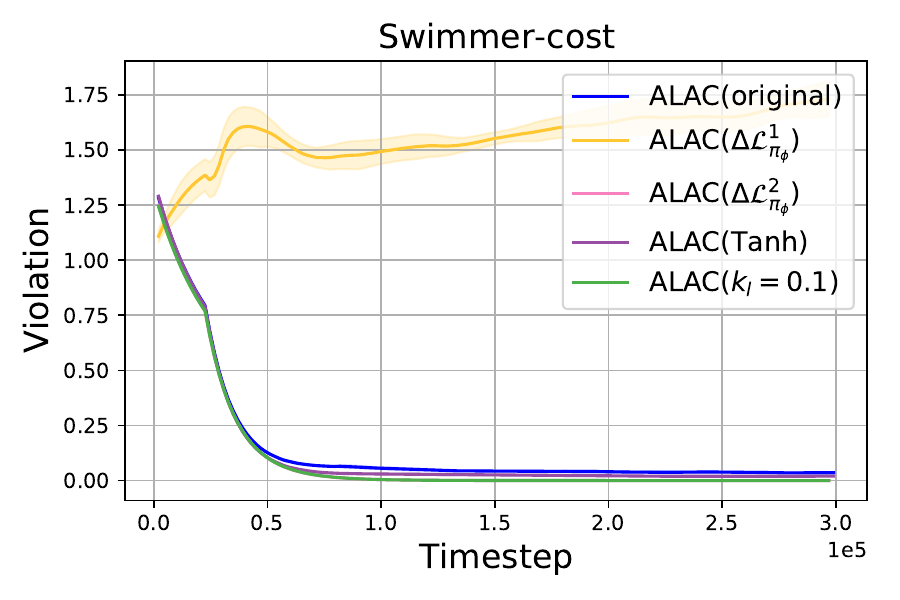} 
        }

        \subfigure{ \label{fig:a} 
        \includegraphics[width=0.22\linewidth]{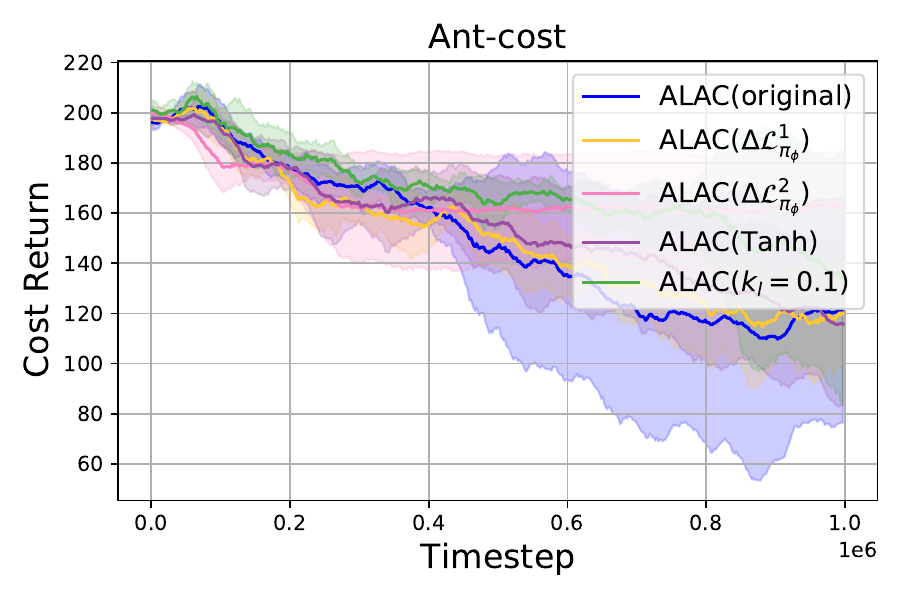}
        } 
        \subfigure{ \label{fig:a} 
        \includegraphics[width=0.22\linewidth]{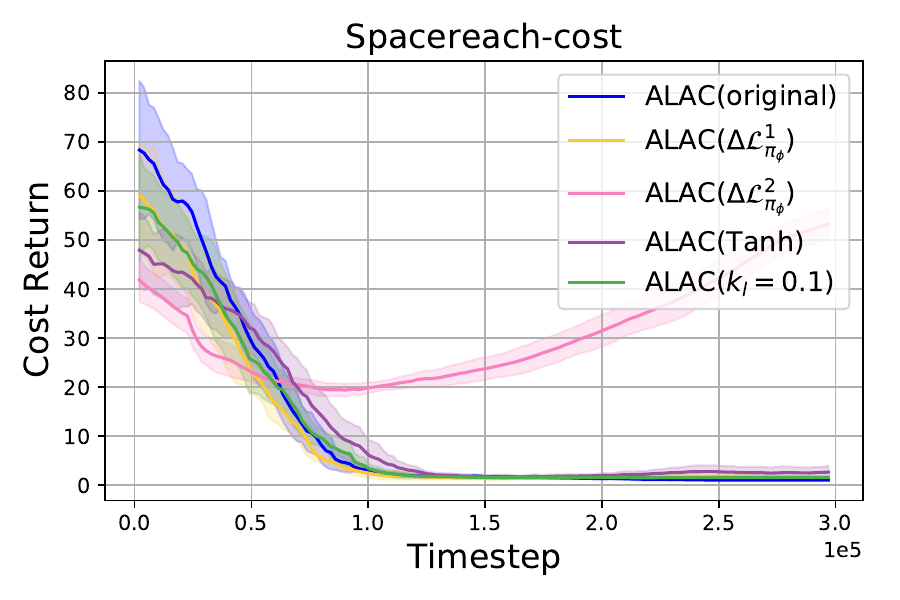}
        } 
        \subfigure{ \label{fig:a} 
        \includegraphics[width=0.22\linewidth]{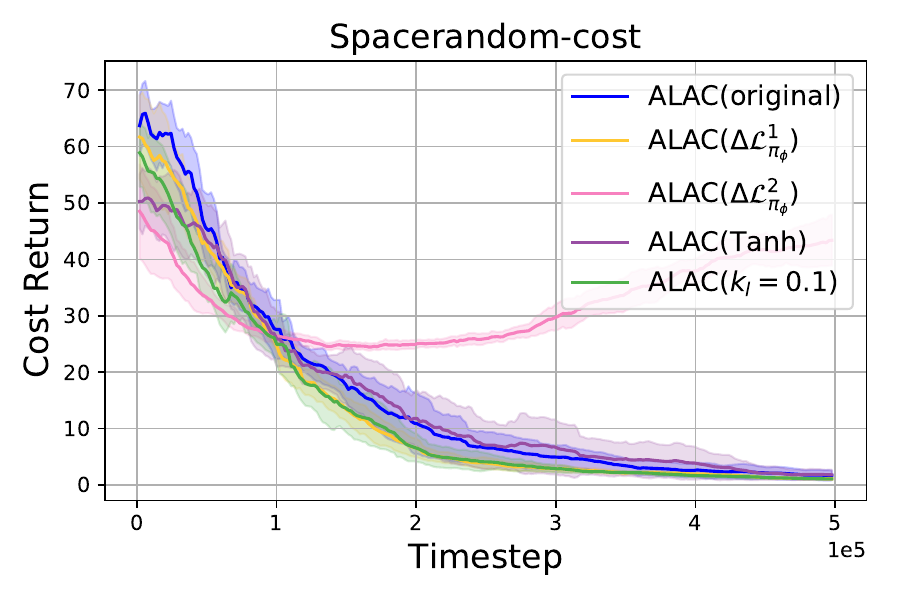}
        } 
        \subfigure { \label{fig:a} 
        \includegraphics[width=0.22\linewidth]{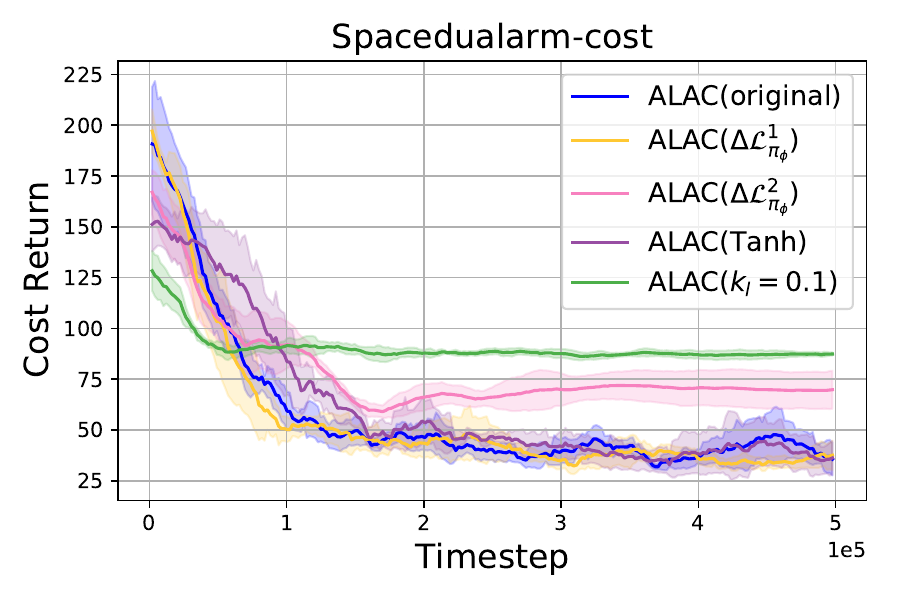}
        }

        \subfigure { \label{fig:a} 
        \includegraphics[width=0.22\linewidth]{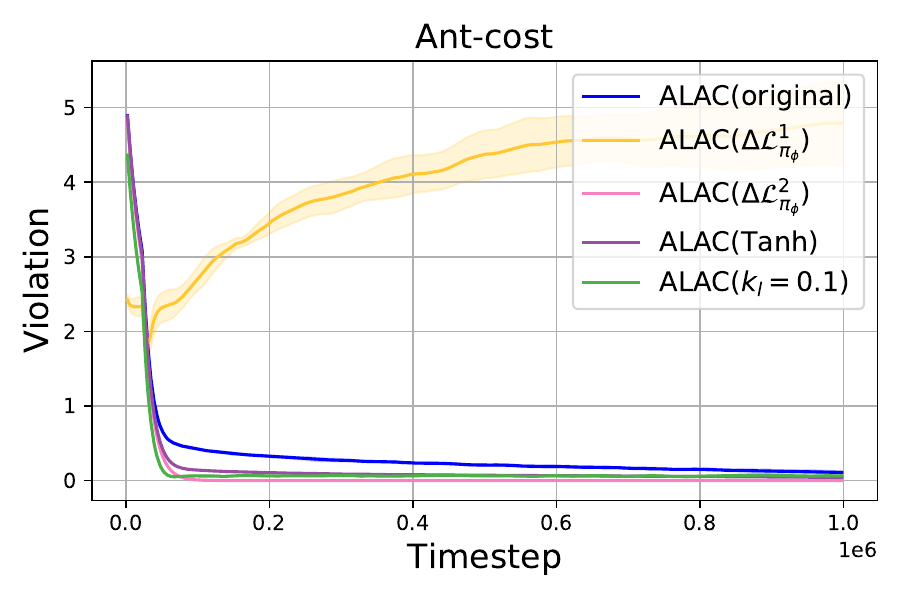}
        }
        \subfigure { \label{fig:a} 
        \includegraphics[width=0.22\linewidth]{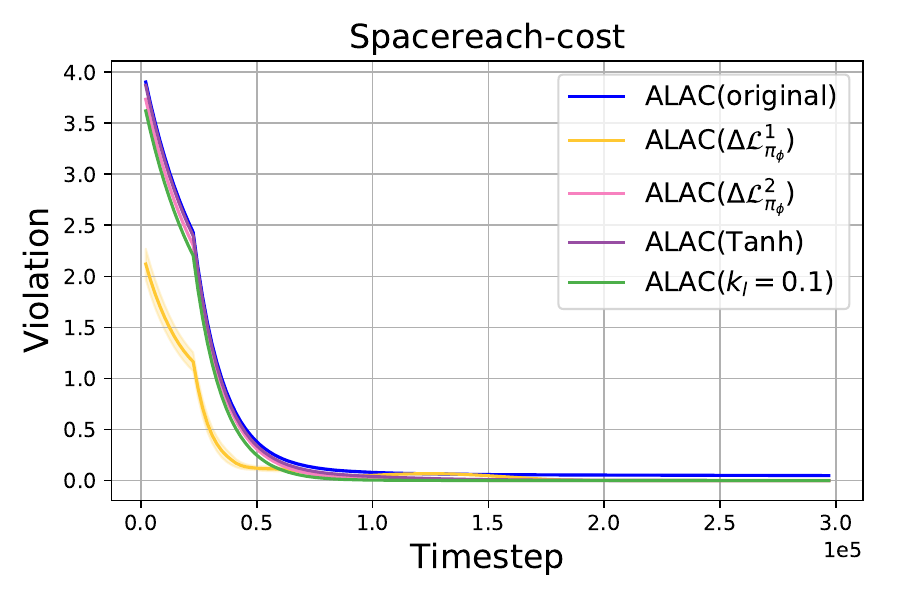}
        }         
        \subfigure { \label{fig:a} 
        \includegraphics[width=0.22\linewidth]{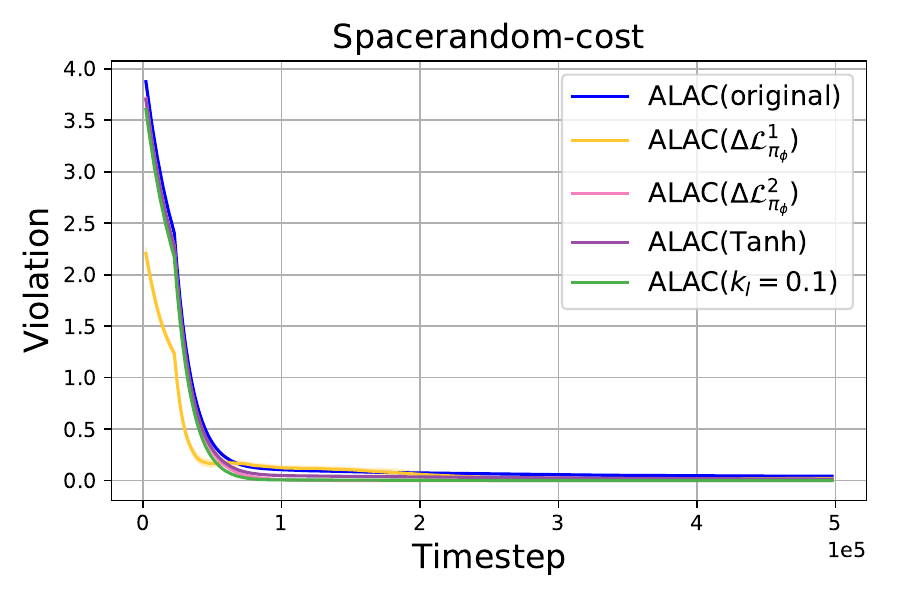}
        }
        \subfigure { \label{fig:a} 
        \includegraphics[width=0.22\linewidth]{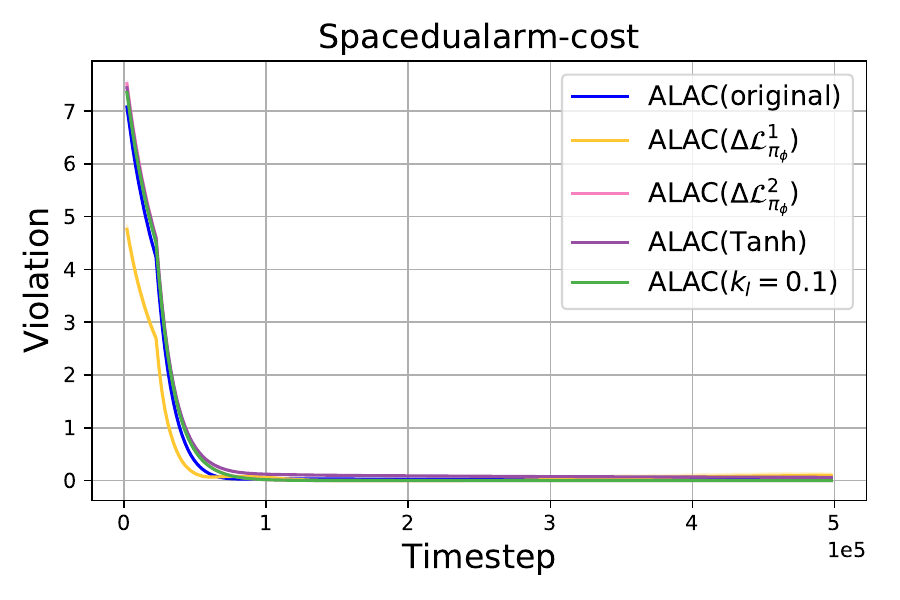}
        }
        
 	\caption{\normalsize Ablation studies of our method. ALAC(original) shows comparable or the best performance compared with other certifications on each task.} 
 	\label{figure:ablation_app}
 	\vspace{-10pt}
\end{figure*}

\begin{figure*}[h]
 	\centering
 	\vspace{-0pt}
        \subfigure { \label{fig:a} 
        \includegraphics[width=0.22\linewidth]{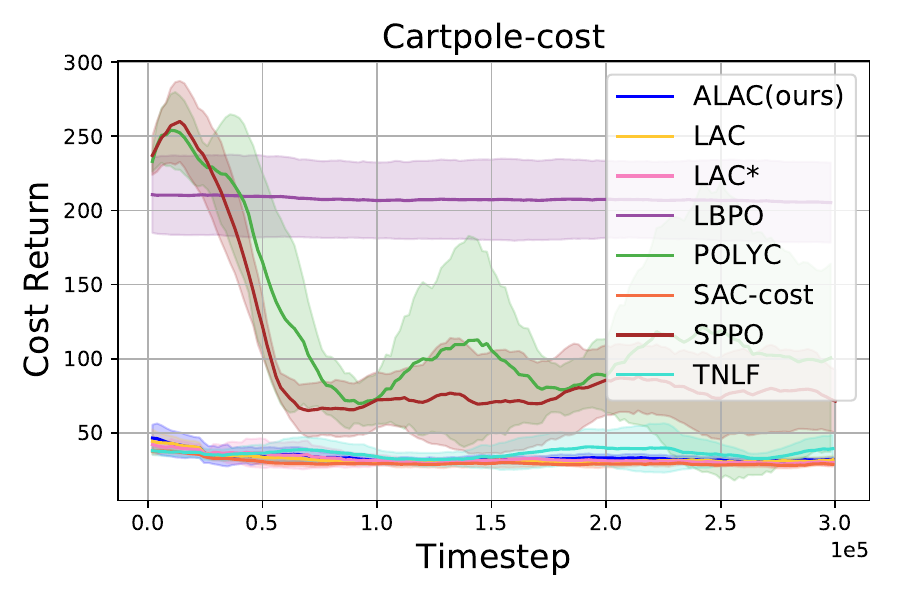} 
        } 
        \subfigure{ \label{fig:a} 
        \includegraphics[width=0.22\linewidth]{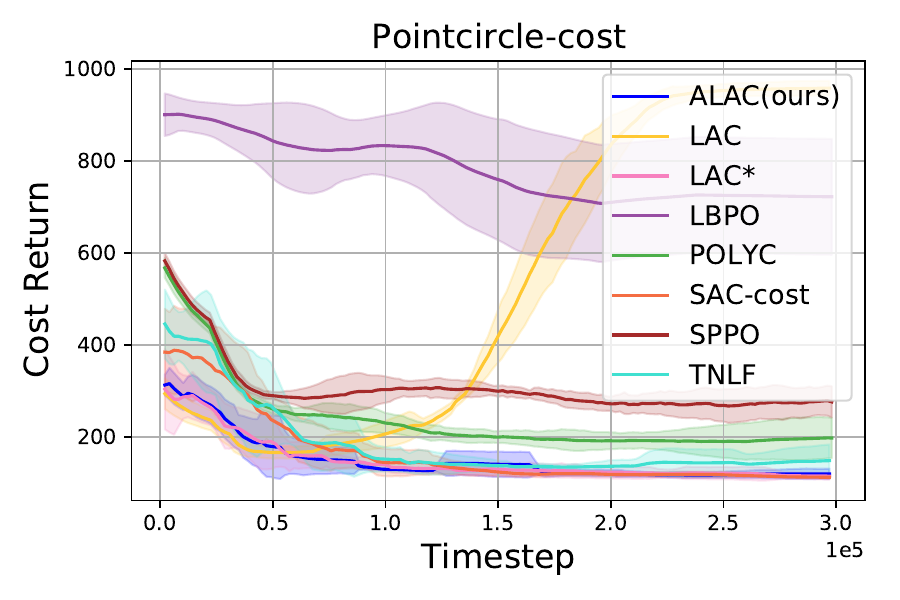}
        } 
        \subfigure{ \label{fig:a} 
        \includegraphics[width=0.22\linewidth]{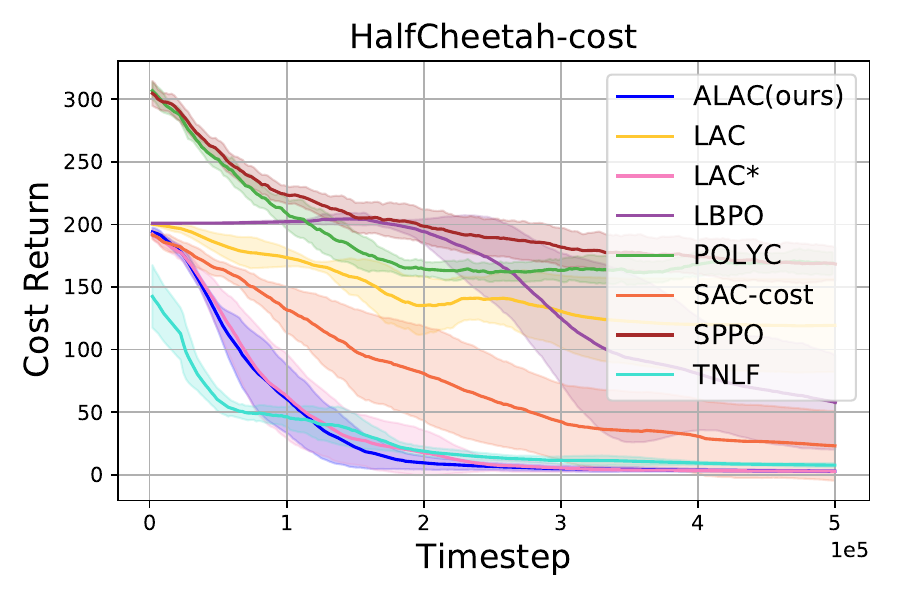}
        } 
        \subfigure { \label{fig:a} 
        \includegraphics[width=0.22\linewidth]{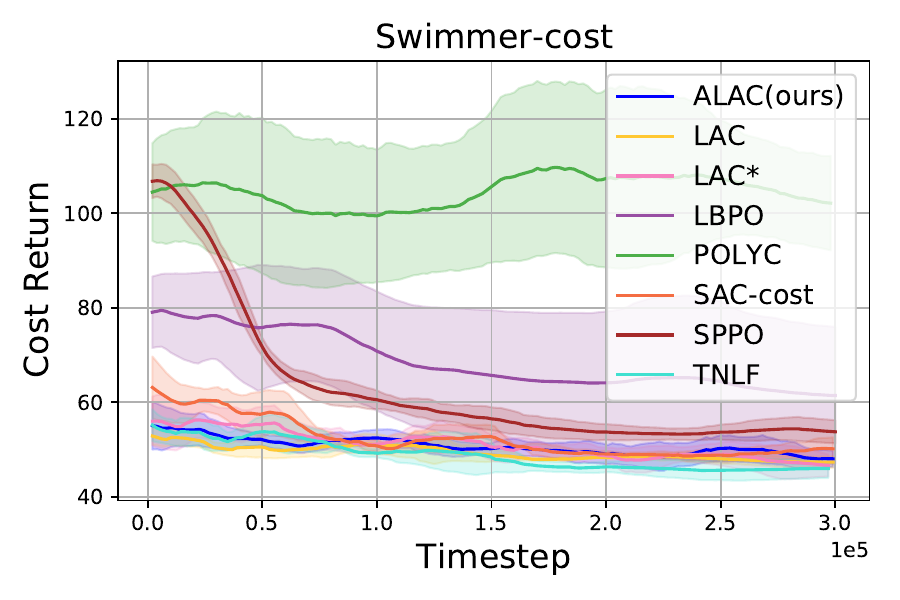} 
        } 
                
        \subfigure { \label{fig:a} 
        \includegraphics[width=0.22\linewidth]{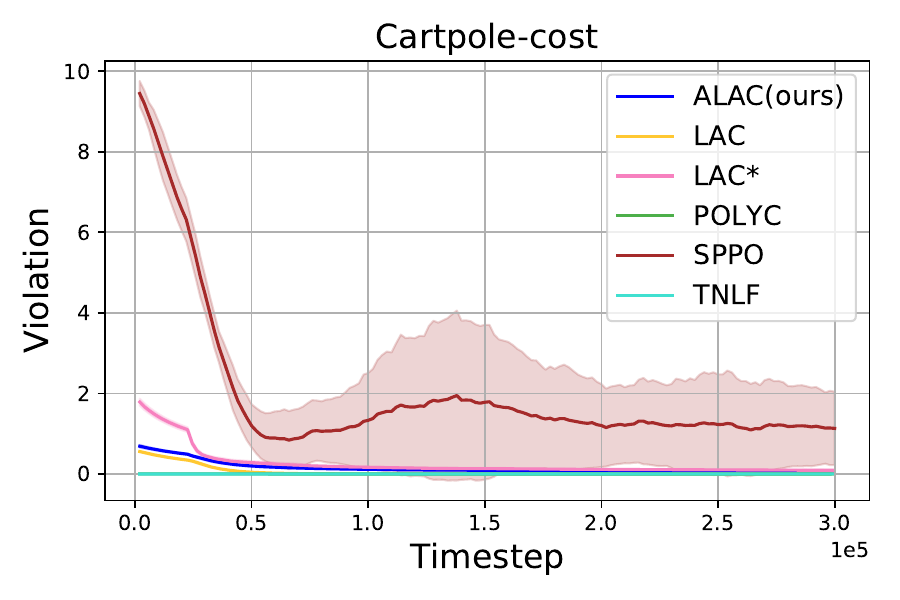} 
        } 
        \subfigure{ \label{fig:a} 
        \includegraphics[width=0.22\linewidth]{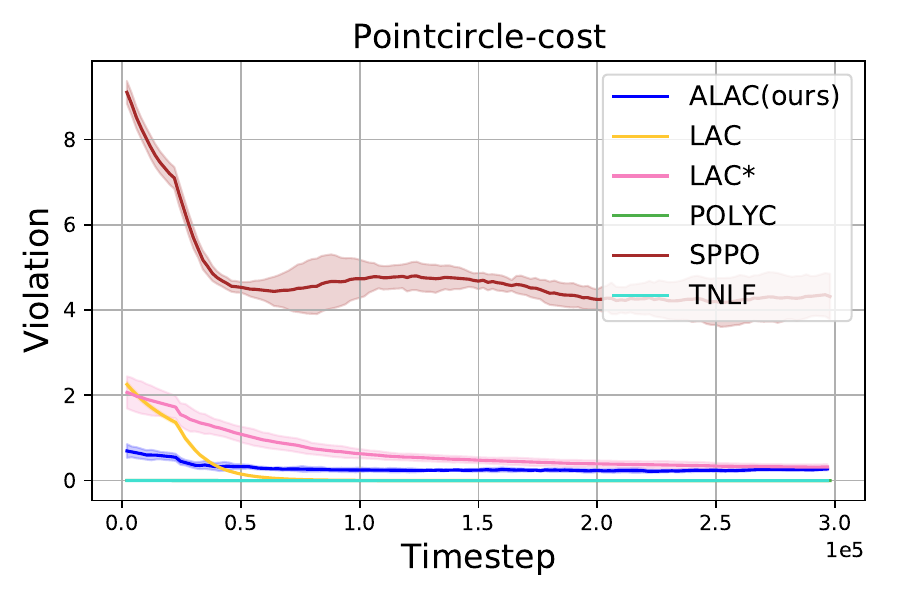}
        } 
        \subfigure{ \label{fig:a} 
        \includegraphics[width=0.22\linewidth]{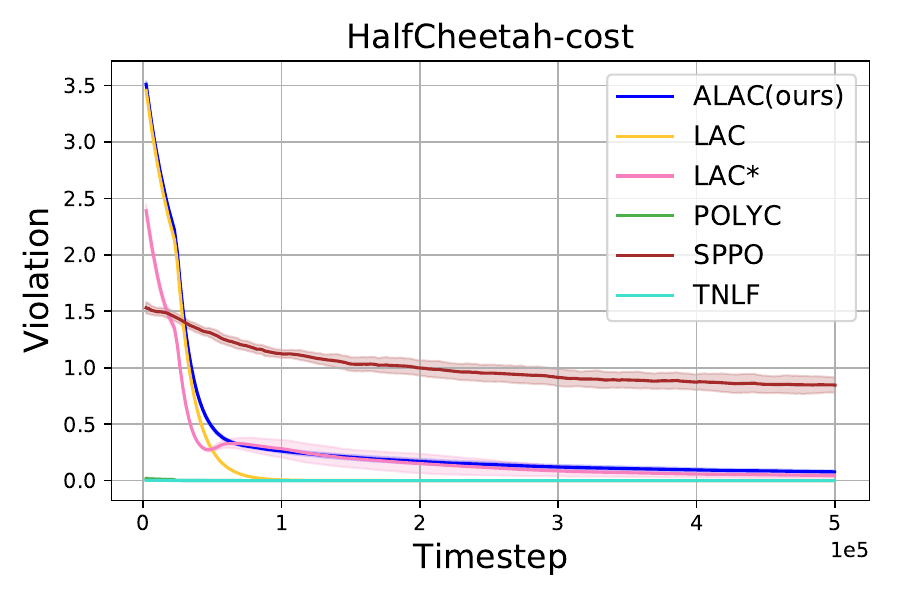}
        } 
         \subfigure { \label{fig:a} 
        \includegraphics[width=0.22\linewidth]{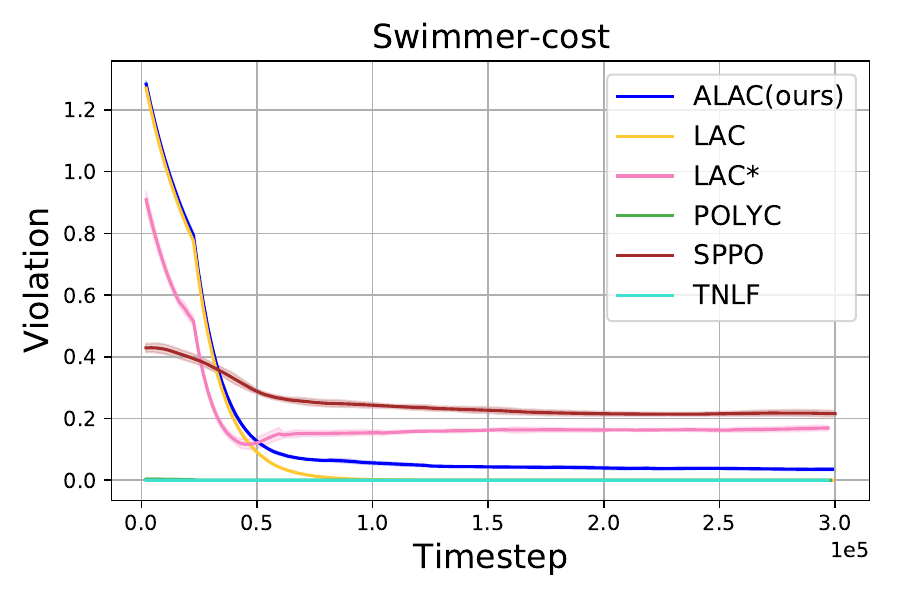} 
        }

        \subfigure{ \label{fig:a} 
        \includegraphics[width=0.22\linewidth]{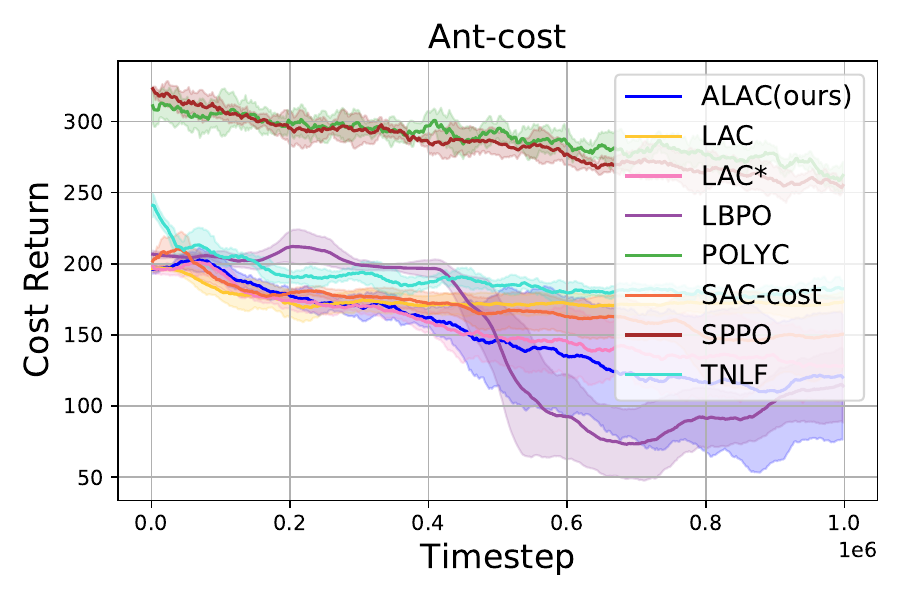}
        } 
        \subfigure{ \label{fig:a} 
        \includegraphics[width=0.22\linewidth]{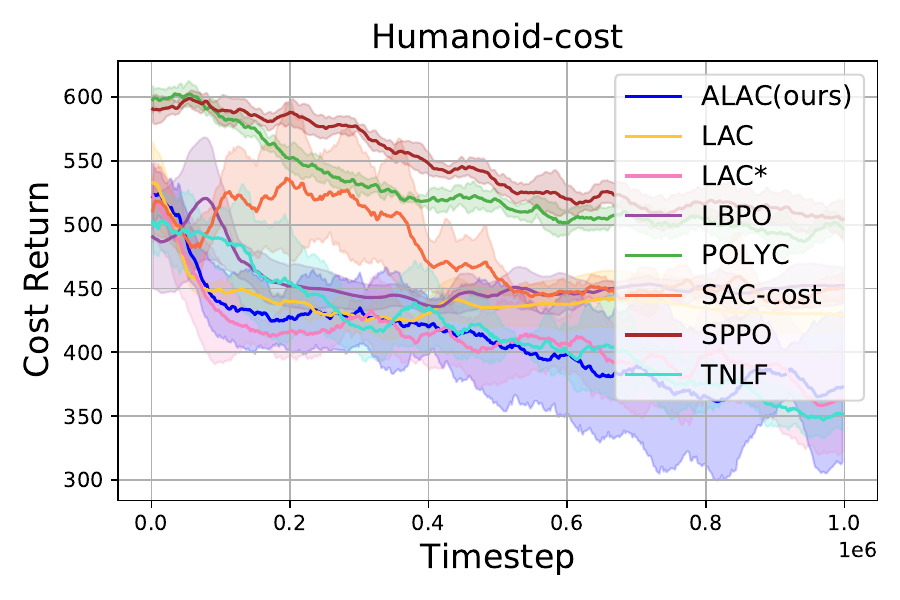}
        } 
        \subfigure { \label{fig:a} 
        \includegraphics[width=0.22\linewidth]{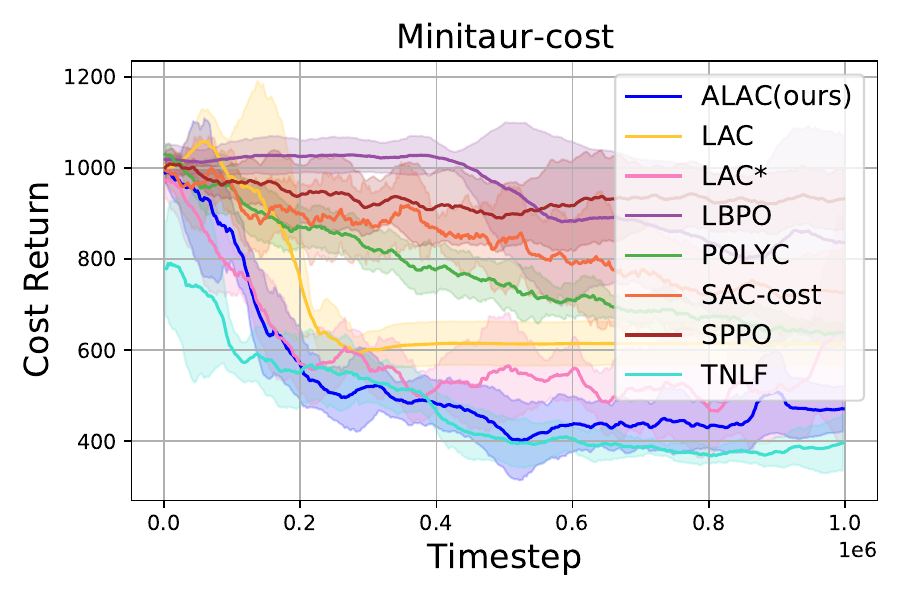} 
        } 
         \subfigure{ \label{fig:a} 
        \includegraphics[width=0.22\linewidth]{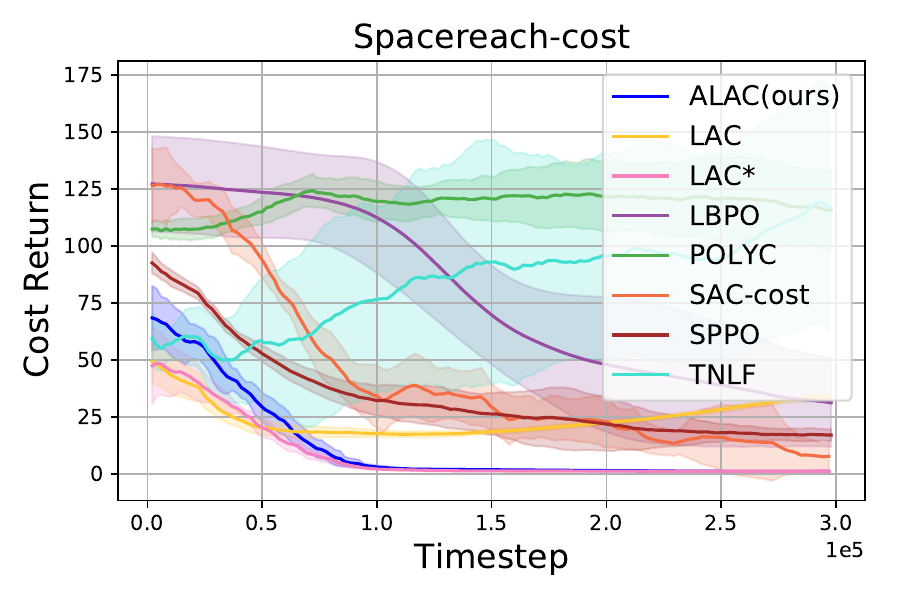}
        }

        \subfigure{ \label{fig:a} 
        \includegraphics[width=0.22\linewidth]{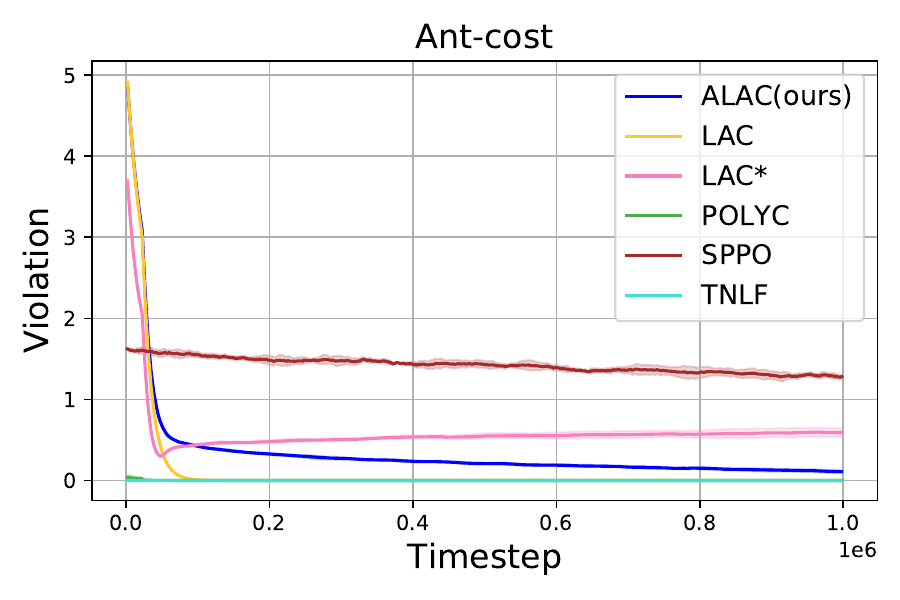}
        } 
        \subfigure{ \label{fig:a} 
        \includegraphics[width=0.22\linewidth]{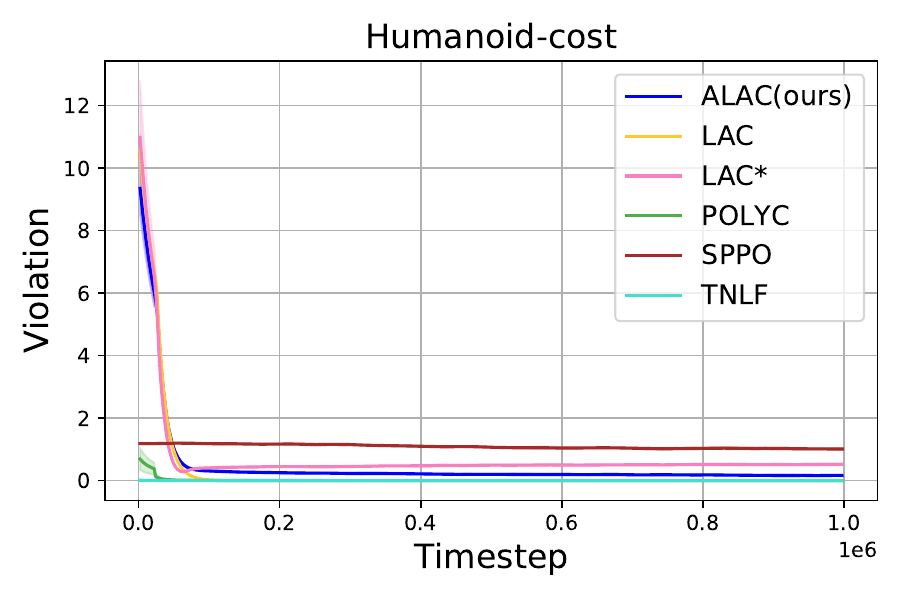}
        } 
        \subfigure { \label{fig:a} 
        \includegraphics[width=0.22\linewidth]{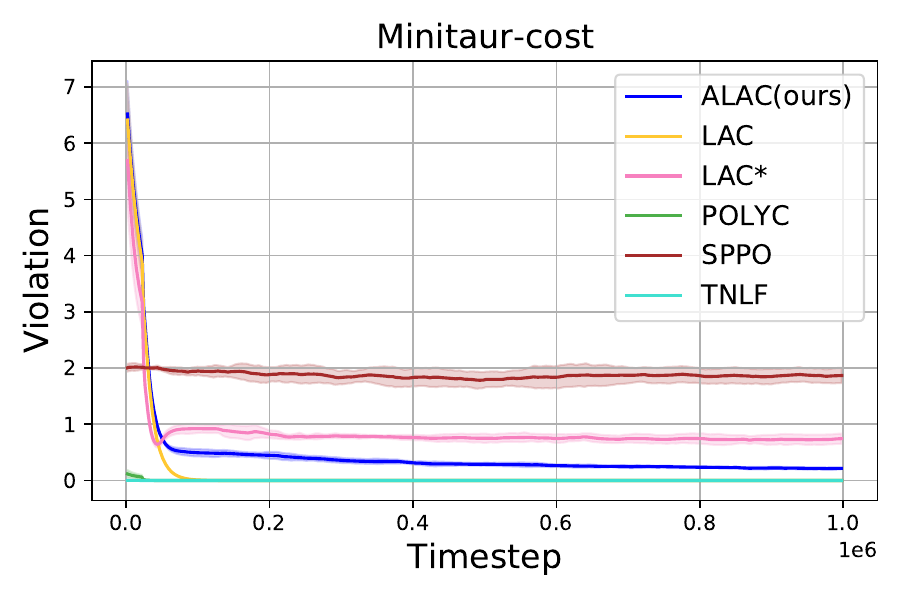} 
        } 
        \subfigure{ \label{fig:a} 
        \includegraphics[width=0.22\linewidth]{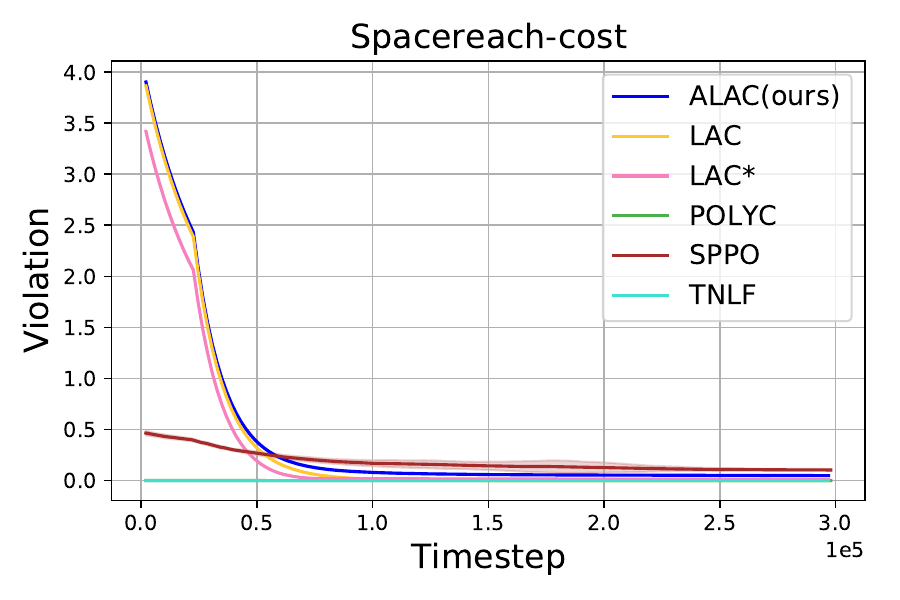}
        }

        \subfigure{ \label{fig:a} 
        \includegraphics[width=0.22\linewidth]{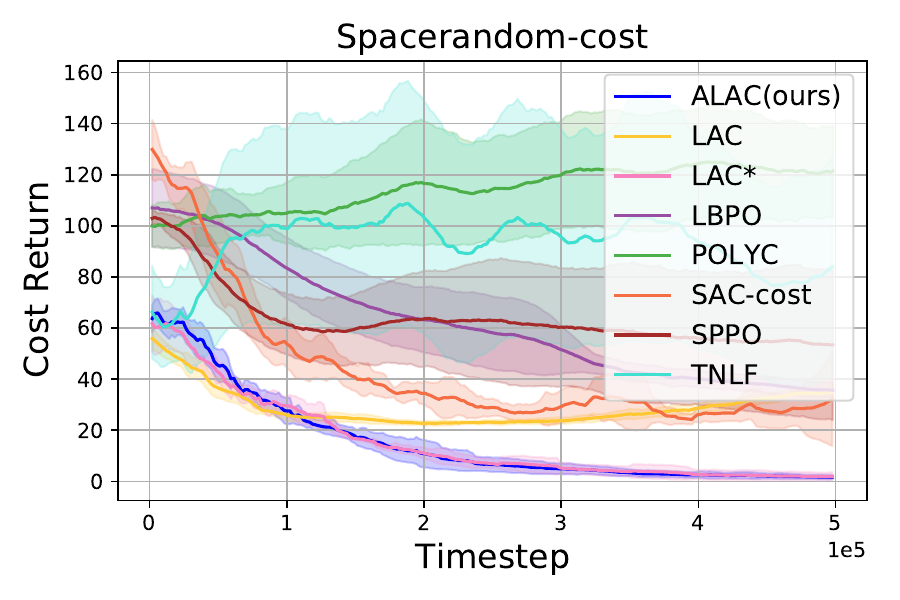}
        } 
        \subfigure{ \label{fig:a} 
        \includegraphics[width=0.22\linewidth]{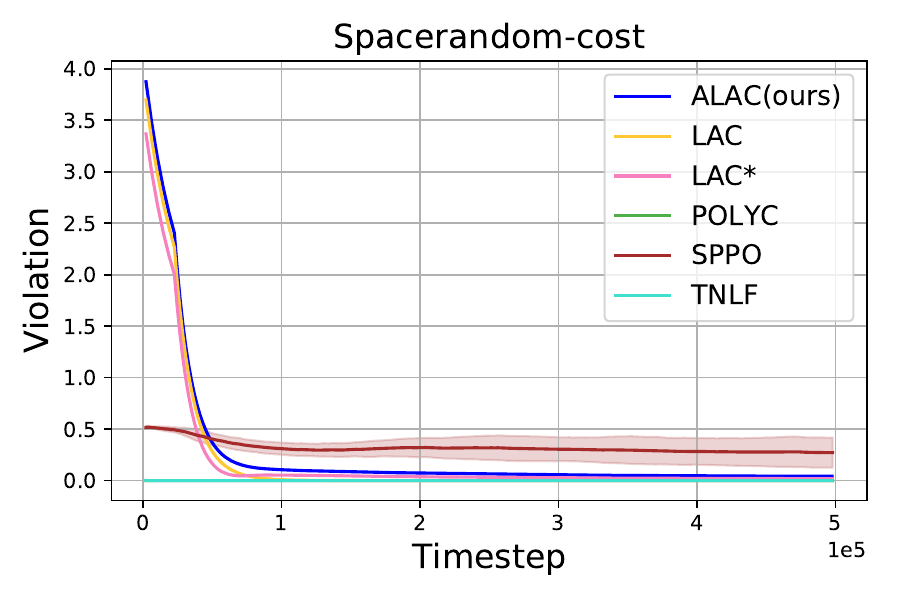}
        }
        \subfigure { \label{fig:a} 
        \includegraphics[width=0.22\linewidth]{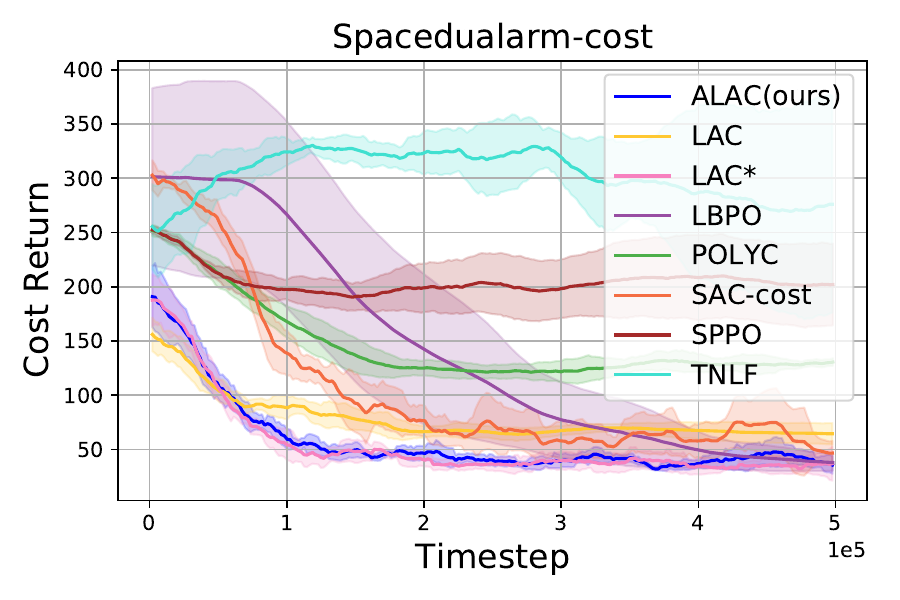} 
        } 
        \subfigure{ \label{fig:a} 
        \includegraphics[width=0.22\linewidth]{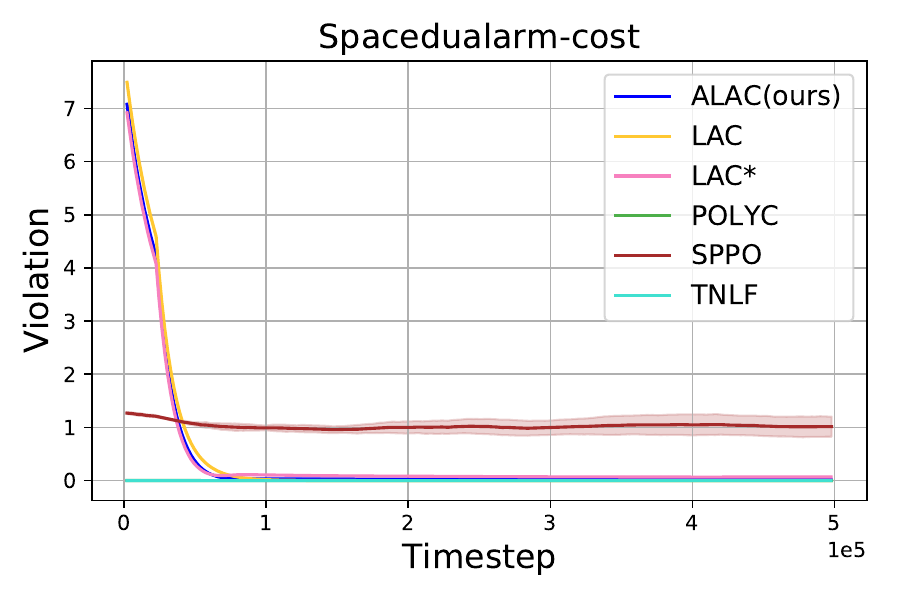}
        } 
        
        \caption{\normalsize Performance comparison on ten tasks. The ALAC method finds a good trade-off between minimizing the accumulated cost and constraint violations in contrast to their rivals. } 
 	\label{figure:app_compare}
                
 	\vspace{-10pt}
\end{figure*}

\end{document}